\documentclass{article}

\PassOptionsToPackage{authoryear}{natbib}
\usepackage[preprint]{style}

\usepackage[utf8]{inputenc}
\usepackage[T1]{fontenc}
\usepackage{amsmath,amsfonts,amssymb,amsthm}
\usepackage{graphicx}
\usepackage{booktabs}
\usepackage{microtype}
\usepackage{xcolor}
\usepackage{url}
\usepackage{algorithm}
\usepackage{algorithmic}
\usepackage{float}
\usepackage{placeins}
\usepackage{hyperref}

\newtheorem{theorem}{Theorem}[section]
\newtheorem{lemma}{Lemma}[section]
\newtheorem{assumption}{Assumption}[section]
\newtheorem{definition}{Definition}[section]
\newtheorem{remark}{Remark}[section]
\newtheorem{corollary}{Corollary}[theorem]
\newtheorem{proposition}{Proposition}
\newcommand{\D}{\mathcal{D}}

\newcommand{\ignore}[1]{}

\def\bold0{\mathbf{0}}

\def\al#1\eal{\begin{align}#1\end{align}}
\def\als#1\eals{\begin{align*}#1\end{align*}}

\renewcommand{\L}{\mathcal{L}}
\newcommand{\E}{\mathbb{E}}

\hypersetup{colorlinks,
linkcolor=blue,
citecolor=blue,
urlcolor=magenta,
linktocpage,
plainpages=false}

\title{SP-CACW: Convergence-Aware Client Weighting for Selfish Personalized
Learning}

\author{%
  Yaron Kiselman and Kfir Y. Levy\\
  Technion\\
  Haifa, Israel \\
}

\begin{document}

\maketitle

\begin{abstract} 
Collaborative learning is sustainable only when it benefits each participant. Standard federated learning optimizes a global average objective, which can under perform for clients whose data distributions differ substantially from the population. We study \emph{selfish personalization}: how a designated target client can use peer gradients to minimize its own risk while avoiding negative transfer. We propose \emph{SP-CACW}, a convergence-aware client-weighting framework that selects aggregation weights by minimizing an upper bound on the target client's convergence error. The resulting rule explicitly trades off peer bias against stochastic variance and can assign zero weight to harmful peers. We provide convergence guarantees under smoothness and bounded-variance assumptions and evaluate the method on MNIST, CIFAR-100, and LEAF Shakespeare, where it is competitive with or improves over strong personalized and clustering baselines.

\end{abstract}
\section{Introduction}
\label{Introduction}
In today’s data-driven world, information is inherently distributed across a large number of  clients, each collecting data locally and independently. This  setting poses substantial challenges for traditional centralized learning approaches, particularly regarding privacy preservation, communication overhead, and data ownership constraints. Federated Learning (FL) has  emerged as a promising paradigm that enables the training of a shared global model by coordinating learning across multiple decentralized data sources, while eliminating the need for direct access to clients’ raw data~~\cite{hu2024securityprivacy,majeed2022comparative,xu2021federatedhealth}.

While global modeling is effective, it struggles with non-IID data~~\cite{hanzely2020mixture,yu2020salvaging,zhao2018noniid}, necessitating Personalized Federated Learning (PFL)~~\cite{fallah2020personalized}. However, standard PFL approaches that blend global and local representations often limit personalization for clients with distinct distributions~~\cite{dinh2020pFedMe,collins2021FedRep,lai2024personalizedFLAdaptive}. Consequently, recent works favor clustering strategies where clients collaborate only with similar peers to minimize negative transfer~~\cite{ghosh2020ifca,werner2023provably}, placing FL paradigms on a spectrum from fully global to client-specific optimization.
This spectrum spans from traditional FL (a single global model) through Personalized FL (PFL) to Selfish Federated Learning (SFL), where a client leverages the network solely to maximize its own performance. While FL and PFL are extensively studied, SFL remains largely overlooked.  SFL is a fundamental problem that necessitates dedicated study: understanding how to optimally leverage a federated environment for a single client uncovers structural insights  that may extend  to broader multi-client challenges.

The practical necessity of SFL is driven by two real-world paradigms. In \textbf{Inter-Silo Networks} (e.g., hospital consortium), SFL facilitates a \emph{reciprocal knowledge exchange} where clients share gradients to maximize their distinct local objectives without submitting to a compromised global consensus. Additionally, for \textbf{Public Data Exploitation}, SFL acts as a dynamic valuation mechanism that safely filters massive, heterogeneous public datasets, extracting useful signals while preventing negative transfer and harmful bias.

While many existing PFL methods aim to learn a personalized model for each client by leveraging data from all clients, their theoretical analyses typically focus on convergence guarantees of the averaged clients ~\cite{huang2021personalized}. While these analyses are well-suited for the PFL setting, they do not directly translate to the SFL. Furthermore, many existing works rely on strong assumptions regarding data distributions or optimization functions~~\cite{werner2023provably,mansour2020three,ghosh2020ifca}, participation patterns, or model structure. Consequently, they provide limited insight into the optimization behavior of an individual target client. Conversely, our work explicitly focuses on the convergence properties of a single client’s objective.

Our main contributions are summarized as follows: 
\vspace{-4pt}

$\bullet$ We introduce \textbf{Convergence-Aware Client Weighting for Selfish Federated Learning (SP-CACW)}, a principled framework that determines optimal client aggregation weights by minimizing an upper bound on the target client's convergence rate, yielding efficient and stable training dynamics. 

$\bullet$ We derive   guarantees which substantiate the benefit of \textbf{SP-CACW}  for the target client. In the specific case of a cluster structured problems, we show that our method achieves a rate of \(O\left({\sigma}/{\sqrt{n_0 T}}\right)\), effectively leveraging the cluster size $n_0$ to reduce variance (Section~\ref{sec_improv_example}).

$\bullet$ We  demonstrate the effectiveness of SP-CACW on MNIST, CIFAR, and    LEAF Shakespeare datasets, showing consistent improvements over existing personalized and clustering-based baselines.
\paragraph{Related Work.}

Personalized Federated Learning (PFL) has been extensively studied as a solution to the performance degradation caused by non-IID data in global model aggregation. Seminal approaches, such as Ditto~~\cite{li2021ditto} and pFedMe~~\cite{dinh2020pFedMe}, introduce client-side regularization to stabilize local updates while maintaining proximity to a shared global anchor. Similarly, meta-learning frameworks like Per-FedAvg~~\cite{fallah2020personalized} aim to learn a  adaptable initial model that can be quickly fine-tuned to local distributions. Other works facilitate selective information sharing by leveraging multi-task learning~~\cite{zhang2017survey}, knowledge distillation~~\cite{mansourian2025survey}, or discrepancy-aware collaborations; e.g.~approaches like FedDisco~~\cite{ye2023feddisco} utilize categorical data statistics to optimally weight peer contributions. While foundational, these methods often rely on implicit assumptions regarding client alignment or introduce complex auxiliary objectives that can hinder scalability and increase computational overhead.

To address data heterogeneity more explicitly, two dominant paradigms have emerged: \textbf{ (i) Clustering based methods:} like  IFCA and FeSEM~~\cite{ghosh2020ifca,werner2023provably}, partition clients into groups based on gradient or model affinity, training a specialized model for each cluster. While effective when clear cluster structures exist, these approaches are sensitive to ambiguous decision boundaries and intermittent client participation, often leading to unstable cluster assignments. Alternatively, \textbf{ (ii) similarity-based approaches:}  like pFedSim and FedAMP~~\cite{tan2023pfedsim,huang2021personalized} circumvent rigid grouping by learning pairwise relationships to weight peer contributions. However, these methods typically rely on heuristic similarity metrics (e.g., Euclidean distance or cosine similarity) which do not necessarily correlate with useful descent direction for a specific client.

In contrast to these paradigms, our work adopts a Selfish Federated Learning (SFL) perspective. We depart from the reliance on heuristic similarity measures or latent cluster structures. Instead, we introduce a \textit{convergence-aware} weighting mechanism. By explicitly deriving aggregation weights that minimize the upper bound on the target client's convergence rate, we ensure that peer contributions are valued based on their \textit{optimization utility}—effectively, how much they accelerate the target's training—rather than their static data similarity. This provides a   principled alternative to heuristic personalization strategies.

\section{Problem Setup}
\label{sec:setup}

\textbf{Selfish Personalization (SP).} Consider a federated learning setting with $M$ clients indexed  $i \in \{0, 1, \dots, M-1\}$. Each client $i$ draws samples from a \emph{local data distribution $\mathcal{D}^{(i)}$},the distributions
 $\{\mathcal{D}^{(i)}\}_{i}$ may be distinct and are unknown.
Our aim is to find a model $w \in \mathbb{R}^d$ specifically for a designated \textbf{target client}, denoted as client $0$.
Formally, the expected loss (population risk) of the target client is defined as follows:
\[
\min_{w \in \mathbb{R}^d} f^{(0)}(w), \quad \text{where} \quad f^{(0)}(w) := \mathbb{E}_{\xi \sim \mathcal{D}^{(0)}}[\ell(w; \xi)].
\]
Here, $\ell(w; \xi)$ represents the loss function for a model parameter $w$ and a data sample $\xi$.
We focus on non-convex optimization, where global loss minimization is generally intractable, so our goal is finding an approximate stationary point, i.e., a point $w$ for which the expected gradient norm $\E\|\nabla f^{(0)}(w)\|^2$ tends to zero.

To focus on the core challenge we  assume that the target client may communicate with all other clients.  At every round $t$, client $0$ may send its current weights $w_t$ to all  peers, and in return receive a gradient estimate $g_t^{(i)} := \nabla \ell(w_t;\xi_t^{(i)})$, where $\xi_t^{(i)} \sim \mathcal{D}^{(i)}$. Client $0$ then uses the set of received gradients $\{g_t^{(i)}\}$ to update the local model $w$ and query new estimates. We assume that the samples $\{\xi^{(i)}_t\}_t$ are independent across time and across clients.

\textbf{Communication Efficiency Note.} To simplify the exposition our  formulation focuses on the Cross-Silo FL setting with full communication. Nevertheless, Our SP-CACW is broader and operates efficiently without full participation. Concretely, as we detail in  Appendix \ref{app_partial}, our framework generalizes to accommodate \textit{partial participation} via unbiased peer sampling. Moreover, as depicted in ~\ref{sec:OptimalWeights} our approach   naturally induces \textit{learned sparsity}, enabling to prune edges to unhelpful peers.

\begin{assumption}[Smoothness]
\label{assm_smoothness}
The objective functions of client 0 is $L$-smooth. That is, there exists $L>0$ such that,
\(
\|\nabla f^{(0)}(w) - \nabla f^{(0)}(u)\| \le L \|w - u\|,~~\forall w, u \in \mathbb{R}^d
\)
\end{assumption}
\begin{assumption}[Bounded Variance]
\label{assm_bound_var}
For  each client $i$, there exists  $\sigma^{(i)}>0$ such that for any $w \in \mathbb{R}^d$:
\(
\mathbb{E}_{\xi \sim \mathcal{D}^{(i)}}\left[ \|\nabla \ell(w; \xi) - \nabla f^{(i)}(w)\|^2 \right] \le (\sigma^{(i)})^2.
\)
\end{assumption}

\section{Our Basic Approach}
\label{sec:phlo}
\paragraph{Design Philosophy: Directly Targeting Convergence  Rather than Similarity.}
Classic personalized FL approaches typically choosing from how to learn by estimating \textit{data similarity}—clustering clients or weighting them based on parameter proximity~~\cite{ghosh2020ifca,huang2021personalized}. While intuitive, this approach relies on the heuristic assumption that "similar data implies useful gradients." This is not always the case. For example, a peer with a slightly shifted distribution but extremely low gradient noise might be more valuable for learning than a highly similar peer with noisy gradients. Recognizing that similarity alone is an insufficient proxy for peer quality.

To mitigate this, \emph{we design client  aggregation rules that directly  minimize a theoretical upper bound on the target client's convergence rate.} By optimizing this bound directly, we allow the algorithm to optimally trade-off  bias (dissimilarity) with variance (noise) of different clients, which facilitates  practical and sample efficient training.
\paragraph{Formulation: Weighted Aggregation}
\label{subsec:formulation}
Our approach towards utilizing the information from other (possibly biased) clients is to employ a \emph{weighted} gradient update rule.
Concretely, at each round $t$, client $0$ sends the current model $w_t$ to all clients and receives $\{g_t^{(i)}\}_{i\in[M]}$: gradient estimate based on their local data. Instead of uniformly aggregating these gradients like in standard FL, the target client performs a \textbf{weighted-SGD} aggregation 
using $\alpha_t\in \Delta_M$,
\begin{align}
\label{eq:update_rule}
w_{t+1} = w_{t} - \eta g_{t},
\text{ } \text{ } \text{ } \text{ } 
g_{t} = \sum_{i=0}^{M-1} \alpha_{t}^{(i)}g_t^{(i)}  
\end{align}
where \(g_t^{(i)}=\nabla \ell(w_{t}; \xi_{t}^{(i)})\);~ $\xi_t^{(i)}\sim\D^{(i)}$, and \(\eta\) is the stepsize.
Clearly, the choice of $\alpha_t^{(i)}$ determines the affects of $g_t^{(i)}$ on the aggregated gradient $g_t$ which in turn affects convergence. As we show in the next section, the convergence rates of the weighted SGD above are related to a tradeoff between the  bias and variance of $g_t^{(i)}$:\\
$\bullet$  \textbf{Gradient Bias (\(b_t^{(i)}\))} of client $i$ at time $t$ is defined as:
    \(b_t^{(i)} =  \nabla f^{(i)}(w_t) - \nabla f^{(0)}(w_t) \). \\
$\bullet$  \textbf{Gradient Variance (\(\sigma^{(i)^2}\)):} is the stochastic noise inherent in client \(i\)'s local update (see Assumption~\ref{assm_bound_var}).

Overall, the bias and variance of aggregated gradient $g_t$ depend on $\alpha_t$ and $\{ b_t^{(i)},\sigma^{(i)}\}_{i\in [M]}$, and affect the overall convergence of the algorithm. Curiously, the effects of bias and variance are uneven, and roughly speaking---a high bias is much more harmful than a high variance. Thus, our aim is to pick the weights $\alpha_t$ to optimize the tradeoff between bias with variance, thus directly facilitation better convergence.

\textbf{Convergence Induced Cost.}
For every round $t$ let us define a cost function over weights $\alpha\in\Delta_M$,
\al 
\label{eq:CostDef}
\L_t(\alpha):=  \|\sum_{i\in[M]}\alpha^{(i)} b_t^{(i)}\|^2 + \eta L\|\sum_{i\in[M]}\alpha^{(i)} g_t^{(i)}\|^2~.
\eal 
Note that the bias terms directly affect $\L_t(\cdot)$ but the variance terms appears only implicitly via the second term. 
As we show later(\ref{lem:second_moment}), one can decompose 
$\|\sum_{i\in[M]}\alpha^{(i)} g_t^{(i)}\|^2 \leq 2\|\sum_{i\in[M]}\alpha^{(i)} b_t^{(i)}\|^2+  \sum_{i\in[M]}(\alpha^{(i)})^2 (\sigma^{(i)})^2+2\|\nabla f^{(0)}(w_t)\|^2$, and therefore, 
\(
{\mathcal{L}_t(\alpha)} \leq   (1+2\eta L)\|\sum_{i\in[M]}\alpha^{(i)} b_t^{(i)}\|^2 + 2\eta L\sum_{i\in[M]}(\alpha^{(i)})^2 (\sigma^{(i)})^2~.
\)  
Since $\eta$  is typically small i.e. $\eta \propto 1/\sqrt{T}$ (where $T$ is the total number of SGD updates) then $1+ \eta L \approx 1$. This shows that the bias terms in $\L_t(\cdot)$ come with a factor of $\approx 1$ where
variance terms come with a factor of $\eta L \ll 1$; implying that  bias has a much stronger affect on $\L_t(\cdot)$.

The next theorem  shows that the cumulative cost directly relates to  the convergence of \textbf{weighted-SGD}.
\begin{theorem}
\label{Thm:convorgent_same_step_none}
Suppose assumptions~\ref{assm_bound_var},
\ref{assm_smoothness} hold.
Then applying the  weighted-SGD Algorithm (Eq.~\eqref{eq:update_rule}) for $T$ rounds, with a fixed weights $\alpha_t=\alpha \in \Delta_M$, and $\eta>0$, ensures,
\als 
\frac{1}{T}\sum_{t=1}^T\E \|\nabla f^{(0)}(w_t) \|^2 \leq 
\frac{2}{T}\left(\frac{\delta_1}{\eta} +\frac{1}{2}\sum_{t=1}^T \E [\L_t(\alpha)] \right)~,\text{where}
\eals 
here $\delta_1: = f^{(0)}(w_1)-f^{(0)}(w^*)$, and $w^*$ = is a global minimizer of $f^{(0)}$, and $\L_t(\cdot)$ is defined in Eq.~\eqref{eq:CostDef}.
\end{theorem}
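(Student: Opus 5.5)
The plan is the standard descent-lemma argument for SGD with a biased gradient oracle. The key point is to keep the second-moment term $\|g_t\|^2$ intact rather than expanding it, since it appears verbatim in $\L_t(\alpha)$.

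First apply $L$-smoothness of $f^{(0)}$ (Assumption~\ref{assm_smoothness}) to the update $w_{t+1}=w_t-\eta g_t$:
\[
f^{(0)}(w_{t+1}) \le f^{(0)}(w_t) - \eta \langle \nabla f^{(0)}(w_t), g_t\rangle + \tfrac{L\eta^2}{2}\|g_t\|^2 .
\]
Next condition on $w_t$. The samples $\xi_t^{(i)}$ are drawn fresh and independently of $w_t$, so $\E[g_t^{(i)}\mid w_t]=\nabla f^{(i)}(w_t)$. Since $\alpha\in\Delta_M$, we have $\sum_i \alpha^{(i)}=1$, and therefore
\[
\E[g_t\mid w_t]=\nabla f^{(0)}(w_t)+\sum_i\alpha^{(i)}b_t^{(i)} =: \nabla_t + B_t .
\]

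For the inner product, use the polarization identity $-\langle a, a+B\rangle = -\tfrac12\|a\|^2 - \tfrac12\|a+B\|^2 + \tfrac12\|B\|^2$ with $a=\nabla_t$. Dropping the nonpositive term $-\tfrac12\|a+B\|^2$ gives
\[
-\eta\langle \nabla_t,\nabla_t+B_t\rangle \le -\tfrac{\eta}{2}\|\nabla_t\|^2 + \tfrac{\eta}{2}\|B_t\|^2 .
\]
Taking conditional expectations then yields
\[
\E f^{(0)}(w_{t+1}) \le \E f^{(0)}(w_t) - \tfrac{\eta}{2}\E\|\nabla_t\|^2 + \tfrac{\eta}{2}\E\Big[\|B_t\|^2 + \eta L\|g_t\|^2\Big].
\]
The bracket is exactly $\L_t(\alpha)$ as defined in Eq.~\eqref{eq:CostDef}. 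Here $\|B_t\|^2$ is deterministic given $w_t$, and $\|g_t\|^2$ is left inside the expectation, which matches the statement's use of $\E[\L_t(\alpha)]$. Because the theorem keeps the stochastic term $\|\sum_i\alpha^{(i)}g_t^{(i)}\|^2$ unexpanded, Assumption~\ref{assm_bound_var} is not actually needed at this stage. It only enters later, through Lemma~\ref{lem:second_moment}, when $\L_t$ is bounded.

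Finally, telescope over $t=1,\dots,T$ and use $f^{(0)}(w_{T+1})\ge f^{(0)}(w^*)$:
\[
\tfrac{\eta}{2}\sum_{t}\E\|\nabla_t\|^2 \le \delta_1 + \tfrac{\eta}{2}\sum_t \E\L_t(\alpha).
\]
Multiplying by $2/(\eta T)$ gives
\[
\tfrac1T\sum_t\E\|\nabla_t\|^2 \le \tfrac{2}{T}\Big(\tfrac{\delta_1}{\eta}+\tfrac12\sum_t\E\L_t(\alpha)\Big),
\]
which is the claimed bound.

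The only delicate step is the inner-product handling. Two things must hold there: the conditional unbiasedness $\E[g_t^{(i)}\mid w_t]=\nabla f^{(i)}(w_t)$, which relies on the independence of samples across time, and the exact polarization identity. Together these make the bias enter with coefficient exactly $\tfrac{\eta}{2}$, matching the variance-like term's coefficient of $\tfrac{\eta}{2}\cdot\eta L$. Any cruder Young-type splitting would change the constants and fail to reproduce $\L_t$ exactly.
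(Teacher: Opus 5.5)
Your proof is correct and follows essentially the same route as the paper: the descent lemma, conditional unbiasedness giving $\E_t[g_t]=\nabla f^{(0)}(w_t)+\sum_i\alpha^{(i)}b_t^{(i)}$, a bound on the cross term, telescoping, and rescaling by $2/(\eta T)$. Your polarization step, where you drop $-\tfrac12\|\nabla_t+B_t\|^2$, gives exactly the same inequality as the paper's Young bound $-\langle x,y\rangle\le\tfrac12\|x\|^2+\tfrac12\|y\|^2$ applied to $-\eta\langle\nabla f^{(0)}(w_t),\bar b_t(\alpha)\rangle$, so the constants agree and no sharper identity is needed.
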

The above  suggests that by picking weights $\alpha$ that minimize the cumulative costs directly leads to minimizing the upper bound on the convergence rate that serves the utility of target client.(full proof in \ref{proof_Thm:convorgent_same_step_none})

\textbf{Challenges.} There are two basic issues with this principled approach: \textbf{(i)} Computing the cost $\L_t(\cdot)$, requires the knowledge of $\{b_t^{(i)}\}_{i\in[M]}$ and $\{g_t^{(i)}\}_{i\in[M]}$.  While we can simply compute the gradients, we do not have a direct access to the biases and therefore need to estimate them online.
\textbf{(ii)} We do not know the costs $\L_t(\cdot)$'s in advance so we need to learn the $\alpha$'s online during the training process.
We address both of these challenges in the  Section~\ref{subsec:practical_alg}, leading to an efficient and practical algorithm. 

In the next subsection we shall explore the properties of the optimal choice of $\alpha$ as well as substantiate the potential benefit of this approach in an illustrative example.

\subsection{Optimal Weighting and Potential Benefits}
\label{sec:OptimalWeights}
To gain analytical insights regarding the optimal choice of $\min_\alpha \sum_{t=1}^T \E [\mathcal{L}_t(\alpha)]$ we shall simplify the expression for $\E [\mathcal{L}_t(\cdot)]$, and analyze an upper bound for this expression.
Concretely, define, 
$$
 \bar{\L}_t(\alpha): = (1+2\eta L)\sum_{i\in[M]} \alpha^{(i)} \|b_t^{(i)}\|^2 + \eta L \sum_{i\in[M]} (\alpha^{(i)})^2 (\sigma^{(i)})^2+2\eta L \|\nabla f^{(0)}(w_t)\|^2
$$
Then it can be shown that for any $\alpha \in\Delta_M$ the following holds: $\E [\L_t(\alpha)] \leq \E \bar{\L}_t(\alpha)$.
This follows as a result of $\|\sum_{i\in[M]} \alpha^{(i)} b_t^{(i)}\|^2\leq \sum_{i\in[M]} \alpha^{(i)} \|b_t^{(i)}\|^2$ which holds due to the fact that $\alpha\in\Delta_M$, as well as to a decomposing  $g_t: = \sum_{i\in[M]} \alpha_t^{(i)} g_t^{(i)}$ into bias and variance, and taking expectation. To further simplify the $\bar{\L}_t$ lets us assume that the bias norms are fixed, i.e.~ $\|b_t^{(i)}\| = \|b^{(i)}\|$, for all $i,t$.
This immediately implies that The \(\alpha\)-dependent part is time-independent up to an additive term independent of \(\alpha\), and therefore  $\frac{1}{T}\sum_{t=1}^T \bar{\L}_t(\alpha) = \bar{\L}(\alpha):=  (1+2\eta L)\sum_{i\in[M]} \alpha^{(i)} \|b^{(i)}\|^2 + \eta L \sum_{i\in[M]} (\alpha^{(i)})^2 (\sigma^{(i)})^2+ 2\eta L \|\nabla f^{(0)}(w_t)\|^2$.(full proof in \ref{lem:second_moment}).

 The next lemma characterizes the optimal solution of $\min_{\alpha \in\Delta_m}\bar{\L}(\alpha)$ (see the proof in Appendix~\ref{proof_optimal_whights}).
\begin{lemma}[Optimal  Weights]
\label{lem:optimal_weights}
Consider the problem of minimizing the  proxy function $\bar{\L}(\alpha)$ over the simplex. Then there exists $\lambda>0$ such that an optimal solution is of the following form,
\[
\alpha^{(i)*} = \left[ \frac{\lambda - C_1 \|b^{(i)}\|^2}{2 C_2 (\sigma^{(i)})^2} \right]_+~,~
\]
and we define $ \left[z\right]_+=: \max\{z,0\}$, and 
 $C_1 = 2L\eta + 1$, $C_2 = L\eta$. 
\end{lemma}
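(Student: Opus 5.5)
This is a strictly convex quadratic minimization over the simplex, so I would read the form of $\alpha^{*}$ off the KKT conditions. The last term $2\eta L\|\nabla f^{(0)}(w_t)\|^2$ of $\bar{\L}$ does not depend on $\alpha$, so it can be dropped. This leaves the objective
\[
\phi(\alpha)=C_1\sum_{i\in[M]}\alpha^{(i)}\|b^{(i)}\|^2+C_2\sum_{i\in[M]}(\alpha^{(i)})^2(\sigma^{(i)})^2,
\]
with $C_1=1+2\eta L$ and $C_2=\eta L$.

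\textbf{Existence and uniqueness.} Since $C_2>0$ and every $\sigma^{(i)}>0$, $\phi$ is strictly convex. The simplex $\Delta_M$ is compact and convex, so a minimizer exists and is unique.

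\textbf{Optimality conditions.} The constraints are affine, so Slater's condition holds and the KKT conditions are necessary and sufficient. I introduce a multiplier $\lambda\in\mathbb{R}$ for $\sum_i\alpha^{(i)}=1$ and multipliers $\mu_i\ge 0$ for $\alpha^{(i)}\ge 0$. Stationarity gives
\[
C_1\|b^{(i)}\|^2+2C_2(\sigma^{(i)})^2\alpha^{(i)}-\lambda-\mu_i=0,
\]
together with complementary slackness $\mu_i\alpha^{(i)}=0$. I then split by coordinate:
\begin{itemize}
\item If $\alpha^{(i)}>0$, then $\mu_i=0$, so $\alpha^{(i)}=(\lambda-C_1\|b^{(i)}\|^2)/(2C_2(\sigma^{(i)})^2)$, and this quantity must be positive.
\item If $\alpha^{(i)}=0$, then $\mu_i=C_1\|b^{(i)}\|^2-\lambda\ge 0$, so the numerator is nonpositive.
\end{itemize}
In both cases $\alpha^{(i)*}=[\,(\lambda-C_1\|b^{(i)}\|^2)/(2C_2(\sigma^{(i)})^2)\,]_+$.

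\textbf{Existence of $\lambda$ and its sign.} Define
\[
h(\lambda)=\sum_i\left[\frac{\lambda-C_1\|b^{(i)}\|^2}{2C_2(\sigma^{(i)})^2}\right]_+ .
\]
This function is continuous and nondecreasing. It equals $0$ for $\lambda\le\min_i C_1\|b^{(i)}\|^2$, and it is strictly increasing and unbounded above beyond that point. By the intermediate value theorem there is a unique $\lambda$ with $h(\lambda)=1$, and this is the normalization constraint. At least one coordinate must be positive, so $\lambda>\min_i C_1\|b^{(i)}\|^2\ge 0$, which gives $\lambda>0$.

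For the target client $b^{(0)}=0$. This already forces $\lambda>0$ directly and shows that client $0$ always receives positive weight, which is a useful remark but not needed for the statement.

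\textbf{Main obstacle.} The only delicate point is justifying that the KKT conditions characterize the optimum and that $\lambda>0$ strictly; the argument above handles both. Everything else is routine.
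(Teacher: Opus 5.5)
Your proposal is correct and follows essentially the same route as the paper. You drop the $\alpha$-independent term, write KKT stationarity with multipliers $\lambda$ and $\mu_i\ge 0$, split by complementary slackness into $\alpha^{(i)}>0$ and $\alpha^{(i)}=0$, and fix $\lambda$ through the normalization $\sum_i[\,\cdot\,]_+=1$. Your extra steps are sound: existence and uniqueness via strict convexity, KKT validity under affine constraints, and $\lambda>\min_i C_1\|b^{(i)}\|^2\ge 0$; the last one supplies the proof of $\lambda>0$, which the lemma states but the paper's argument never verifies.
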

The above lemma  provides two fundamental insights:

  $\bullet$  \ \textbf{Variance Scaling:} Optimal weights are inversely proportional to the variance $(\sigma^{(i)})^2$. This aligns with the natural intuition: noisier clients should receive lower trust.
  
$\bullet$ \textbf{Bias Thresholding:} The term $\lambda - C_1 \|b^{(i)}\|^2$ introduces a strict selection mechanism. Unlike Softmax or similarity-based weighting—which assign non-zero mass to all clients—our solution \textit{sparsifies} the federation. Clients with  bias exceeding the threshold $\lambda/C_1$ are assigned exactly zero weight, effectively removing their contribution  to prevent negative transfer.

\subsection{Illustrative Case Study: Pure Clusters}
\label{sec_improv_example}
To illustrate a concrete benefit of our  approach, consider the following scenario:
Assume that the clients are divided into two distinct clusters:\\
$\bullet$  \textbf{A Target Cluster $I_0$} where all clients draws samples from the same distribution as the target distribution; i.e. $\D^{(i)} = \D^{(0)},~\forall i\in I_0$; implying that $b^{(i)} = 0,~\forall i\in I_0$. \\
$\bullet$  \textbf{An off-target Cluster $I_{\text{off}}$}, 
where all clients draws samples from the same  distribution $\D_{\text{off}}\neq \D^{(0)}$, i.e.  $\D^{(i)} = \D_{\text{off}},~\forall i\in I_\text{off}$, implying that $b^{(i)} = b,~\forall i\in I_\text{off}$.  We also assume that the target and off target distributions are distinct enough such that $\|b\| = O(1)$.

Finally, we assume that the gradient variance of all machines is equal, implying $\sigma^{(i)} = \sigma;~\forall i\in[M]$. 

We show in Lemma \ref{lem:cluster_convergence} that in this setting our bound is
$\color{blue} \text{err}_T^{\textbf{Ours}} := O\left(\frac{\sigma}{\sqrt{n_0 T}} \right)~.$
where $\text{err}_T: =  \frac{1}{T}\sum_{t=1}^T \|\nabla f^{(0)}(w_t)\|^2$. This bound \textbf{substantially improves over this natural baselines:} \newline \textbf{(i)} Autarky (training alone), \textbf{(ii)} Standard FL and \textbf{(iii)} Federated Clustering (FC)~~\cite{werner2023provably},
\als
\text{err}_T^{\textbf{Autarky}} := O\left(\frac{\sigma}{\sqrt{T}}\right)~,~
\text{err}_T^{\textbf{FL}} := O\left(\|b\|^2+\frac{\sigma}{\sqrt{MT}}\right)~,~
\text{err}_T^{\textbf{FC}} := O\left(\sqrt{\frac{\sigma^2}{n_0 T} + {\color{orange} \frac{\sigma^3}{\Delta T}}}\right)~.
\eals 

Where bounds are optimized over $\eta$. Notably, our rate improves over Autarky by a factor of $\sqrt{n_0}$ and removes the cluster separability ($\Delta$) dependence found in the FC approach. This is crucial because When \(\Delta\) is on the order of \(\sigma\), the additional FC term becomes order \(\sigma^2/T\) inside the square root, which can dominate the cluster-variance term when \(n_0\) is large. (Complete derivations for all comparative baselines are provided in Appendix \ref{app:baselines}).

\section{Our Refined Approach}
\label{subsec:practical_alg}
The previous section implies that in order to facilitate personalization,  we should  pick weights $\alpha\in\Delta_M$ so as to minimize the cumulative cost 
$\E \sum_{t=1}^T \mathcal{L}_t(\alpha)$ (Eq.~\eqref{eq:CostDef}).
Recall we face two challenges with this principled approach:  \textbf{(i)} Computing the cost $\mathcal{L}_t(\cdot)$, requires the knowledge of $\{b_t^{(i)}\}_{i\in[M]}$ and $\{g_t^{(i)}\}_{i\in[M]}$.  While we can simply compute the gradients, we do not have a direct access to the biases and therefore need to estimate them online.
\textbf{(ii)} We do not know the costs $\mathcal{L}_t(\cdot)$'s in advance so we need to learn the $\alpha$'s online during the training process. In this section we  address these challenges, thereby leading to a practical and efficient algorithm.

\subsection{Estimating the Biases}
The bias of client $0$ is clearly $b^{(0)}:=0$. As for biases of  other clients compared to the Target client, we propose to estimate them by employing a running average during the training process. This is depicted in Equation~\eqref{alg_single_est}.
\begin{equation}
\label{alg_single_est}
        \tilde{b}_t^{(i)}: = g_t^{(i)} - g_t^{(0)}, \quad
        \widehat{b}_{t}^{(i)} = (1-\beta_{t}) \widehat{b}_{t-1}^{(i)} + \beta_{t}\tilde{b}_t^{(i)}
\end{equation}
Equation \eqref{alg_single_est} describes a recursive exponential moving average (EMA). 
The new estimate $\widehat{b}_{t}^{(i)}$ is calculated as a convex combination of the previous estimate $\widehat{b}_{t-1}^{(i)}$ and the current observation $b_{t}^{(i)}$. 
The parameter $\beta_{t} \in (0,1)$ acts as a smoothing factor, controlling the trade-off between historical stability and responsiveness to new data.
Naturally, we can now define an approximate cost function $\widehat{\mathcal{L}_t}(a)$ which is based on these bias estimates,
\(
\widehat{\mathcal{L}_t(\alpha)}:=  \|\sum_{i\in[M]}\alpha^{(i)} \widehat{b}_{t}^{(i)} \|^2 + \eta L\|\sum_{i\in[M]}\alpha^{(i)} g_t^{(i)}\|^2~.
\)
\subsection{Implications of Employing \(\widehat{\mathcal{L}_t(\alpha)}\)}
In this part  we show that the the cumulative proxy 
\(\sum_{t=1}^T\widehat{\mathcal{L}_t(\cdot)}\) upper bounds the exact cumulative cost $\sum_{t=1}^T\mathcal{L}_t(\cdot)$ up to a small error  $\mathcal{E}_T = O(\log T)$ and a constant factor. Thus, aiming to minimize the cumulative proxy costs, ensures good performance to the true cost.
To simplify the analysis we assume that the biases are fixed in time, i.e. $b_t^{(i)}=b^{(i)}~,\forall i, t$.
\begin{lemma}
\label{thm:loss_bound_recursive}
\textbf{(Cumulative Bias-Variance Decomposition)} \\
Under Assumption \ref{assm_bound_var} and under stationary bias assumption \(b_t^{(i)}=b^{(i)}\) for all $i,t$, let the bias estimates $\{\widehat{b}_{t}\}_{t=1}^T$ be updated using Eq.~\eqref{alg_single_est}. The cumulative true bias  is bounded by the cumulative estimated bias plus a logarithmic penalty $\mathcal{E}_T$:
\(
    \mathbb{E}\left[\sum_{t=1}^{T} \|\alpha_t b_t\|^2\right] \leq 2 \mathbb{E}\left[\sum_{t=1}^{T} \|\alpha_t \widehat{b}_t\|^2\right] + \mathcal{E}_T.
\)
Specifically, under the \textbf{Sample Mean Schedule} ($\beta_t = 1/t$), the penalty term satisfies:
\(
    \mathcal{E}_T \leq 2 \sigma^2_{{bound}} (1 + \log T).
\)
(Full proof in \ref{proof_cumulative_penalty})
\end{lemma}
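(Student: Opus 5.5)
The plan is to write the true aggregated bias as the estimated bias plus an estimation error, and then bound the cumulative error by a variance computation for the sample mean.

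Throughout, $\alpha_t b_t := \sum_i \alpha_t^{(i)} b^{(i)}$ and $\alpha_t \widehat b_t := \sum_i \alpha_t^{(i)} \widehat b_t^{(i)}$. Set $e_t^{(i)} := b^{(i)} - \widehat b_t^{(i)}$. Then
\[
\|\alpha_t b_t\|^2 = \|\alpha_t \widehat b_t + \alpha_t e_t\|^2 \le 2\|\alpha_t \widehat b_t\|^2 + 2\|\alpha_t e_t\|^2
\]
by $\|a+b\|^2\le 2\|a\|^2+2\|b\|^2$. Since $\alpha_t\in\Delta_M$, Jensen gives $\|\alpha_t e_t\|^2 \le \sum_i \alpha_t^{(i)} \|e_t^{(i)}\|^2 \le \max_i \|e_t^{(i)}\|^2$. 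Summing over $t$ and taking expectations, it suffices to show
\[
\sum_{t=1}^T \E\max_i \|e_t^{(i)}\|^2 \le \sigma^2_{bound}(1+\log T).
\]
Here I will define $\sigma^2_{bound}$ as a uniform bound on the per-step noise of the observations $\tilde b_t^{(i)}$. If the weights $\alpha_t$ are chosen independently of the noise, one could instead use the weighted sum directly. To avoid any dependence issues, however, I plan to bound with the sum over $i$, or equivalently to fold the factor $M$ into $\sigma^2_{bound}$.

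Next comes the error analysis of the EMA with $\beta_t=1/t$. By induction, $\widehat b_t^{(i)} = \frac1t\sum_{s=1}^t \tilde b_s^{(i)}$. The observation is $\tilde b_s^{(i)} = g_s^{(i)} - g_s^{(0)}$. Its conditional mean is $\nabla f^{(i)}(w_s) - \nabla f^{(0)}(w_s) = b_s^{(i)} = b^{(i)}$ by stationarity. So $\zeta_s^{(i)} := \tilde b_s^{(i)} - b^{(i)}$ is a martingale difference sequence with respect to the filtration generated by the samples up to time $s$. The point $w_s$ is measurable before $\xi_s$ is drawn, and samples are independent across time and clients. By Assumption~\ref{assm_bound_var} and independence of $\xi_s^{(i)}$ and $\xi_s^{(0)}$, its conditional second moment is at most $(\sigma^{(i)})^2+(\sigma^{(0)})^2 =: \sigma_i^2$. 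Then $e_t^{(i)} = -\frac1t\sum_{s\le t}\zeta_s^{(i)}$, and the cross terms vanish by the tower property. Hence $\E\|e_t^{(i)}\|^2 = \frac1{t^2}\sum_{s\le t}\E\|\zeta_s^{(i)}\|^2 \le \sigma_i^2/t$.

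Finally, summing gives $\sum_{t=1}^T \frac1t \le 1+\log T$, so $\sum_t \E\|\alpha_t e_t\|^2 \le \sigma^2_{bound}(1+\log T)$ with $\sigma^2_{bound}$ bounding $\sum_i\sigma_i^2$, or $\max_i \sigma_i^2$ if $\alpha_t$ is predictable and independent of the current error. Multiplying by the factor $2$ from the first step yields $\mathcal E_T\le 2\sigma^2_{bound}(1+\log T)$.

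The main obstacle is the coupling between $\alpha_t$ and $e_t$. If $\alpha_t$ is chosen using $\widehat b_t$, we cannot pull $\alpha_t$ outside the expectation. The cleanest fix is the crude bound $\sum_i\alpha_t^{(i)}\|e_t^{(i)}\|^2\le \sum_i \|e_t^{(i)}\|^2$, absorbing the resulting constant into $\sigma^2_{bound}$. This keeps the $O(\log T)$ form of $\mathcal E_T$.
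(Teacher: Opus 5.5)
Your proposal is correct and follows the paper's proof step for step: write $b^{(i)}=\widehat b_t^{(i)}+e_t^{(i)}$, apply $\|x+y\|^2\le 2\|x\|^2+2\|y\|^2$ and then Jensen over the simplex, use the sample-mean bound $\E\|e_t^{(i)}\|^2\le \sigma_{\mathrm{bound}}^2/t$, and sum the harmonic series. The one real difference is in your favor: the paper gets the sharper $\max_i$ constant by writing $\E\|\Xi_t\alpha_t\|^2\le\sum_i\alpha_t^{(i)}\E\|\xi_t^{(i)}\|^2$, which silently treats $\alpha_t$ as independent of the estimation errors (not the case for FTAL weights computed from $\widehat b_t$), whereas your $\sum_i$ bound holds for adaptive weights at the cost of a factor up to $M-1$ in $\sigma^2_{\mathrm{bound}}$, and your martingale-difference argument is likewise the correct replacement for the paper's claim that the per-step noises are independent.
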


\begin{corollary}
\label{cor:surrogate_minimization}
Lemma~\ref{thm:loss_bound_recursive} directly implies the following:
\(
    \mathbb{E}\left[\sum_{t=1}^{T} \mathcal{L}(\alpha_t) \right] \leq 2 \mathbb{E}\left[\sum_{t=1}^{T} \widehat{\mathcal{L}}(\alpha_t) \right] + \mathcal{O}(\log T).
\)
Thus, this establishes that minimizing the cumulative proxy costs explicitly minimizes the true cumulative cost, up to a multiplicative constant and an additive logarithmic error term.
Recalling Theorem~\ref{Thm:convorgent_same_step_none}, this implies that the convergence guarantees that are induced by finding $\alpha$ that minimizes the cumulative proxy costs, are equivalent to the error achieved by minimizing the true cumulative cost, up to an multiplicative factor of $2$
and additive factor of $O(\log T/T)$.
\end{corollary}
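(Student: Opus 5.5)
The corollary follows by splitting each cost into its bias and gradient parts and treating them separately. Write $\mathcal{L}(\alpha_t)=\|\sum_i\alpha_t^{(i)}b^{(i)}\|^2+\eta L\|\sum_i\alpha_t^{(i)}g_t^{(i)}\|^2$ and $\widehat{\mathcal{L}}(\alpha_t)=\|\sum_i\alpha_t^{(i)}\widehat b_t^{(i)}\|^2+\eta L\|\sum_i\alpha_t^{(i)}g_t^{(i)}\|^2$. The two costs share the second (gradient-norm) term exactly, and only their first (bias) terms differ. Lemma~\ref{thm:loss_bound_recursive}, read with $\|\alpha_t b_t\|^2$ meaning $\|\sum_i\alpha_t^{(i)}b^{(i)}\|^2$, controls precisely this difference in cumulative expectation.

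The key steps, in order, are as follows.

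\textbf{(1)} Sum over $t=1,\dots,T$ and take expectations, using linearity of expectation to separate the bias part from the gradient part:
\[
\E\Big[\sum_{t=1}^T\mathcal{L}(\alpha_t)\Big]=\E\Big[\sum_t\|\textstyle\sum_i\alpha_t^{(i)}b^{(i)}\|^2\Big]+\eta L\,\E\Big[\sum_t\|\textstyle\sum_i\alpha_t^{(i)}g_t^{(i)}\|^2\Big].
\]

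\textbf{(2)} Bound the first term by Lemma~\ref{thm:loss_bound_recursive}, giving at most $2\E[\sum_t\|\sum_i\alpha_t^{(i)}\widehat b_t^{(i)}\|^2]+\mathcal{E}_T$.

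\textbf{(3)} Bound the gradient term trivially, since it is nonnegative: $\eta L\,\E[\cdot]\le 2\eta L\,\E[\cdot]$.

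\textbf{(4)} Recombine. The right-hand side becomes $2\E[\sum_t\widehat{\mathcal{L}}(\alpha_t)]+\mathcal{E}_T$.

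\textbf{(5)} Under the sample-mean schedule $\beta_t=1/t$, the lemma gives $\mathcal{E}_T\le 2\sigma^2_{\mathrm{bound}}(1+\log T)=O(\log T)$, which is the stated inequality.

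For the closing remark, fix a comparator weight vector and plug the resulting cumulative bound into Theorem~\ref{Thm:convorgent_same_step_none}. The theorem's bound is $\frac{2}{T}\big(\delta_1/\eta+\tfrac12\sum_t\E\mathcal{L}_t\big)$. Dividing the additive $O(\log T)$ by $T$ yields the $O(\log T/T)$ term, and the factor $2$ carries through multiplicatively.

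The main obstacle I expect is step (2) when $\alpha_t$ is chosen adaptively, that is, when it depends on past samples and hence on $\widehat b_t$. Lemma~\ref{thm:loss_bound_recursive} is stated for a sequence $\alpha_t$ inside the expectation, so I will apply it as given. If its proof requires $\alpha_t$ to be fixed, I would first handle that case using the pointwise inequality $\|x\|^2\le 2\|\hat x\|^2+2\|x-\hat x\|^2$. The residual is then $\E\|\sum_i\alpha_t^{(i)}(b^{(i)}-\widehat b_t^{(i)})\|^2\le\max_i\E\|b^{(i)}-\widehat b_t^{(i)}\|^2$, and under $\beta_t=1/t$ this is at most $\sigma^2_{\mathrm{bound}}/t$. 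Summing over $t$ gives the $\log T$ term.

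A second, minor point is that the theorem is stated for fixed $\alpha$. The corollary's closing interpretation should therefore be read with $\alpha_t\equiv\alpha$, or be justified by noting that the descent-lemma proof of the theorem goes through verbatim for time-varying $\alpha_t$.
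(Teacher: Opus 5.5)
Your proposal is correct and is essentially the paper's argument. The paper treats the corollary as immediate from Lemma~\ref{thm:loss_bound_recursive}, and the same bookkeeping (doubling the shared nonnegative $\eta L$ gradient term, then adding the tracking penalty $\mathcal{E}_T\le 2\sigma^2_{\mathrm{bound}}(1+\log T)$) appears as Step~1 of the proof of Theorem~\ref{thm:convergence_estimated}; your fixed-$\alpha_t$ fallback reproduces the paper's proof of the lemma itself. Your concern about adaptive $\alpha_t$ is well founded, since that proof moves $\alpha_t$ outside the expectation, but it is repaired by the pointwise bound $\sum_i\alpha_t^{(i)}\|\xi_t^{(i)}\|^2\le\sum_i\|\xi_t^{(i)}\|^2$ with $\xi_t^{(i)}=b^{(i)}-\widehat b_t^{(i)}$, which costs a factor of $M$ and keeps the penalty $O(\log T)$.
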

\begin{remark}
\textbf{(Practical Adaptation via Constant Smoothing)} 
While Lemma \ref{thm:loss_bound_recursive} utilizes $\beta_t = 1/t$ to guarantee extremely accurate bias estimators, practical deep learning settings often involve non-stationary bias (drifting as parameters update). In such cases, a decaying schedule reacts too slowly to changes. 
Therefore, in our experiments, we employ a \textbf{Constant Smoothing Schedule} ($\beta_t = \beta$).
This choice maintains a good balance: it effectively handles "moving bias" by exponentially weighting recent observations, which provides the necessary adaptability for stable training in dynamic environments. On the other hand it ensures an  error reduction of  $O(\beta)$ for the bias estimates. 
\end{remark}

\subsection{Overcoming Uncertainty via Regret Minimization}
\label{subsec:regret_minimization}
Our objective is to find weights \(\alpha_t\) that minimize the auxiliary loss \(\mathcal{L}_t(\alpha)\). This presents a twofold challenge: first, \(b_t^{(i)}\) and \(g_t^{(i)}\) are unknown when \(\alpha_t\) is chosen; second, the true bias \(b_t^{(i)}\) remains unobservable even after communication and must be approximated.
We resolve this by framing the problem as \textit{Online Convex Optimization} (OCO). In this model, the learner commits to a decision (\(\alpha_t\)) before the loss function is revealed (or estimated). This allows us to circumvent the lack of \textit{a prior} information by minimizing the \textit{regret}—a metric quantifying the performance gap between our adaptive strategy and the optimal static weighting. Formally, the regret \(R_T\) over \(T\) rounds is defined as:
\(
R_T = \sum_{t=1}^T \mathcal{L}_t(\alpha_t) - \min_{\alpha \in \Delta_M} \sum_{t=1}^T \mathcal{L}_t(\alpha).
\)
By minimizing \(R_T\), we guarantee that the algorithm performs competitively against the best fixed collaboration strategy chosen in hindsight.
\paragraph{Properties of \(\mathcal{L}_t\)}
We observe that our loss function \(\mathcal{L}_t(\alpha)\) is quadratic and convex in \(\alpha\). Furthermore, we show it is \(\gamma\)-exp-concave (\ref{exe_concax1}), with  \(\gamma = \frac{1}{(8 + 2L\eta)g_{max}^2}\), and its gradient bound by \(G_\mathcal{L}\leq (8+2L\eta)g_{max}^2\) , here $g_{max}$ is an upper bound on the norm of stochastic gradients (proof in App.~\ref{proof_L_uniq}). 

Exp-concavity is a powerful property allowing to achieve logarithmic regret bounds~~\cite{hazan2007logarithmic}.
Concretely, exp-concavity allows to employ a regret minimization procedure called 
\textsc{Follow The Approximate Leader} (FTAL). The latter resembles \textit{Follow The Leader} (FTL), which selects weights minimizing the cumulative loss of past rounds, i.e.~$\alpha_{t+1}=\arg\min_{\alpha} \sum_{\tau=1}^t \mathcal{L}_t(\alpha)$.
Instead of minimizing raw past losses, FTAL minimizes a sequence of \textit{quadratic approximations} that incorporate second-order curvature information. The complete implementation of the FTAL algorithm is detailed in Appendix~\ref{alg_for_alfa}.
Due to  exp-concavity we can invoke Theorem 6 from ~\cite{hazan2007logarithmic}:
\begin{theorem}~\cite{hazan2007logarithmic}
    Assume that for all $t$, the loss function $\mathcal{L}_{t} : \mathcal{P} \in \mathbb{R}^M \to \mathbb{R}$ satisfies the following properties for all $\mathbf{\alpha} \in \mathcal{P}$:
        \textbf{(i)} The gradient is bounded: $\|\nabla \mathcal{L}_{t}(\alpha)\| \leq G_{\mathcal{L}}$.
        \textbf{(ii)} The function is $\gamma$-exp-concave, meaning that $\exp(-\gamma \mathcal{L}_t(\mathbf{(\alpha)}))$ is concave in $\alpha$.
    Then,  \textsc{Follow The Approximate Leader} (FTAL) Algorithm with step size parameter $\beta = \frac{1}{2} \min\left\{\frac{1}{4G_{\mathcal{L}}D}, \gamma\right\}$ satisfies the following regret bound: 
    \(
        \text{Regret}_T(\text{FTAL}) \leq 64 \left( \frac{1}{\gamma} + G_{\mathcal{L}}D \right) M (\log(T) + 1)
    \)
    where $n$ is the dimension of the feasible set and $D$ is the diameter of $\mathcal{P}$.
\end{theorem}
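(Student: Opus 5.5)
The plan is to follow the standard Online Newton Step / FTAL analysis of \cite{hazan2007logarithmic}. Everything rests on a second-order lower bound implied by exp-concavity, which turns the regret into a sum that telescopes through log-determinants.

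\textbf{Step 1: quadratic lower bound.} I first show that for every $t$ and all $\alpha,\alpha'\in\mathcal{P}$,
\[
\mathcal{L}_t(\alpha) \ge \mathcal{L}_t(\alpha') + \nabla_t^\top(\alpha-\alpha') + \tfrac{\beta}{2}\big(\nabla_t^\top(\alpha-\alpha')\big)^2,
\]
where $\nabla_t=\nabla\mathcal{L}_t(\alpha')$ and $\beta=\tfrac12\min\{\tfrac{1}{4G_{\mathcal{L}}D},\gamma\}$. The argument goes as follows.
\begin{itemize}
\item Since $2\beta\le\gamma$, the function $h(\alpha)=\exp(-2\beta\mathcal{L}_t(\alpha))$ is concave.
\item Concavity gives $h(\alpha)\le h(\alpha')+\nabla h(\alpha')^\top(\alpha-\alpha')$. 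Taking logarithms yields
\[
\mathcal{L}_t(\alpha)\ge\mathcal{L}_t(\alpha')-\tfrac{1}{2\beta}\log\big(1-2\beta\nabla_t^\top(\alpha-\alpha')\big).
\]
\item Bounded gradients and the diameter give $|2\beta\nabla_t^\top(\alpha-\alpha')|\le 2\beta G_{\mathcal{L}}D\le\tfrac14$.
\item On this range the inequality $-\log(1-z)\ge z+z^2/4$ holds, which gives the claim.
\end{itemize}
Applying it with $\alpha'=\alpha_t$ and $\alpha=\alpha^*$ (the best fixed weighting in hindsight) bounds the regret by the surrogate regret $\sum_t \big(\tilde f_t(\alpha_t)-\tilde f_t(\alpha^*)\big)$ of the quadratic surrogates
\[
\tilde f_t(\alpha)=\mathcal{L}_t(\alpha_t)+\nabla_t^\top(\alpha-\alpha_t)+\tfrac{\beta}{2}\big(\nabla_t^\top(\alpha-\alpha_t)\big)^2 .
\]

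\textbf{Step 2: FTAL is FTL on the surrogates.} FTAL sets $\alpha_{t+1}=\arg\min_{\alpha\in\mathcal{P}}\sum_{\tau\le t}\tilde f_\tau(\alpha)$, possibly with a small regularizer $\tfrac{\epsilon}{2}\|\alpha\|^2$ to make the Hessian $A_t=\epsilon I+\beta\sum_{\tau\le t}\nabla_\tau\nabla_\tau^\top$ invertible. I then apply the Follow-the-Leader/Be-the-Leader lemma: the surrogate regret is at most $\sum_t\big(\tilde f_t(\alpha_t)-\tilde f_t(\alpha_{t+1})\big)$ plus the regularizer term $\tfrac{\epsilon}{2}D^2$. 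Each difference is bounded through the gradients of the surrogates, using the first-order optimality conditions of $\alpha_t$ and $\alpha_{t+1}$ on their cumulative problems and strong convexity of those problems in the $A_t$-norm. This yields a per-step bound of the form $\tilde f_t(\alpha_t)-\tilde f_t(\alpha_{t+1})\lesssim \tfrac{c}{\beta}\,\nabla_t^\top A_t^{-1}\nabla_t$.

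\textbf{Step 3: elliptical potential.} Next I use $\sum_t \nabla_t^\top A_t^{-1}\nabla_t\le \tfrac{1}{\beta}\log\tfrac{|A_T|}{|A_0|}$. Since $\|\nabla_t\|\le G_{\mathcal{L}}$, this is at most $\tfrac{M}{\beta}\log\big(1+\beta G_{\mathcal{L}}^2T/\epsilon\big)$. Choosing $\epsilon$ proportional to $1/(\beta D^2)$ gives a bound of order $\tfrac{M}{\beta}(\log T+1)$. Finally, $\tfrac{1}{\beta}\le 2(4G_{\mathcal{L}}D+\tfrac1\gamma)$ produces the stated $64(\tfrac1\gamma+G_{\mathcal{L}}D)M(\log T+1)$ once the constants are tracked.

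\textbf{Main obstacle.} I expect the hardest part to be Step 2. Because of the simplex constraint, the FTAL iterate is a projected minimizer, so I cannot simply write $\alpha_{t+1}-\alpha_t=-A_t^{-1}\nabla_t$. I would instead go through the generalized projection in the $A_t$-norm together with the Pythagorean inequality, as in the Online Newton Step analysis. Getting the exact constant $64$ also requires careful bookkeeping of $\epsilon$ and $\beta$.
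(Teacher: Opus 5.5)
The paper gives no proof of its own here. It imports Theorem~6 of \cite{hazan2007logarithmic}, and your outline is the standard argument from that source. Step~1 is the usual exp-concavity lemma and is correct as written. Running Follow-the-Leader/Be-the-Leader on the quadratic surrogates, then a log-determinant potential, is the right route.

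The one real gap is in Step~2. The statement is about FTAL exactly as defined, i.e.\ the unregularized leader of the appendix. So you may not put $\frac{\epsilon}{2}\|\alpha\|^2$ inside the $\arg\min$; doing so proves a bound for a different algorithm. Without it, $A_t=\beta\sum_{\tau\le t}\nabla_\tau\nabla_\tau^\top$ is singular (certainly for $t<M$), and $\log(|A_T|/|A_0|)$ is meaningless. The fix is to regularize only in the analysis:
\begin{itemize}
\item With $\tilde F_t=\sum_{\tau<t}\tilde f_\tau$, use the variational inequalities $\nabla\tilde F_t(\alpha_t)^\top(\alpha_{t+1}-\alpha_t)\ge 0$ and $\nabla\tilde F_{t+1}(\alpha_{t+1})^\top(\alpha_t-\alpha_{t+1})\ge 0$. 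Together with the exact identity $\nabla\tilde F_{t+1}(\alpha_t)-\nabla\tilde F_{t+1}(\alpha_{t+1})=A_t(\alpha_t-\alpha_{t+1})$, they give $\|\alpha_t-\alpha_{t+1}\|_{A_t}^2\le u_t:=\nabla_t^\top(\alpha_t-\alpha_{t+1})$.
\item Add $\epsilon\|\alpha_t-\alpha_{t+1}\|^2\le\epsilon D^2$ and apply Cauchy--Schwarz in the $(A_t+\epsilon I)$-norm. This yields $u_t\le a_t^2+a_t\sqrt{\epsilon}\,D$, where $a_t^2=\nabla_t^\top(A_t+\epsilon I)^{-1}\nabla_t$.
\item The extra cost is paid every round, so $\epsilon$ must scale like $1/(\beta D^2T)$, not $1/(\beta D^2)$. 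With that choice, $\sum_t a_t\sqrt{\epsilon}\,D\le\frac12\sum_t a_t^2+\frac{1}{2\beta}$.
\item Since $\beta G_{\mathcal{L}}D\le\frac18$, the potential lemma gives $\sum_t a_t^2\le\frac{M}{\beta}\log\bigl(1+\beta^2G_{\mathcal{L}}^2D^2T^2/M\bigr)\le\frac{M}{\beta}(2\log T+1)$.
\end{itemize}
If you keep your regularized variant instead, your $\epsilon\propto 1/(\beta D^2)$ is fine for it, but that is not the algorithm in the statement.

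Two smaller points. First, your bookkeeping counts $1/\beta$ twice. Since $A_t$ already contains the factor $\beta$, the stability bound is $\tilde f_t(\alpha_t)-\tilde f_t(\alpha_{t+1})\le u_t\le\nabla_t^\top A_t^{-1}\nabla_t$, with no $c/\beta$ in front. The single $1/\beta$ enters through the potential lemma; read literally, your Steps~2 and~3 combine to $M/\beta^2$.

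Second, the obstacle you anticipate is not there. The surrogates are exactly quadratic and FTAL plays an exact constrained leader, so the variational-inequality argument above already handles the simplex constraint. You need no generalized projection or Pythagorean step; that machinery is for ONS, whose iterate is a projected Newton step.

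Assembling everything, the regret is at most $\sum_t u_t\le\frac{3M}{\beta}(\log T+1)$. Then $\frac1\beta\le 8\bigl(\frac1\gamma+G_{\mathcal{L}}D\bigr)$ gives $24\bigl(\frac1\gamma+G_{\mathcal{L}}D\bigr)M(\log T+1)$, which is inside the stated constant $64$.
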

While the standard FTAL bound assumes access to the true loss functions, our setting relies on \textit{estimated} bias. The following theorem (see proof in App.~ \ref{proof_regret}) formally bridges this gap, establishing convergence even when minimizing the estimated auxiliary loss.

\begin{theorem}[Convergence with Estimated Bias]
\label{thm:convergence_estimated}
Let Assumptions \ref{assm_smoothness}, \ref{assm_bound_var} and let there be unchanging bias \(b_t=b\) for all t hold and the gradient of f be bounded by \(g_{max}\). Additionally, let the bias estimates $\{\widehat b^{(i)}\}_{i=0}^M$ be obtained using Algorithm~\ref{alg_single_est} with decay schedule \(\beta_{t} = \frac{1}{t}\).
If the algorithm minimizes the estimated auxiliary loss $\widehat{\mathcal{L}}_t$ with curvature constant $\gamma$, the convergence bound as,
\begin{align}
 \mathbb{E}\left[\sum_{t=1}^T \mathcal{L}_t(\alpha_t)\right] 
    &\leq 4 \sum_{t=1}^T \mathcal{L}_t(\alpha^*) + 2R_T +  6 \sum_{t=1}^T \mathbb{E}[\|\xi_t\|^2],
\end{align}
where the regret term is
\(R_T = 64 \left( \frac{1}{\gamma} + \sqrt{2}\left( 4+2L\eta\right)g_{max}^2 \right) M (1 + \log T)\).
\end{theorem}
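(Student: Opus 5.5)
We compare the true cost $\mathcal{L}_t$ with its estimated version $\widehat{\mathcal{L}}_t$ twice. First we pass from $\mathcal{L}_t(\alpha_t)$ to $\widehat{\mathcal{L}}_t(\alpha_t)$. Then we apply the FTAL regret bound to the estimated losses. Finally we pass from $\widehat{\mathcal{L}}_t(\alpha^*)$ back to $\mathcal{L}_t(\alpha^*)$. Throughout, write $\xi_t := \widehat b_t - b$ for the bias-estimation error, stacked over clients.

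The bridging inequality is $\|\sum_i\alpha^{(i)}(\widehat b^{(i)}_t+\xi^{(i)}_t)\|^2$-type splitting, which follows from $\|x+y\|^2\le 2\|x\|^2+2\|y\|^2$ and $\alpha\in\Delta_M$. Writing $b_t=\widehat b_t-\xi_t$ and splitting gives, for any $\alpha\in\Delta_M$,
\[
\Big\|\sum_i\alpha^{(i)}b^{(i)}\Big\|^2\le 2\Big\|\sum_i\alpha^{(i)}\widehat b^{(i)}_t\Big\|^2+2\Big\|\sum_i\alpha^{(i)}\xi^{(i)}_t\Big\|^2 .
\]
By Jensen over the simplex, the last term is at most $2\|\xi_t\|^2$ in a suitable norm. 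The variance term $\eta L\|\sum_i\alpha^{(i)}g^{(i)}_t\|^2$ is identical in $\mathcal{L}_t$ and $\widehat{\mathcal{L}}_t$, so it carries over with factor at most $2$. Hence
\[
\mathcal{L}_t(\alpha)\le 2\widehat{\mathcal{L}}_t(\alpha)+2\|\xi_t\|^2 .
\]
The same argument with the roles of $b$ and $\widehat b$ swapped gives
\[
\widehat{\mathcal{L}}_t(\alpha)\le 2\mathcal{L}_t(\alpha)+2\|\xi_t\|^2 .
\]
This is the same decomposition as in Lemma~\ref{thm:loss_bound_recursive}.

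Next we apply the regret bound. The functions $\widehat{\mathcal{L}}_t$ have the same quadratic structure as $\mathcal{L}_t$, with $\widehat b_t$ in place of $b$. Since $\widehat b_t$ is an average of differences of stochastic gradients, $\|\widehat b_t\|\le 2g_{\max}$, so the exp-concavity constant $\gamma$ and the gradient bound $G_{\mathcal{L}}\le(8+2L\eta)g_{\max}^2$ established for $\mathcal{L}_t$ (App.~\ref{proof_L_uniq}) remain valid. The simplex has diameter $D=\sqrt2$. Invoking the FTAL theorem of \cite{hazan2007logarithmic} on $\widehat{\mathcal{L}}_t$ gives
\[
\sum_t\widehat{\mathcal{L}}_t(\alpha_t)\le\sum_t\widehat{\mathcal{L}}_t(\alpha^*)+R_T ,
\]
with $R_T$ as stated; the constant $\sqrt2(4+2L\eta)g_{\max}^2$ reflects $G_{\mathcal{L}}D$ under the chosen normalization. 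Chaining the three steps yields
\[
\sum_t\mathcal{L}_t(\alpha_t)\le 2\sum_t\widehat{\mathcal{L}}_t(\alpha_t)+2\sum_t\|\xi_t\|^2\le 2\sum_t\widehat{\mathcal{L}}_t(\alpha^*)+2R_T+2\sum_t\|\xi_t\|^2\le 4\sum_t\mathcal{L}_t(\alpha^*)+2R_T+6\sum_t\|\xi_t\|^2 .
\]
Taking expectations gives the claim. The term $\sum_t\E\|\xi_t\|^2$ is then $O(\log T)$ by the sample-mean computation in Lemma~\ref{thm:loss_bound_recursive}, since with $\beta_t=1/t$ we have $\E\|\xi_t\|^2\lesssim\sigma_{bound}^2/t$.

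The main obstacle is a measurability issue: $\alpha_t$ is chosen before round $t$ but may correlate with $\xi_t$ through past samples. The pointwise inequalities above avoid needing independence, because they hold for every $\alpha$ deterministically, so expectations can be taken only at the end. A second subtlety is that FTAL's regret guarantee holds for adversarial sequences, so the random $\widehat{\mathcal{L}}_t$ are admissible. The comparator $\alpha^*$ must be fixed, for instance the minimizer of the true cumulative cost, so that $\E\sum_t\mathcal{L}_t(\alpha^*)$ is the right benchmark. I would double-check the exponent bookkeeping so that the constants come out exactly as $4$, $2$ and $6$; if the chain gives a smaller constant in place of $6$, the stated bound still follows.
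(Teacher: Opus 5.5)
Your argument is the paper's proof: the same two applications of $\|x+y\|^2\le 2\|x\|^2+2\|y\|^2$ (from $\mathcal{L}_t(\alpha_t)$ to $\widehat{\mathcal{L}}_t(\alpha_t)$, and from $\widehat{\mathcal{L}}_t(\alpha^*)$ back to $\mathcal{L}_t(\alpha^*)$) sandwich the FTAL regret bound on $\widehat{\mathcal{L}}_t$, chain to exactly $4$, $2R_T$ and $6$, and bound $\|\sum_i\alpha^{(i)}\xi_t^{(i)}\|^2\le\|\xi_t\|^2$ pointwise before taking expectations, as the paper does. The one loose end, which the paper shares, is the constant in $R_T$: using $G_{\mathcal{L}}\le(8+2L\eta)g_{\max}^2$ and $D=\sqrt{2}$ gives $G_{\mathcal{L}}D\le\sqrt{2}(8+2L\eta)g_{\max}^2$, not $\sqrt{2}(4+2L\eta)g_{\max}^2$, so your ``chosen normalization'' remark does not reconcile it; moreover, the $\ell_2$ gradient bound itself can lose a factor $\sqrt{M}$, because $\lambda_{\max}(\mathbf{H}_t)$ is not bounded by $\max_{\alpha\in\Delta_M}\alpha^\top\mathbf{H}_t\alpha$.
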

\paragraph{Discussion.}
Theorem \ref{thm:convergence_estimated} confirms the theoretical validity of minimizing the estimated auxiliary loss by cleanly decoupling the oracle performance from the tracking mechanics. The first term, $4 \sum \mathcal{L}_t(\alpha^*)$, demonstrates that our algorithm remains highly competitive with the optimal fixed aggregation up to a constant factor. The final term explicitly isolates the aggregate penalty caused by relying on the estimated bias ($\xi_t$). Crucially, by applying Lemma \ref{lem:bias_estimation} to rigorously bound this cumulative tracking error, we prove that $\sum \mathbb{E}[\|\xi_t\|^2]$ grows only logarithmically over time. Because both the FTAL regret $R_T$ and the bounded estimation noise scale as $\mathcal{O}(\log T)$, their combined contribution to the average excess gradient decays rapidly at a rate of $\mathcal{O}(\frac{\log T}{T})$. This guarantees that the bias estimation mechanism imposes a vanishingly small overhead on the network's overall convergence.
\paragraph{Relaxation of the Stationary Bias Assumption}
It is important to note that while our primary theoretical analysis assumes a stationary true bias for clarity of exposition, this assumption can be easily relaxed to the more standard assumption of bounded bias. 
Instead of assuming the bias remains constant, we can assume that the bias for each peer $i$ is upper-bounded by a maximum magnitude (representing the maximum cluster separation or distribution drift parameter). Let us denote this upper bound as $B^{(i)}$, such that $\|b_t^{(i)}\| \leq B^{(i)}$ for all $t$. 
By applying the triangle inequality, our per-step cost function $\L_t(\alpha)$ can be rigorously upper-bounded:
\(
\mathcal{L}_t(\alpha) := \left\| \sum_{i \in [M]} \alpha^{(i)} b_t^{(i)} \right\|^2 + \eta L \left\| \sum_{i \in [M]} \alpha^{(i)} g_t^{(i)} \right\|^2 \leq
\left( \sum_{i \in [M]} \alpha^{(i)} B^{(i)} \right)^2 + \eta L \left\| \sum_{i \in [M]} \alpha^{(i)} g_t^{(i)} \right\|^2
\)
By substituting the dynamic bias with this worst-case upper bound, we effectively replace the strict stationary bias constraint with the classic bounded-bias assumption. Since our empirical EMA estimation will not exceed this bound, this demonstrates how the constraint can be safely relaxed in practice, even though we maintain it in our primary analysis for theoretical clarity.
\subsection{Final Algorithm}
\label{subsec:final_algorithm}
Algorithm \ref{alg_cacw_sfl} outlines the execution flow of the proposed SP-CACW framework. In each communication round $t$, the target client receives gradients from the federation and updates its estimates of peer bias $\hat{b}^{(i)}$ via a running average. These estimates are used to construct a weighted aggregate gradient $g_t$, which drives the standard SGD update for the model parameters $w_t$. The core innovation occurs at the end of each iteration: the collaboration weights $\alpha_t$ are re-optimized using a Follow-The-Approximate-Leader (FTAL) strategy on the cumulative proxy loss. As established in Theorem \ref{thm:convergence_estimated}, this strategy guarantees that the algorithm achieves a solution close to the optimal static weighting (up to logarithmic regret and a factor of 4), effectively filtering out harmful peers while progressively amplifying those that demonstrably aid convergence. (See Appendix \ref{compution_cost} for an analysis of computational complexity.)
\begin{algorithm}[H]
\caption{SP-CACW}
\label{alg_cacw_sfl}
\begin{algorithmic}[1]
\REQUIRE Initial model $w_{0}$, learning rate $\eta$, tracking parameter $\beta$, number of clients $M$, number of rounds $T$
\STATE Initialize weights $\alpha^{(i)}_0 = 0$ for all $i \neq 0$, and $\alpha^{(0)}_0 = 1$.
\STATE Initialize bias estimates $\widehat{b}_0^{(i)} = 0$ for all $i$.
\FOR{$t = 1$ to $T$} 
    \STATE Broadcast model $w_{t-1}$ to all participating clients.
    \STATE Receive stochastic gradients $g^{(i)}_t$ from all participating clients.
    \STATE Update local bias estimates using the exponential moving average:
    \[
        \widehat{b}_t^{(i)} \leftarrow (1-\beta) \widehat{b}_{t-1}^{(i)} + \beta (g_t^{(i)} - g_t^{(0)})
    \]
    \STATE Compute the aggregated gradient:
    \[
        g_t = \sum_{i=0}^{M-1} \alpha^{(i)}_{t-1} g^{(i)}_t
    \]
    \STATE Update the global model: $w_t \leftarrow w_{t-1} - \eta g_t$.
    \STATE Calculate new collaboration weights via Follow The Approximate Leader (FTAL):
    \[
        \alpha_t \leftarrow \text{FTAL}(\widehat{\mathcal{L}}_1, \dots, \widehat{\mathcal{L}}_t)
    \]
\ENDFOR
\STATE Return optimized model $w_T$.
\end{algorithmic}
\end{algorithm}

\section{Experiments}
\label{sec:Experiments}
In this section, we evaluate the empirical efficacy of our proposed framework across three diverse datasets (MNIST, CIFAR-100, and the Shakespeare dataset from the LEAF benchmark), structured into 6 heterogeneity settings. Detailed specifications regarding our final algorithmic implementation, including complete network architectures and Hyperparameters, are provided in Appendices~\ref{subsec:alg_refinements} and \ref{subsec:network_arch}.

In our experiments, we evaluate both the standard SP-CACW algorithm and a regularized variant. To further stabilize the collaboration weights, the regularized variant applies the following penalty term:
$$
\mathcal{L}_{\text{reg}}(\alpha) = \lambda_{\text{reg}} \sum_{i=0}^{M-1} (\alpha^{(i)})^3 + \mathcal{L}_{\text{old}}(\alpha).
$$
Further details regarding this algorithmic variant can be found in Appendix~\ref{subsec:alg_refinements}.

\textbf{Baselines.} We compare our proposed method (both the standard and regularized variants) against several established baselines in Personalized Federated Learning: \textbf{FedAMP}~~\cite{huang2021personalized}, \textbf{FC} (Federated Clustering)~~\cite{werner2023provably}, \textbf{Ditto}~~\cite{li2021ditto}, and \textbf{FedDisco}~~\cite{ye2023feddisco}. Furthermore, we benchmark against a \textbf{Local-Only} model trained exclusively on the target client's (Client 0) local data. Finally, we report the performance of an \textbf{Oracle Bound} (an idealized training scenario, such as training exclusively on the aggregated data of a client's true latent cluster not always optimal), which serves as a rigorous empirical upper bound for achievable accuracy.

\textbf{Experimental Protocol.} To ensure statistical robustness, we repeat each experiment across five distinct random seeds. The results presented in the accompanying figures and tables report the mean accuracy across these independent trials, with error bars denoting one standard deviation to account for variance. Further details regarding the variance analysis can be found in Section \ref{sec:var_performance} and compute resources of the experiment in \ref{running_needs} while the algorithm use for date partition can be found in \ref{sec:appendix_partitioning}.
\subsection{Results and Analysis}

To provide a comprehensive overview of our framework's adaptability across varying types of data heterogeneity, we summarize our performance results in Table ~\ref{tab:max_performance}. Notably, the top three performing approaches for each setting are highlighted in red. As shown, our SP-CACW variants consistently secure a top-three position across all evaluated scenarios,

\begin{itemize}
    \item \textbf{MNIST Label-Switching (Concept Shift):} We simulate 7 clients across 3 latent clusters with permuted class labels. \textbf{SP-CACW} demonstrates rapid cluster identification, converging to 94.85\% accuracy. This outperforms baselines by $\sim$2.6\%—a significant margin on a saturated benchmark—and effectively matches the Oracle Bound (95.31\%).
    
    \item \textbf{MNIST Rotation (Feature Shift):} We simulate a continuous manifold of 89 clients with sequential 4° angular rotations. \textbf{SP-CACW} achieves 89.40\% accuracy, demonstrating that our method remains highly competitive even when heterogeneity is continuous rather than distinctly clustered.
    
    \item \textbf{CIFAR-100 (Cluster \& Noise Robustness):} We partition 20 clients into 4 distinct clusters (assigned 25 disjoint classes each) and evaluate under varying intra-cluster noise levels ($\sigma \in \{0.01, 10\}$). Across all noise profiles, SP-CACW and its regularized variant establish a dominant convergence trajectory (reaching up to 52.91\% accuracy). SP-CACW is consistently competitive and outperforms FedAMP/FC/Ditto, while FedDisco is strongest in the high-noise CIFAR setting.
        
    \item \textbf{LEAF Shakespeare (Sequence Heterogeneity):} We evaluate next-character prediction across $n=15, n=30$ clients. Rather than relying on synthetic latent clusters, this natural partition constructs each client's local dataset from the combined dialogue of $m=7 , m=3$ unique Shakespearean characters (speaking roles). Because every role possesses a distinct vocabulary, cadence, and sequence length, this creates a severely non-IID, cluster-free environment. SP-CACW successfully adapts to these complex, highly individualized linguistic patterns (achieving up to 53.64\% accuracy) without assuming any predefined group structure.
\end{itemize}
Full  graphs detailing the optimization trajectories for all experiments can be found in Appendix \ref{sec_graphs}.

\begin{table}[htbp]
\centering
\caption{Maximum mean test accuracy over five random seeds.
Error bars denote one sample standard deviation across seeds. The top three methods
in each column are highlighted in blue. The row labeled “Oracle” is an oracle-cluster
reference, not a formal upper bound, unless all oracle values are verified to exceed the
corresponding learned methods.}
\label{tab:max_performance}
\makebox[\textwidth][c]{
\begin{tabular}{|l|c|c|c|c|c|c|}
\toprule
 & \multicolumn{2}{c|}{cifar100} & \multicolumn{2}{c|}{shakespeare} & \multicolumn{2}{c|}{mnist} \\
 & sigma=10 & sigma=0.01 & n=15 m=7 & n=30 m=3 & rotate & relabel \\
\midrule
SP-CACW & \textcolor{blue}{\textbf{$52.23 \pm 0.81$}} & \textcolor{blue}{\textbf{$51.18 \pm 2.10$}} & \textcolor{blue}{\textbf{$53.64 \pm 0.23$}} & \textcolor{blue}{\textbf{$51.62 \pm 0.26$}} & \textcolor{blue}{\textbf{$89.40 \pm 1.55$}} & \textcolor{blue}{\textbf{$94.85 \pm 0.47$}} \\
SP-CACW reg  & \textcolor{blue}{\textbf{$52.91 \pm 0.45$}} & \textcolor{blue}{\textbf{$51.58 \pm 1.55$}} & \textcolor{blue}{\textbf{$53.67 \pm 0.43$}} & \textcolor{blue}{\textbf{$51.63 \pm 0.27$}} & \textcolor{blue}{\textbf{$89.40 \pm 1.55$}} & \textcolor{blue}{\textbf{$94.84 \pm 0.53$}} \\
FedDisco & \textcolor{blue}{\textbf{$54.03 \pm 0.52$}} & \textcolor{blue}{\textbf{$51.55 \pm 0.80$}} & $51.82 \pm 0.25$ & \textcolor{blue}{\textbf{$49.54 \pm 0.44$}} & $54.41 \pm 5.01$ & $35.14 \pm 4.76$ \\
Ditto & $23.68 \pm 1.63$ & $20.93 \pm 1.07$ & $51.25 \pm 0.21$ & $44.07 \pm 0.38$ & $81.95 \pm 1.78$ & $90.36 \pm 1.89$ \\
FedAMP & $42.94 \pm 0.92$ & $35.92 \pm 0.62$ & $48.56 \pm 0.32$ & $43.56 \pm 0.41$ & $80.84 \pm 2.41$ & $91.21 \pm 0.94$ \\
Fed Cluster & $28.77 \pm 0.62$ & $24.99 \pm 0.34$ & \textcolor{blue}{\textbf{$53.40 \pm 0.16$}} & $48.73 \pm 0.37$ & $84.23 \pm 1.16$ & $92.18 \pm 0.91$ \\
Only My Data Approach & $37.00 \pm 0.96$ & $27.41 \pm 0.99$ & $47.15 \pm 0.42$ & $43.72 \pm 0.33$ & \textcolor{blue}{\textbf{$85.24 \pm 0.72$}} & $92.36 \pm 0.35$ \\
Oracle & $34.87 \pm 1.59$ & $45.19 \pm 0.98$ & - & - & - & \textcolor{blue}{\textbf{$95.31 \pm 0.35$}} \\
\bottomrule
\end{tabular}
}
\end{table}

\section{Conclusion}
\label{sec:conclusion}
In this work, we introduced SP-CACW, a Selfish Federated Learning framework that leverages online bound-minimization to optimize the bias-variance trade-off in heterogeneous networks. By replacing heuristic metrics with principled collaborator selection, and integrating adaptive cubic regularization with bias estimation, SP-CACW effectively prevents negative transfer. Empirical evaluations demonstrate that it is consistently competitive and
often outperforms strong personalized and clustering baselines across MNIST, LEAF Shakespeare, and CIFAR-100. For future work, it is interesting  to extend SP-CACW to Byzantine-resilient settings, securing the collaboration process against malicious updates.
\section*{Acknowledgments}
This work was partially supported by Israel PBC-VATAT, by the Technion Artificial Intelligence Hub (Tech.AI), and by the Israel Science Foundation (grant No. 3109/24).

\bibliography{references}
\newpage
\appendix

\section{Proofs and Details for Section~\ref{sec:phlo}}

\subsection{Proof of Theorem \ref{Thm:convorgent_same_step_none}}
\label{proof_Thm:convorgent_same_step_none}
\begin{proof}
We start by invoking the $L$-smoothness of $f^{(0)}$ (Assumption~\ref{assm_smoothness}). For the update step $w_{t+1} = w_t - \eta g_t$, the smoothness inequality implies:
\begin{align*}
    f^{(0)}(w_{t+1}) &\leq f^{(0)}(w_t) + \langle \nabla f^{(0)}(w_t), w_{t+1} - w_t \rangle + \frac{L}{2} \|w_{t+1} - w_t\|^2 \\
    &= f^{(0)}(w_t) - \eta \langle \nabla f^{(0)}(w_t), g_t \rangle + \frac{L\eta^2}{2} \|g_t\|^2.
\end{align*}

Let $\mathcal{F}_t$ denote the filtration representing the history up to round $t$. We assume the aggregation weights $\alpha_t$ are $\mathcal{F}_t$-measurable (i.e., computed before the stochastic gradients at round $t$ are sampled). We define the expected aggregated gradient and the true population bias as:
\begin{align*}
    \mathbb{E}_t[g_t] &= \sum_{i=0}^{M-1} \alpha_t^{(i)} \nabla f^{(i)}(w_t) \\
    \bar{b}_t(\alpha_t) &:= \sum_{i=0}^{M-1} \alpha_t^{(i)} \left( \nabla f^{(i)}(w_t) - \nabla f^{(0)}(w_t) \right)
\end{align*}
This allows us to cleanly express the expected gradient as $\mathbb{E}_t[g_t] = \nabla f^{(0)}(w_t) + \bar{b}_t(\alpha_t)$. 

Taking the conditional expectation $\mathbb{E}_t[\cdot] = \mathbb{E}[\cdot | \mathcal{F}_t]$ of the smoothness inequality eliminates the stochastic noise from the inner product:
\begin{align*}
    \mathbb{E}_t[f^{(0)}(w_{t+1})] &\leq f^{(0)}(w_t) - \eta \langle \nabla f^{(0)}(w_t), \mathbb{E}_t[g_t] \rangle + \frac{L\eta^2}{2} \mathbb{E}_t\|g_t\|^2 \\
    &= f^{(0)}(w_t) - \eta \langle \nabla f^{(0)}(w_t), \nabla f^{(0)}(w_t) + \bar{b}_t(\alpha_t) \rangle + \frac{L\eta^2}{2} \mathbb{E}_t\|g_t\|^2 \\
    &= f^{(0)}(w_t) - \eta \|\nabla f^{(0)}(w_t)\|^2 - \eta \langle \nabla f^{(0)}(w_t), \bar{b}_t(\alpha_t) \rangle + \frac{L\eta^2}{2} \mathbb{E}_t\|g_t\|^2.
\end{align*}

Using Young's inequality, specifically $-\langle x, y \rangle \leq \frac{1}{2}\|x\|^2 + \frac{1}{2}\|y\|^2$, we bound the noise-free cross-term:
\[
    - \eta \langle \nabla f^{(0)}(w_t), \bar{b}_t(\alpha_t) \rangle \leq \frac{\eta}{2} \|\nabla f^{(0)}(w_t)\|^2 + \frac{\eta}{2} \|\bar{b}_t(\alpha_t)\|^2.
\]

Plugging this bound back yields:
\begin{align*}
    \mathbb{E}_t[f^{(0)}(w_{t+1})] &\leq f^{(0)}(w_t) - \eta \|\nabla f^{(0)}(w_t)\|^2 + \frac{\eta}{2} \|\nabla f^{(0)}(w_t)\|^2 + \frac{\eta}{2} \|\bar{b}_t(\alpha_t)\|^2 + \frac{L\eta^2}{2} \mathbb{E}_t\|g_t\|^2 \\
    &= f^{(0)}(w_t) - \frac{\eta}{2} \|\nabla f^{(0)}(w_t)\|^2 + \frac{\eta}{2} \underbrace{\left( \|\bar{b}_t(\alpha_t)\|^2 + L\eta \mathbb{E}_t\|g_t\|^2 \right)}_{\mathcal{L}_t(\alpha_t)}.
\end{align*}

Rearranging terms to isolate the squared target gradient norm:
\[
    \frac{\eta}{2} \|\nabla f^{(0)}(w_t)\|^2 \leq f^{(0)}(w_t) - \mathbb{E}_t[f^{(0)}(w_{t+1})] + \frac{\eta}{2} \mathcal{L}_t(\alpha_t).
\]

Taking the total expectation and summing over $t=1, \dots, T$:
\[
    \frac{\eta}{2} \sum_{t=1}^T \mathbb{E}\|\nabla f^{(0)}(w_t)\|^2 \leq \sum_{t=1}^T \left( \mathbb{E}[f^{(0)}(w_t)] - \mathbb{E}[f^{(0)}(w_{t+1})] \right) + \frac{\eta}{2} \sum_{t=1}^T \mathbb{E}[\mathcal{L}_t(\alpha_t)].
\]

The first sum on the RHS is a telescoping sum that collapses to $\mathbb{E}[f^{(0)}(w_1)] - \mathbb{E}[f^{(0)}(w_{T+1})]$. Since $w^*$ is a global minimizer, $f^{(0)}(w_{T+1}) \geq f^{(0)}(w^*)$, and thus:
\[
    \sum_{t=1}^T \left( \mathbb{E}[f^{(0)}(w_t)] - \mathbb{E}[f^{(0)}(w_{t+1})] \right) \leq f^{(0)}(w_1) - f^{(0)}(w^*) = \delta_1.
\]

Substituting this initial distance gap bound:
\[
    \frac{\eta}{2} \sum_{t=1}^T \mathbb{E}\|\nabla f^{(0)}(w_t)\|^2 \leq \delta_1 + \frac{\eta}{2} \sum_{t=1}^T \mathbb{E}[\mathcal{L}_t(\alpha_t)].
\]

Finally, multiplying by $\frac{2}{\eta T}$ to obtain the average gradient norm establishes the result:
\[
    \frac{1}{T} \sum_{t=1}^T \mathbb{E}\|\nabla f^{(0)}(w_t)\|^2 \leq \frac{2\delta_1}{\eta T} + \frac{1}{T} \sum_{t=1}^T \mathbb{E}[\mathcal{L}_t(\alpha_t)].
\]
This completes the proof.
\end{proof}
\subsubsection{Proof of lemma \ref{help}}
\begin{lemma}
\label{help}
Let $x, y \ge 0$. If $x \leq y + 4\eta L x$ with $\eta L \leq \frac{1}{8}$, then $x \leq 2y$.
\end{lemma}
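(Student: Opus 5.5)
The plan is a one-line rearrangement. First move the $x$-dependent term on the right to the left. Since $\eta L \le \tfrac18$, we have $4\eta L \le \tfrac12$, and hence $1 - 4\eta L \ge \tfrac12 > 0$. The inequality $x \le y + 4\eta L x$ is then equivalent to $(1-4\eta L)\,x \le y$.

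Next, divide by the positive quantity $1-4\eta L$ to get $x \le y/(1-4\eta L)$. Then use $1/(1-4\eta L) \le 2$, which is the reciprocal of the bound $1-4\eta L \ge \tfrac12$ and is valid because both sides are positive. Since $y \ge 0$, multiplying $y$ by the larger factor $2$ preserves the inequality, so $x \le 2y$.

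There is no genuine obstacle. The only points needing care are that $1-4\eta L$ is strictly positive, so dividing by it does not flip the inequality, and that $y \ge 0$ is used when replacing $1/(1-4\eta L)$ by its upper bound $2$. The hypothesis $x \ge 0$ is not actually needed for the conclusion. I would note this but keep the statement as given, since it will be applied with $x$ a sum of squared norms.
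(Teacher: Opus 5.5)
Your proposal is correct and follows the paper's own argument: rearrange to $(1-4\eta L)x \le y$, use $\eta L \le \tfrac18$ to get $1-4\eta L \ge \tfrac12 > 0$, and divide to obtain $x \le y/(1-4\eta L) \le 2y$. Your version keeps the factor $4$ throughout, whereas the paper's written proof drops it in the intermediate step (it writes $1-\eta L$ and $\eta L \le \tfrac12$), so your write-up is the cleaner one.
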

\begin{proof}
Rearranging the inequality yields $x(1 - \eta L) \leq y$. Since $\eta L \leq \frac{1}{2}$, we have $1 - \eta L \ge \frac{1}{2}$. Dividing by this positive factor gives:
$$
x \leq \frac{y}{1 - 4\eta L} \leq \frac{y}{1/2} = 2y.
$$
\end{proof}
\subsubsection{Proof of lemma \ref{lem:second_moment} }
\begin{lemma}[Second Moment Bound]
\label{lem:second_moment}
Let $g_t^{(i)}$ be the stochastic gradient for client $i \in [M]$ such that $g_t^{(i)} = \nabla f^{(0)}(w_t) + b_t^{(i)} + \xi_t^{(i)}$, where the noise $\xi_t^{(i)}$ has zero mean and variance bounded by $(\sigma^{(i)})^2$. 

For any aggregation weights $\alpha \in \Delta_M$, the conditional expected squared norm of the aggregated gradient is bounded by:
\[
\mathbb{E}_t\left[ \left\| \sum_{i \in [M]} \alpha^{(i)} g_{t}^{(i)} \right\|^2 \right] \leq 2 \|\nabla f^{(0)}(w_t)\|^2 + 2 \left\| \sum_{i \in [M]} \alpha^{(i)} b_{t}^{(i)} \right\|^2 + \sum_{i \in [M]} (\alpha^{(i)})^2 (\sigma^{(i)})^2
\]
\end{lemma}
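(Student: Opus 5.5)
The plan is to split the aggregated gradient into a deterministic part and a zero-mean noise part, so that the conditional second moment separates into a squared-mean term and a variance term. Using $\sum_i \alpha^{(i)} = 1$ (since $\alpha\in\Delta_M$), I would write
\[
\sum_{i\in[M]} \alpha^{(i)} g_t^{(i)} = \underbrace{\nabla f^{(0)}(w_t) + \sum_{i\in[M]}\alpha^{(i)} b_t^{(i)}}_{=:m_t} + \underbrace{\sum_{i\in[M]}\alpha^{(i)}\xi_t^{(i)}}_{=:\zeta_t}.
\]
Here $m_t$ is $\mathcal{F}_t$-measurable, since $w_t$ and the biases $b_t^{(i)}=\nabla f^{(i)}(w_t)-\nabla f^{(0)}(w_t)$ are determined by the history and $\alpha$ is fixed. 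The noise $\zeta_t$ satisfies $\mathbb{E}_t[\zeta_t]=0$.

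Expanding the square and taking $\mathbb{E}_t$, the cross term $2\langle m_t,\mathbb{E}_t[\zeta_t]\rangle$ vanishes. This gives
\[
\mathbb{E}_t\Big\|\sum_i\alpha^{(i)} g_t^{(i)}\Big\|^2 = \|m_t\|^2 + \mathbb{E}_t\|\zeta_t\|^2 .
\]

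For the first term, I apply $\|a+b\|^2\le 2\|a\|^2+2\|b\|^2$ to obtain $\|m_t\|^2 \le 2\|\nabla f^{(0)}(w_t)\|^2 + 2\|\sum_i\alpha^{(i)} b_t^{(i)}\|^2$.

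For the noise term, I expand $\|\zeta_t\|^2=\sum_{i,j}\alpha^{(i)}\alpha^{(j)}\langle \xi_t^{(i)},\xi_t^{(j)}\rangle$. The off-diagonal terms have zero conditional expectation because the samples $\xi_t^{(i)}$ are independent across clients given $\mathcal{F}_t$ and each has zero conditional mean. The diagonal terms are bounded by Assumption~\ref{assm_bound_var}, which gives $\mathbb{E}_t\|\zeta_t\|^2\le\sum_i(\alpha^{(i)})^2(\sigma^{(i)})^2$. Summing the two bounds yields the claim.

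The only delicate step is the vanishing of the cross terms. It requires independence across clients, conditional on the current iterate, so that $\mathbb{E}_t\langle\xi_t^{(i)},\xi_t^{(j)}\rangle=\langle\mathbb{E}_t\xi_t^{(i)},\mathbb{E}_t\xi_t^{(j)}\rangle=0$ for $i\ne j$. This is exactly the independence assumption stated in the setup. Everything else is routine.
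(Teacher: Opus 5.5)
Your proposal is correct and follows essentially the same route as the paper's proof. You use $\sum_i\alpha^{(i)}=1$ to write the aggregated gradient as its conditional mean $\nabla f^{(0)}(w_t)+\sum_i\alpha^{(i)}b_t^{(i)}$ plus zero-mean noise, bound the squared mean with $\|x+y\|^2\le 2\|x\|^2+2\|y\|^2$, and bound the noise term by $\sum_i(\alpha^{(i)})^2(\sigma^{(i)})^2$ using independence across clients, where your expansion of the off-diagonal terms simply spells out the paper's ``variance of a sum of independent terms'' step.
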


\begin{proof}
Let $G_t(\alpha) = \sum_{i \in [M]} \alpha^{(i)} g_t^{(i)}$. Using the standard bias-variance decomposition, we have $\mathbb{E}_t[\|G_t(\alpha)\|^2] = \|\mathbb{E}_t[G_t(\alpha)]\|^2 + \text{Var}(G_t(\alpha))$.

\textbf{1. Variance Term:} \\
Assuming independence of the stochastic noise across clients, the variance of the weighted sum is the sum of the individual variances. Applying the bounded noise assumption:
\[
\text{Var}(G_t(\alpha)) = \sum_{i \in [M]} (\alpha^{(i)})^2 \mathbb{E}_t\|\xi_t^{(i)}\|^2 \leq \sum_{i \in [M]} (\alpha^{(i)})^2 (\sigma^{(i)})^2
\]

\textbf{2. Expectation Term:} \\
Because the noise is zero-mean ($\mathbb{E}_t[\xi_t^{(i)}] = 0$) and the weights reside on the probability simplex ($\sum_{i} \alpha^{(i)} = 1$), the target gradient factors out cleanly:
\[
\mathbb{E}_t[G_t(\alpha)] = \sum_{i \in [M]} \alpha^{(i)} \left( \nabla f^{(0)}(w_t) + b_t^{(i)} \right) = \nabla f^{(0)}(w_t) + \sum_{i \in [M]} \alpha^{(i)} b_t^{(i)}
\]
Applying the elementary inequality $\|x+y\|^2 \leq 2\|x\|^2 + 2\|y\|^2$, we bound the squared norm of this expectation:
\[
\|\mathbb{E}_t[G_t(\alpha)]\|^2 \leq 2\|\nabla f^{(0)}(w_t)\|^2 + 2\left\| \sum_{i \in [M]} \alpha^{(i)} b_t^{(i)} \right\|^2
\]
Summing the bounded variance and expectation terms completes the proof.
\end{proof}


\subsection{Proof of Lemma \ref{lem:optimal_weights}}
\label{proof_optimal_whights}
\begin{proof}
    
We minimize the objective function subject to simplex constraints:
\[
\min_{\alpha} \quad 
C_1 \sum_{i=0}^{M-1} \alpha^{(i)} \|b^{(i)}\|^2 + C_2 \sum_{i=0}^{M-1} (\alpha^{(i)})^2 (\sigma^{(i)})^2,
\]
subject to $\alpha^{(i)} \ge 0$ for all $i$, and $\sum_{i=0}^{M-1} \alpha^{(i)} = 1$.

The Lagrangian for this problem is:
\[
\mathcal{L}(\alpha, \lambda, \mu) = 
C_1 \sum_{i=0}^{M-1} \alpha^{(i)} \|b^{(i)}\|^2 + C_2 \sum_{i=0}^{M-1} (\alpha^{(i)})^2 (\sigma^{(i)})^2
- \lambda \left(\sum_{i=0}^{M-1} \alpha^{(i)} - 1\right) - \sum_{i=0}^{M-1} \mu_i \alpha^{(i)},
\]
where $\lambda$ is the multiplier for the equality constraint and $\mu_i \ge 0$ correspond to the inequality constraints. The KKT stationarity condition with respect to $\alpha^{(i)}$ is:
\[
\frac{\partial \mathcal{L}}{\partial \alpha^{(i)}} = 
C_1 \|b^{(i)}\|^2 + 2 C_2 (\sigma^{(i)})^2 \alpha^{(i)} - \lambda - \mu_i = 0.
\]
Using the complementary slackness condition $\mu_i \alpha^{(i)} = 0$, we analyze two cases:
\begin{itemize}
    \item \textbf{Case 1 ($\alpha^{(i)} > 0$):} Here $\mu_i = 0$. The stationarity condition yields:
    \[
    2 C_2 (\sigma^{(i)})^2 \alpha^{(i)} = \lambda - C_1 \|b^{(i)}\|^2 
    \quad \implies \quad 
    \alpha^{(i)} = \frac{\lambda - C_1 \|b^{(i)}\|^2}{2 C_2 (\sigma^{(i)})^2}.
    \]
    For this to be consistent with $\alpha^{(i)} > 0$, we must have $\lambda > C_1 \|b^{(i)}\|^2$.
    
    \item \textbf{Case 2 ($\alpha^{(i)} = 0$):} Here $\mu_i \ge 0$. The stationarity condition implies $\lambda - C_1 \|b^{(i)}\|^2 = -\mu_i \le 0$, so $\lambda \le C_1 \|b^{(i)}\|^2$.
\end{itemize}

Combining these cases, we can express the optimal weights compactly using the operator $[z]_+ = \max\{0, z\}$:
\[
\alpha^{(i)*} = \left[ \frac{\lambda - C_1 \|b^{(i)}\|^2}{2 C_2 (\sigma^{(i)})^2} \right]_+.
\]
Finally, the value of $\lambda$ is determined by substituting these weights into the equality constraint:
\[
\sum_{i=0}^{M-1} \left[ \frac{\lambda - C_1 \|b^{(i)}\|^2}{2 C_2 (\sigma^{(i)})^2} \right]_+ = 1.
\]
Since the Left-Hand Side is a continuous, strictly increasing piecewise linear function of $\lambda$ (for $\lambda$ large enough to activate at least one weight), there exists a unique $\lambda$ satisfying this equation.
\end{proof}

\newpage
\section{Proof of lemma \ref{thm:loss_bound_recursive} under Stationary Bias}
\label{proof_thm:loss_bound_recursive_stationary}

\subsection{Proof of lemma \ref{lem:bias_estimation} (Stationary Case)}
\label{proof_lem:bias_estimation_stationary}

Before proving the estimator's accuracy, we explicitly define the bounded variance of the bias observation. The observation $\tilde{b}_t^{(i)} = g_t^{(i)} - g_t^{(0)}$ contains independent zero-mean stochastic gradient noise from both the local client $i$ and the target client $0$. Consequently, the variance is bounded by:
\[
    \mathbb{E}_t\|\tilde{b}_t^{(i)} - b_t^{(i)}\|^2 \leq (\sigma^{(i)})^2 + (\sigma^{(0)})^2 \leq \sigma_{\mathrm{bound}}^2
\]
where we define the global worst-case noise threshold as $\sigma_{\mathrm{bound}}^2 := \max_{i \neq 0} \left( (\sigma^{(i)})^2 + (\sigma^{(0)})^2 \right)$.

\begin{lemma}[Bias Estimation Error under Stationary Bias]
     \label{lem:bias_estimation}
     Let Assumption \ref{assm_bound_var} hold, and assume the true population bias is strictly stationary across time steps ($b_k^{(i)} = b_t^{(i)} = b^{(i)}$). Suppose the bias estimates for client $i$ are updated via the recurrence:
     \[
        \widehat{b}_t^{(i)} = (1 - \beta_t)\widehat{b}_{t-1}^{(i)} + \beta_t \tilde{b}_t^{(i)},
     \]
     where $\beta_t = 1/t$. The expected squared estimation error is tightly bounded by:
     \[
        \mathbb{E}\left[ \| \widehat{b}_t^{(i)} - b^{(i)} \|^2 \right] \leq \frac{\sigma_{\mathrm{bound}}^2}{t}.
     \]
\end{lemma}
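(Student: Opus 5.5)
With $\beta_t = 1/t$, the recursion is exactly the running sample mean of the observations, and the bound follows from a variance computation for a martingale-difference average. We show first, by induction on $t$, that
\[
\widehat{b}_t^{(i)} = \frac{1}{t}\sum_{k=1}^t \tilde{b}_k^{(i)}.
\]
For $t=1$ we have $\beta_1 = 1$, so $\widehat{b}_1^{(i)} = \tilde{b}_1^{(i)}$ and the initialization $\widehat{b}_0^{(i)}$ drops out. For the inductive step,
\[
\widehat{b}_t^{(i)} = \tfrac{t-1}{t}\widehat{b}_{t-1}^{(i)} + \tfrac{1}{t}\tilde{b}_t^{(i)} = \tfrac{1}{t}\sum_{k=1}^{t} \tilde{b}_k^{(i)}.
\]
Using stationarity $b_k^{(i)} = b^{(i)}$, the error can then be written as
\[
\widehat{b}_t^{(i)} - b^{(i)} = \frac{1}{t}\sum_{k=1}^t \zeta_k, \qquad \zeta_k := \tilde{b}_k^{(i)} - b^{(i)}.
\]

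Next we establish the properties of the noise terms $\zeta_k$. Each $\zeta_k$ equals $(g_k^{(i)} - \nabla f^{(i)}(w_k)) - (g_k^{(0)} - \nabla f^{(0)}(w_k))$, because $b^{(i)} = \nabla f^{(i)}(w_k) - \nabla f^{(0)}(w_k)$. Conditioned on $\mathcal{F}_k$, the iterate $w_k$ is fixed, so $\mathbb{E}_k[\zeta_k] = 0$. The samples of client $i$ and client $0$ are independent, so the cross term vanishes, and Assumption~\ref{assm_bound_var} then gives
\[
\mathbb{E}_k\|\zeta_k\|^2 \le (\sigma^{(i)})^2 + (\sigma^{(0)})^2 \le \sigma_{\mathrm{bound}}^2.
\]
This is the bound already recorded just before the lemma.

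Finally we expand the squared norm of the sum:
\[
\mathbb{E}\Big\|\sum_{k=1}^t \zeta_k\Big\|^2 = \sum_{k=1}^t \mathbb{E}\|\zeta_k\|^2 + 2\sum_{j<k}\mathbb{E}\langle \zeta_j, \zeta_k\rangle.
\]
The main obstacle is the cross terms. The $\zeta_k$ are not independent, since $w_k$ depends on earlier samples, so we cannot just cite independence. We handle this with the tower property. For $j<k$, $\zeta_j$ is $\mathcal{F}_k$-measurable because it is determined by the samples drawn at round $j$, so
\[
\mathbb{E}\langle \zeta_j, \zeta_k\rangle = \mathbb{E}\langle \zeta_j, \mathbb{E}_k[\zeta_k]\rangle = 0.
\]
Here $\mathcal{F}_k$ must be taken to include all samples through round $k-1$, so that $w_k$ is fixed and $\zeta_j$ is known; this is consistent with the filtration used in the proof of Theorem~\ref{Thm:convorgent_same_step_none}.

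Hence $\mathbb{E}\|\sum_k \zeta_k\|^2 \le t\,\sigma_{\mathrm{bound}}^2$. Dividing by $t^2$ gives $\mathbb{E}\|\widehat{b}_t^{(i)} - b^{(i)}\|^2 \le \sigma_{\mathrm{bound}}^2/t$, as claimed.
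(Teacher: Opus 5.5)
Your proposal is correct and follows the paper's route: rewrite the $\beta_t = 1/t$ recursion as the running mean, then bound the second moment of the averaged zero-mean noise by $t\,\sigma_{\mathrm{bound}}^2/t^2$. Your justification that the cross terms vanish is more careful than the paper's. The paper appeals to independence of the noise across steps, which fails because $w_k$ depends on earlier samples; your tower-property (martingale-difference) argument is what the step actually requires.
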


\begin{proof}
For the schedule $\beta_t = 1/t$, the update rule $\widehat{b}_t^{(i)} = \frac{t-1}{t}\widehat{b}_{t-1}^{(i)} + \frac{1}{t}\tilde{b}_t^{(i)}$ is mathematically equivalent to computing the simple cumulative average of all observed gradients up to time $t$:
\[
    \widehat{b}_t^{(i)} = \frac{1}{t} \sum_{k=1}^t \tilde{b}_k^{(i)}.
\]
Because the population bias is stationary, the expected value of the observation is exactly the true bias ($\mathbb{E}[\tilde{b}_k^{(i)}] = b^{(i)}$). Thus, the estimation error is purely driven by zero-mean stochastic noise:
\[
    \widehat{b}_t^{(i)} - b^{(i)} = \frac{1}{t} \sum_{k=1}^t (\tilde{b}_k^{(i)} - b^{(i)}).
\]
Taking the expected squared norm, and utilizing the fact that the stochastic noise at each step $k$ is independent, the cross-terms vanish. The expectation of the squared sum becomes the sum of the variances:
\[
    \mathbb{E}\left[ \left\| \frac{1}{t} \sum_{k=1}^t (\tilde{b}_k^{(i)} - b^{(i)}) \right\|^2 \right] = \frac{1}{t^2} \sum_{k=1}^t \mathbb{E}[\|\tilde{b}_k^{(i)} - b^{(i)}\|^2].
\]
Applying the baseline variance bound yields the exact convergence rate of the estimator:
\[
    \frac{1}{t^2} \sum_{k=1}^t \mathbb{E}[\|\tilde{b}_k^{(i)} - b^{(i)}\|^2] \leq \frac{1}{t^2} (t \sigma_{\mathrm{bound}}^2) = \frac{\sigma_{\mathrm{bound}}^2}{t}.
\]
This concludes the proof.
\end{proof}

\subsection{Proof of the Cumulative Tracking Penalty}
\label{proof_cumulative_penalty}
Let $B \in \mathbb{R}^{d \times M}$ denote the true stationary population bias matrix, and let $\widehat{B}_t \in \mathbb{R}^{d \times M}$ be its empirical estimate at time step $t$. Let $\alpha_t \in \Delta_M$ represent the active weight vector residing on the $M$-dimensional probability simplex. Assuming the column-wise estimation error, defined as $\xi_t^{(i)} = b^{(i)} - \widehat{b}_t^{(i)}$, satisfies $\mathbb{E}\|\xi_t^{(i)}\|^2 \leq \frac{\sigma_{\mathrm{bound}}^2}{t}$ for all $i \in \{0, \dots, M-1\}$, the expected cumulative true bias loss over $T$ steps is bounded by
\begin{proof}
To ensure rigorous dimensionality, we define the true stationary population bias matrix and its empirical estimate at step $t$ as:
\[
    B := \begin{bmatrix} b^{(0)} & b^{(1)} & \cdots & b^{(M-1)} \end{bmatrix} \in \mathbb{R}^{d \times M}, \quad 
    \widehat{B}_t := \begin{bmatrix} \widehat{b}_t^{(0)} & \widehat{b}_t^{(1)} & \cdots & \widehat{b}_t^{(M-1)} \end{bmatrix} \in \mathbb{R}^{d \times M}
\]
We decompose the true bias matrix as $B = \widehat{B}_t + \Xi_t$, where $\Xi_t = \begin{bmatrix} \xi_t^{(0)} & \cdots & \xi_t^{(M-1)} \end{bmatrix}$ represents the column-wise estimation error. Our goal is to bound the cumulative true bias loss for the active weights $\alpha_t \in \Delta_M$:
\[
    L = \sum_{t=1}^{T} \|B \alpha_t\|^2 = \sum_{t=1}^{T} \|(\widehat{B}_t + \Xi_t)\alpha_t\|^2.
\]
Using the elementary inequality $\|x+y\|^2 \leq 2\|x\|^2 + 2\|y\|^2$, we bound the aggregated term at each step $t$:
\[
    \|B \alpha_t\|^2 \leq 2 \|\widehat{B}_t \alpha_t\|^2 + 2 \|\Xi_t \alpha_t\|^2.
\]
Taking the expectation and summing over $t=1$ to $T$:
\[
    \mathbb{E}\left[\sum_{t=1}^{T} \|B \alpha_t\|^2\right] 
    \leq 2 \mathbb{E}\left[\sum_{t=1}^{T} \|\widehat{B}_t \alpha_t\|^2\right] + 2 \sum_{t=1}^{T} \mathbb{E}\left[\|\Xi_t \alpha_t\|^2\right].
\]

We now focus on bounding the tracking penalty $\mathcal{E}_T = 2 \sum_{t=1}^{T} \mathbb{E}[\|\Xi_t \alpha_t\|^2]$. Because the weight vector $\alpha_t$ resides strictly on the probability simplex ($\sum_i \alpha_t^{(i)} = 1, \alpha_t^{(i)} \ge 0$) and the squared $L_2$-norm is convex, we apply Jensen's inequality:
\[
    \|\Xi_t \alpha_t\|^2 = \left\| \sum_{i=0}^{M-1} \alpha_t^{(i)} \xi_t^{(i)} \right\|^2 \leq \sum_{i=0}^{M-1} \alpha_t^{(i)} \|\xi_t^{(i)}\|^2.
\]
Taking the expectation, we can upper-bound this sum by the maximum expected client error:
\[
    \mathbb{E}\|\Xi_t \alpha_t\|^2 \leq \sum_{i=0}^{M-1} \alpha_t^{(i)} \mathbb{E}\|\xi_t^{(i)}\|^2 \leq \max_{i \in [M]} \mathbb{E}\|\xi_t^{(i)}\|^2.
\]
Substituting the exact stationary variance bound from Lemma \ref{lem:bias_estimation} ($\mathbb{E}\|\xi_t^{(i)}\|^2 \leq \frac{\sigma_{\mathrm{bound}}^2}{t}$):
\[
    \mathcal{E}_T \leq 2 \sum_{t=1}^{T} \frac{\sigma_{\mathrm{bound}}^2}{t} = 2 \sigma_{\mathrm{bound}}^2 \sum_{t=1}^{T} \frac{1}{t}.
\]
Using the harmonic series bound $\sum_{t=1}^T \frac{1}{t} \leq 1 + \log T$, we obtain the final tight penalty:
\[
    \mathcal{E}_T \leq 2 \sigma_{\mathrm{bound}}^2 (1 + \log T).
\]
Because the underlying population distribution is stationary, the total tracking overhead collapses entirely to the vanishing stochastic variance, yielding an optimal $\mathcal{O}(\frac{\log T}{T})$ estimation bound.
\end{proof}
\section{Derivation of Baseline Rates Regarding Cluster Example in Section ~\ref{sec_improv_example}}
\label{app:baselines}

In this section, we derive the convergence rates for the baselines using the general bound provided in Theorem \ref{Thm:convorgent_same_step_none}.
Recall that the theorem bounds the convergence rate by:
\begin{equation}
    \frac{1}{T}\sum_{t=1}^T\E \|\nabla f^{(0)}(w_t) \|^2 \leq \frac{2}{T}\left(\frac{\delta_1}{\eta} +\frac{1}{2}\sum_{t=1}^T \E [\mathcal{L}_t(\alpha)] \right)
\end{equation}
where the cost function term effectively decomposes into variance and bias components: $\E[\mathcal{L}_t(\alpha)] \approx \eta \left( \sum_{i=0}^{M-1} (\alpha^{(i)})^2 \sigma^2 + \text{Bias}(\alpha) \right)$.

\begin{lemma}[Autarky Convergence]
\label{lemma:autarky}
In the Autarky setting (training only on the target client 0), the convergence rate is:
$$ \text{err}_T^{\text{Autarky}} = O\left( \frac{\sigma}{\sqrt{T}} \right). $$
\end{lemma}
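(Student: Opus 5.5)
We specialize Theorem~\ref{Thm:convorgent_same_step_none} to the fixed weight vector $\alpha = e_0$, i.e.\ $\alpha^{(0)}=1$ and $\alpha^{(i)}=0$ for $i\neq 0$. This choice is exactly Autarky: the update \eqref{eq:update_rule} becomes plain SGD on client~$0$'s own gradients. The theorem then gives
\[
\frac{1}{T}\sum_{t=1}^T \E\|\nabla f^{(0)}(w_t)\|^2 \le \frac{2\delta_1}{\eta T} + \frac{1}{T}\sum_{t=1}^T \E[\L_t(e_0)] .
\]
The plan is to evaluate $\E[\L_t(e_0)]$ term by term, close the resulting self-referential inequality, and then choose $\eta$.

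\textbf{Step 1: bound the cost term.} Since $b_t^{(0)}=0$, the bias part of $\L_t(e_0)$ vanishes, leaving
\[
\E[\L_t(e_0)] = \eta L\, \E\|g_t^{(0)}\|^2 .
\]
Lemma~\ref{lem:second_moment} with $\alpha=e_0$ then yields
\[
\E\|g_t^{(0)}\|^2 \le 2\E\|\nabla f^{(0)}(w_t)\|^2 + \sigma^2 .
\]
The bias-variance decomposition actually gives the sharper constant $1$ in front of the gradient norm, since the mean is exactly $\nabla f^{(0)}(w_t)$, but $2$ suffices.

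\textbf{Step 2: absorb the gradient term.} Substituting Step~1 into the theorem, with $x := \frac1T\sum_t \E\|\nabla f^{(0)}(w_t)\|^2$, gives
\[
x \le \frac{2\delta_1}{\eta T} + \eta L\sigma^2 + 2\eta L\, x .
\]
The main obstacle is this appearance of $\|\nabla f^{(0)}(w_t)\|^2$ on the right-hand side. Requiring $\eta L \le 1/8$ gives $1-2\eta L \ge 1/2$, so this is the mechanism of Lemma~\ref{help} (whose constant $4\eta L$ also dominates our $2\eta L$). We conclude
\[
x \le 2\left(\frac{2\delta_1}{\eta T} + \eta L \sigma^2\right).
\]

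\textbf{Step 3: tune the step size.} Take
\[
\eta = \min\Big\{\frac{1}{8L},\ \sqrt{\frac{2\delta_1}{L\sigma^2 T}}\Big\}.
\]
When the second branch is active, the two terms balance and the bound is $O\big(\sqrt{\delta_1 L}\,\sigma/\sqrt{T}\big)$. When the first branch is active, the bound is $O(\delta_1 L/T)$, which is lower order. Treating $\delta_1$ and $L$ as constants, this gives $\text{err}_T^{\text{Autarky}} = O(\sigma/\sqrt{T})$.
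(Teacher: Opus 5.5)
Your proof is correct and follows the same route as the paper's: specialize Theorem~\ref{Thm:convorgent_same_step_none} to $\alpha=e_0$, observe that the bias term vanishes, bound the remaining cost by the variance, and tune $\eta$. Your version is more careful than the paper's, which asserts $\E[\L_t(\alpha)]=O(\eta\sigma^2)$, silently drops the $\eta L\|\nabla f^{(0)}(w_t)\|^2$ contribution to $\E\|g_t^{(0)}\|^2$, and uses the unclamped choice $\eta\asymp 1/(\sigma\sqrt{T})$; your absorption step under $\eta L\le 1/8$ and your clamped step size (mirroring the paper's proof of Lemma~\ref{lem:cluster_convergence}) are exactly what makes that argument rigorous.
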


\begin{proof}
In Autarky, the weights are set to isolate the target distribution. Specifically, $\alpha^{(0)} = 1$  and $0$ otherwise.
Substituting these weights into the cost term:
\begin{itemize}
    \item \textbf{Variance:} $\sum (\alpha^{(i)})^2 \sigma^2 =  \sigma^2$.
    \item \textbf{Bias:} Since only target clients are used, the bias is zero: $\|\sum \alpha^{(i)} \nabla f^{(i)} - \nabla f^{(0)}\|^2 = 0$.
\end{itemize}
Thus, $\E [\mathcal{L}_t(\alpha)] = O(\eta \sigma^2)$. Substituting this into Theorem \ref{Thm:convorgent_same_step_none}:
$$ \text{err}_T \leq \frac{2\delta_1}{\eta T} + O(\eta \sigma^2). $$
Optimizing the step size $\eta \asymp \frac{1}{\sigma \sqrt{T}}$ yields the rate $O(\frac{\sigma}{\sqrt{T}})$.
\end{proof}

\begin{lemma}[Standard FL Convergence]
\label{lemma:fl_convergence}
In the Standard Federated Learning setting (uniform averaging), the convergence rate is:
$$ \text{err}_T^{\text{FL}} = O\left(\|b\|^2 + \frac{\sigma}{\sqrt{MT}} \right). $$
\end{lemma}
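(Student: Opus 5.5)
The plan mirrors the Autarky derivation in Lemma~\ref{lemma:autarky}: plug the uniform weights $\alpha^{(i)} = 1/M$ for all $i\in[M]$ into Theorem~\ref{Thm:convorgent_same_step_none}, bound $\E[\L_t(\alpha)]$, and optimize over $\eta$.

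\textbf{Step 1: bound the cost.} Split $\L_t(\alpha)$ into its bias and second-moment parts.

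The bias part is $\|\sum_i \tfrac{1}{M} b_t^{(i)}\|^2$. In the cluster model, $b^{(i)}=0$ on $I_0$ and $b^{(i)}=b$ on $I_{\text{off}}$, so this equals $\big(\tfrac{|I_{\text{off}}|}{M}\big)^2\|b\|^2 \le \|b\|^2$. In general, Jensen's inequality gives the bound $\tfrac1M\sum_i\|b^{(i)}\|^2$, which is $O(\|b\|^2)$.

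For the second part, apply Lemma~\ref{lem:second_moment} with $\alpha^{(i)}=1/M$ and $\sigma^{(i)}=\sigma$:
\[
\eta L\,\E_t\Big\|\tfrac1M\textstyle\sum_i g_t^{(i)}\Big\|^2 \le 2\eta L\|\nabla f^{(0)}(w_t)\|^2 + 2\eta L\|b\|^2 + \eta L\,\frac{\sigma^2}{M}.
\]
The variance term uses $\sum_i (1/M)^2\sigma^2 = \sigma^2/M$, which is where the $1/M$ gain comes from.

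\textbf{Step 2: absorb the gradient term.} Substituting into Theorem~\ref{Thm:convorgent_same_step_none} (in its final form, $\frac{2\delta_1}{\eta T}+\frac1T\sum_t\E\L_t$) gives
\[
\text{err}_T \le \frac{2\delta_1}{\eta T} + (1+2\eta L)\|b\|^2 + \frac{\eta L\sigma^2}{M} + \frac{2\eta L}{T}\sum_t \E\|\nabla f^{(0)}(w_t)\|^2 .
\]
The last term reappears on the left-hand side. Choosing $\eta L\le 1/8$ and invoking Lemma~\ref{help} with $x=\text{err}_T$ moves it over at the cost of a factor $2$. The result is
\[
\text{err}_T \le O\Big(\frac{\delta_1}{\eta T} + \|b\|^2 + \frac{\eta L \sigma^2}{M}\Big).
\]

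\textbf{Step 3: tune the step size.} Set $\eta \asymp \min\{\tfrac{1}{8L}, \sqrt{M\delta_1/(L\sigma^2 T)}\}$. This balances the first and third terms at $O(\sigma/\sqrt{MT})$, treating $\delta_1$ and $L$ as constants as the paper does elsewhere. The $1/(8L)$ branch adds only an $O(1/T)$ term. The bias term does not depend on $\eta$, so it remains as the irreducible $O(\|b\|^2)$ floor. This gives $\text{err}_T^{\text{FL}}=O(\|b\|^2+\sigma/\sqrt{MT})$.

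\textbf{Main obstacle.} The delicate point is Step 2: the self-referential $\|\nabla f^{(0)}(w_t)\|^2$ term inside $\L_t$ has to be absorbed rather than left in the bound. This is why the step-size restriction $\eta L\le 1/8$ is needed. Beyond that, the only care required is to state explicitly that the constants $L,\delta_1$ are suppressed in the $O(\cdot)$.
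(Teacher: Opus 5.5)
Your proposal is correct and follows the same route as the paper: substitute $\alpha^{(i)}=1/M$ into Theorem~\ref{Thm:convorgent_same_step_none}, obtain the $\sigma^2/M$ variance and an $\eta$-independent $O(\|b\|^2)$ bias floor, then tune $\eta$. It is more careful than the paper's sketch, which drops the $2\eta L\|\nabla f^{(0)}(w_t)\|^2$ term coming from Lemma~\ref{lem:second_moment} and writes the bias contribution as $O(\eta b)$, even though the bias enters $\L_t$ with coefficient $1$; your absorption via Lemma~\ref{help} with the clamped step $\eta\le 1/(8L)$, mirroring Lemma~\ref{lem:cluster_convergence}, handles both points correctly (the leftover $O(1/T)$ term is absorbed into $\|b\|^2$ because the pure-cluster setup takes $\|b\|$ to be of order one).
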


\begin{proof}
For Standard FL, the server assigns uniform weights $\alpha^{(i)} = \frac{1}{M}$ for all $i \in [M]$. Substituting these weights into the cost term:
\begin{itemize}
    \item \textbf{Variance:} The effective variance reduces due to averaging over $M$ clients:
    $$ \sum_{i=0}^{M-1} (\alpha^{(i)})^2 \sigma^2 = \sum_{i=0}^{M-1} \frac{1}{M^2} \sigma^2 = \frac{M}{M^2}\sigma^2 = \frac{\sigma^2}{M}. $$
    \item \textbf{Bias:} The weights include off-target clients, introducing a non-zero bias. Based on the "Pure Clusters" scenario, the bias term is proportional to the distributions shift $b$:
    $$ \text{Bias}(\alpha) = \left\| \sum_{i=0}^{M-1} \frac{1}{M} \nabla f^{(i)}(w) - \nabla f^{(0)} (w) \right\|^2 = O(\|b\|^2). $$
\end{itemize}
Thus, the cost term is $\E [\mathcal{L}_t(\alpha)] = O(\eta [\frac{\sigma^2}{M} + b])$. Substituting this into Theorem \ref{Thm:convorgent_same_step_none}:
$$ \text{err}_T \leq \frac{2\delta_1}{\eta T} + O\left(\eta \frac{\sigma^2}{M}\right) + O(\eta b). $$
Optimizing $\eta \asymp \frac{1}{\sqrt{T}}$ balances the first two terms to $O(\frac{\sigma}{\sqrt{MT}})$, but the bias term $O(b)$ remains constant (or dominates depending on $\eta$), leading to the final rate $O(\|b\|^2 + \frac{\sigma}{\sqrt{MT}})$.
\end{proof}
\newpage

\subsection{Comparison with Federated-Clustering (FC)}
\label{app:fc_comparison}

In this section, we analyze the convergence rate of the Federated-Clustering (FC) algorithm ~\cite{werner2023provably} in the \textbf{Example 1: Pure Clusters} setting described in Section~\ref{sec_improv_example}.

\begin{lemma}[Federated-Clustering Convergence in Pure Clusters]
Consider the "Pure Clusters" setting where the target cluster $I_0$ has size $n_0$. Assume the inter-cluster separation is $\Delta$ and there are no malicious clients ($\beta=0$). The convergence rate of the Federated-Clustering (FC) algorithm is bounded by:
$$ \text{err}_T^{\text{FC}} := O\left( \sqrt{\frac{\sigma^2}{n_0 T} + \frac{\sigma^3}{\Delta T}} \right). $$
\end{lemma}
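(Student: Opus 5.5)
This lemma is about an external algorithm (FC of \cite{werner2023provably}), so I would not re-derive its analysis from scratch. Instead I would specialize the main convergence theorem of that paper to the Pure Clusters instance of Section~\ref{sec_improv_example}. Recall its general form for non-convex smooth objectives. After $T$ rounds with momentum-based, clustering-by-threshold aggregation, each good client in a cluster of size $n$ attains
\[
\frac{1}{T}\sum_{t=1}^T \E\|\nabla f^{(0)}(w_t)\|^2 \lesssim \sqrt{\frac{\sigma^2}{nT}\Bigl(1+\beta\,n\Bigr)} + \frac{\sigma^2}{\Delta^2}\cdot(\text{clustering-error term}) + \frac{1}{T}(\text{initialization terms}).
\]
Here $\beta$ is the fraction of Byzantine/misclustered peers and $\Delta$ is the separation between cluster gradient means. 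The first step is to fix the dictionary between the two settings. We take $n=n_0$ and $\beta=0$, since there are no malicious clients. The separation is $\Delta=\|b\|$, because off-target clients differ from the target only by the constant bias $b$ and target-cluster clients have $b^{(i)}=0$. All clients share variance $\sigma^2$.

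Next I would isolate the error FC incurs from its thresholding-based cluster identification. FC decides membership by comparing momentum differences $\|m_t^{(i)}-m_t^{(0)}\|$ against a threshold $\tau\asymp\Delta$. An off-cluster peer is wrongly included, or an in-cluster peer wrongly excluded, only when the momentum noise exceeds order $\Delta$. With momentum parameter $\beta_m$, the momentum noise has variance $\lesssim \beta_m\sigma^2$. A Chebyshev/Markov argument then bounds the per-round misclassification mass by $\beta_m\sigma^2/\Delta^2$. Each such error injects bias of size at most $\Delta$, but only in a clipped fashion. The resulting extra term in the squared-gradient bound is $\lesssim \tau\cdot\sqrt{\beta_m}\sigma\cdot(\sigma/\Delta)$. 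Summed and divided by $T$ with the paper's momentum schedule $\beta_m\asymp 1/T$-type scaling, this yields the contribution $\sigma^3/(\Delta T)$ inside the square root.

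Finally I would combine the variance term $\sigma^2/(n_0T)$, from averaging $n_0$ correctly identified peers after optimizing $\eta$ as in Lemma~\ref{lemma:autarky}, with the clustering term. Using $\sqrt{a}+\sqrt{b}\le\sqrt{2(a+b)}$ absorbs them under a single square root, giving $O\bigl(\sqrt{\sigma^2/(n_0T)+\sigma^3/(\Delta T)}\bigr)$.

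I expect the main obstacle to be the second step: getting exactly the $\sigma^3/(\Delta T)$ scaling. This requires tracking how the threshold $\tau$, the momentum parameter, and the step size are tuned jointly in \cite{werner2023provably}. I would first try to read the term off directly from their main theorem with $\beta=0$ plugged in. Only if their constants are stated differently would I redo the Chebyshev misclassification bound above and re-optimize $\eta$.
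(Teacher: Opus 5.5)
Your top-level plan, specializing the non-convex guarantee of \cite{werner2023provably} with $n_i=n_0$ and $\beta_i=0$, is what the paper does. However, the version of that theorem you recall is not the one the paper uses, and the step you build on it fails. The paper quotes Theorem 3 of \cite{werner2023provably} as
\[
\frac{1}{T}\sum_{t=1}^T \E\|\nabla f_i(x_{i,t-1})\|^2 \le \sqrt{\frac{\max(1,A^2)\bigl(\sigma^2/n_i+\sigma^3/\Delta+\beta_i\sigma\Delta\bigr)}{T}},
\]
so the $\sigma^3/\Delta$ term is already under the square root. The whole proof is the substitution $n_i=n_0$, then $\beta_i=0$ (which kills $\beta_i\sigma\Delta$), then $\max(1,A^2)=O(1)$. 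No misclassification analysis, separate $\eta$-tuning, or $\sqrt{a}+\sqrt{b}\le\sqrt{2(a+b)}$ step is needed. Your recalled form has a $\sqrt{1+\beta n}$ factor and a $\sigma^2/\Delta^2$ term outside the root, and it does not contain this term. That is why you were pushed into your second step.

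That second step does not produce the claimed rate. With $\tau\asymp\Delta$, your per-round extra term is $\tau\sqrt{\beta_m}\,\sigma\cdot(\sigma/\Delta)\asymp\sqrt{\beta_m}\,\sigma^2$, in which $\Delta$ cancels entirely. Averaging over $T$ rounds with $\beta_m\asymp 1/T$ then gives $\sigma^2/\sqrt{T}=\sqrt{\sigma^4/T}$, not $\sqrt{\sigma^3/(\Delta T)}$. It is also inconsistent with your own Chebyshev estimate: misclassification mass $\beta_m\sigma^2/\Delta^2$ times an injected bias of size $\lesssim\tau\asymp\Delta$ gives $\beta_m\sigma^2/\Delta$, a third expression.

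Separately, taking the $\sigma^2/(n_0T)$ term from the Lemma~\ref{lemma:autarky}-style analysis and adding it to an FC clustering error is not a valid composition. That analysis is the paper's Theorem~\ref{Thm:convorgent_same_step_none} for fixed-weight SGD. FC is a different algorithm, with momentum and thresholded clipping, and its step size, momentum and threshold are tuned jointly inside its own analysis. So drop step two and quote the correct statement of Theorem 3 of \cite{werner2023provably}. The only modeling choice left is your identification of $\Delta$ with the separation $\|b\|$ of the pure-cluster instance, which the paper leaves implicit.
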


\begin{proof}
We rely on Theorem 3 from ~\cite{werner2023provably}, which provides the convergence guarantee for FC for general non-convex loss functions. The theorem states that:
$$ \frac{1}{T}\sum_{t=1}^T \mathbb{E}\|\nabla f_i(x_{i,t-1})\|^2 \leq \sqrt{\frac{\max(1, A^2)(\sigma^2/n_i + \sigma^3/\Delta + \beta_i \sigma \Delta)}{T}}. $$
We adapt this bound to our specizfic scenario as follows:
\begin{enumerate}
    \item \textbf{Target Cluster Size:} We focus on a client in the target cluster $I_0$, so we set $n_i = n_0$.
    \item \textbf{No Malicious Clients:} We set $\beta_i = 0$, which eliminates the robustness term $\beta_i \sigma \Delta$.
    \item \textbf{Simplification:} We treat constants $A$ as $O(1)$.
\end{enumerate}
Substituting these values into the general theorem yields:
$$ \text{err}_T^{\text{FC}} \lesssim \sqrt{\frac{\sigma^2/n_0 + \sigma^3/\Delta}{T}} = \sqrt{\frac{\sigma^2}{n_0 T} + \frac{\sigma^3}{\Delta T}}. $$
\end{proof}
To establish an idealized baseline, this lemma applies the general bound from Theorem \ref{thm:convergence_estimated}, though the formal proof of that theorem is deferred to Appendix \ref{proof_regret}. Readers may prefer to complete the appendix to understand the full dynamic framework before returning to analyze this specific stationary example.

\subsection{Proof Regarding Cluster Example in Section~\ref{sec_improv_example}}

\begin{lemma}[Convergence in the Presence of an Unbiased Cluster]
\label{lem:cluster_convergence}
Consider the setting of Example \ref{sec_improv_example}, where there exists a cluster of clients (Cluster 0) of size $n_0$ such that for all $i \in \text{Cluster } 0$, the bias is zero ($b_t^{(i)} = 0$). 
If the step size is optimally tuned via clamping, the algorithm achieves the following convergence rate:
\[
\frac{1}{T}\sum_{t=1}^T\mathbb{E} \|\nabla f^{(0)}(w_t) \|^2 = \mathcal{O}\left( \frac{1}{T} + \frac{\log T}{T} + \frac{\sigma}{\sqrt{n_0 T}} \right).
\]
where $\sigma^2$ is the variance of the stochastic gradients.
\end{lemma}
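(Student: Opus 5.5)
We plan to chain three earlier results: Theorem~\ref{Thm:convorgent_same_step_none} applied with the adaptive weights $\alpha_t$, Theorem~\ref{thm:convergence_estimated} to compare against a fixed comparator, and Lemma~\ref{lem:bias_estimation} to control the estimation noise. The step size $\eta$ is tuned at the end.

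\textbf{Step 1 (reduction to a cost sum).} The proof of Theorem~\ref{Thm:convorgent_same_step_none} goes through verbatim for $\mathcal{F}_t$-measurable weights $\alpha_t$. It gives
\[
\frac{1}{T}\sum_{t=1}^T\mathbb{E}\|\nabla f^{(0)}(w_t)\|^2 \leq \frac{2\delta_1}{\eta T} + \frac{1}{T}\sum_{t=1}^T \mathbb{E}[\mathcal{L}_t(\alpha_t)].
\]

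\textbf{Step 2 (oracle comparison).} Theorem~\ref{thm:convergence_estimated} bounds the cost sum by $4\sum_t \mathcal{L}_t(\alpha^*) + 2R_T + 6\sum_t \mathbb{E}\|\xi_t\|^2$, for any fixed $\alpha^*\in\Delta_M$. I will take $\alpha^*$ uniform on the unbiased cluster $I_0$, so $\alpha^{*(i)}=1/n_0$ for $i\in I_0$ and $0$ otherwise. This is also close to the form predicted by Lemma~\ref{lem:optimal_weights} with equal variances and a threshold below $C_1\|b\|^2$. With this choice the bias term vanishes. By Lemma~\ref{lem:second_moment}, $\mathbb{E}\mathcal{L}_t(\alpha^*) \leq \eta L\left(2\mathbb{E}\|\nabla f^{(0)}(w_t)\|^2 + \sigma^2/n_0\right)$.

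\textbf{Step 3 (the remaining terms).} The regret $R_T$ is $O(M\log T)$, with constants depending on $g_{max}$ and $L\eta$, which are $O(1)$ for $\eta L\le 1/8$. Lemma~\ref{lem:bias_estimation} combined with the harmonic sum gives $\sum_t\mathbb{E}\|\xi_t\|^2\le \sigma_{\mathrm{bound}}^2(1+\log T)$.

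\textbf{Step 4 (absorbing the gradient term).} Substituting everything, the term $8\eta L\cdot\frac1T\sum_t\mathbb{E}\|\nabla f^{(0)}(w_t)\|^2$ appears on the right-hand side. I need to move it to the left, and I expect this to be the main obstacle. Lemma~\ref{help} handles a factor $4\eta L$ with $\eta L\le 1/8$. Here the constant is larger, because of the factor $4$ from Theorem~\ref{thm:convergence_estimated} and the factor $2$ from Lemma~\ref{lem:second_moment}. So I would clamp $\eta L\le 1/16$, or any constant making the coefficient at most $1/2$, and then absorb. The result is
\[
\frac1T\sum_t \mathbb{E}\|\nabla f^{(0)}(w_t)\|^2 \lesssim \frac{\delta_1}{\eta T} + \eta L\frac{\sigma^2}{n_0} + \frac{\log T}{T}.
\]

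\textbf{Step 5 (choice of step size).} I choose
\[
\eta=\min\left\{\frac{1}{16L},\ \sqrt{\frac{n_0\delta_1}{L\sigma^2 T}}\right\}.
\]
In the regime where the clamp is active, the first term contributes $O(L\delta_1/T)=O(1/T)$ and the second is dominated by it. Otherwise the balance of the first two terms gives $O\left(\sqrt{L\delta_1}\,\sigma/\sqrt{n_0T}\right)$. Adding the $O(\log T/T)$ contribution from regret and estimation yields the claimed $\mathcal{O}\left(\frac1T+\frac{\log T}{T}+\frac{\sigma}{\sqrt{n_0T}}\right)$. Constants such as $M$, $g_{max}$, $\delta_1$ and $L$ are treated as $O(1)$.

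A minor subtlety is that the comparator bound in Theorem~\ref{thm:convergence_estimated} involves $\mathcal{L}_t(\alpha^*)$ evaluated along the trajectory $w_t$. One must take expectations consistently so that Lemma~\ref{lem:second_moment} applies conditionally on $\mathcal{F}_t$.
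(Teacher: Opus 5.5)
Your proposal is correct and follows essentially the same route as the paper: Theorem~\ref{Thm:convorgent_same_step_none} with adaptive weights, then Theorem~\ref{thm:convergence_estimated} with the comparator uniform on the unbiased cluster, $O(\log T)$ bounds on the regret and tracking terms, absorption of the $\eta L\cdot\frac{1}{T}\sum_t \E\|\nabla f^{(0)}(w_t)\|^2$ term, and a clamped step size. The differences are cosmetic: the paper uses the exact bias--variance identity for the unbiased comparator (coefficient $4\eta L$, clamp $\eta\le 1/(8L)$) where you use the looser Lemma~\ref{lem:second_moment} (coefficient $8\eta L$, clamp $1/(16L)$), and it uses $1/\min(\eta_1,\eta_2)\le 1/\eta_1+1/\eta_2$ where you split into cases.
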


\begin{proof}
In the setting described in Example \ref{sec_improv_example}, using the bound described in the Corollary of Theorem \ref{thm:convergence_estimated} combine withe lemma \ref{thm:loss_bound_recursive}, our algorithm satisfies:
\begin{equation}
    \frac{1}{T}\sum_{t=1}^T\mathbb{E} \|\nabla f^{(0)}(w_t) \|^2 \leq \frac{2}{T}\bigg(\frac{\delta_1}{\eta} + 2 \sum_{t=1}^T \mathcal{L}_t(\alpha^*) + 2R_T + 6\sigma^2_{bound} (1 + \log T)\bigg).
    \label{eq:initial_bound}
\end{equation}

Since $\alpha^*$ minimizes the cumulative loss, the inequality holds if we replace $\alpha^*$ with any other fixed weight selection. Consider the specific candidate weights $\alpha^{(c)}$ defined by placing uniform weights exclusively on all clients in Cluster 0 and zero weights elsewhere. Let $n_0$ be the size of Cluster 0.

By definition, for clients in Cluster 0, the bias is strictly zero ($b_t^{(i)} = 0$). Because this uniform candidate selection creates a perfectly unbiased gradient estimator, its second moment decomposes exactly into the squared norm of the mean and its variance, completely bypassing Young's inequality. The proxy cost simplifies cleanly to:
\[ 
\mathcal{L}_t(\alpha^{(c)}) = \left\|\sum_{i\in[M]}{\alpha^{(c)}}^{(i)} b_t^{(i)}\right\|^2 + \eta L \mathbb{E} \left\|\sum_{i\in[M]}{\alpha^{(c)}}^{(i)} g_t^{(i)}\right\|^2 = \eta L \left( \|\nabla f^{(0)}(w_t)\|^2 + \frac{\sigma^2}{n_0} \right).
\]

Substituting this exact baseline cost back into Equation~\eqref{eq:initial_bound}:
\[ 
\frac{1}{T}\sum_{t=1}^T\mathbb{E} \|\nabla f^{(0)}(w_t) \|^2 \leq \frac{2}{T}\bigg(\frac{\delta_1}{\eta} + 2 \sum_{t=1}^T \eta L \left( \|\nabla f^{(0)}(w_t)\|^2 + \frac{\sigma^2}{n_0} \right) + 2R_T + 6\sigma^2_{bound} (1 + \log T)\bigg).
\]

Rearranging the inequality to isolate the target gradient sum on the right-hand side, we obtain:
\[
    \frac{1}{T}\sum_{t=1}^T\mathbb{E} \|\nabla f^{(0)}(w_t) \|^2 \leq \frac{1}{1 - 4\eta L} \left[ \frac{2}{T}\bigg(\frac{\delta_1}{\eta} + 2 T \eta L \frac{\sigma^2}{n_0} + 2R_T + 6\sigma^2_{bound} (1 + \log T)\bigg) \right].
\]

To safely absorb the gradient penalty on the right-hand side, we must enforce $4\eta L \le \frac{1}{2}$, which establishes a strict physical speed limit of $\eta \le \frac{1}{8L}$.

To minimize the upper bound, we balance the dominant deterministic gap $\frac{\delta_1}{\eta}$ and the stochastic variance $2\eta \frac{L \sigma^2}{n_0}$. To ensure mathematical stability for perfectly noiseless clusters ($\sigma \to 0$) while optimizing empirical performance, we define the learning rate using the step-size clamping technique:
\[
\eta = \min\left( \frac{1}{8L}, \sqrt{\frac{\delta_1 n_0}{2 L \sigma^2 T}} \right).
\]
This guarantees the scaling factor satisfies $\frac{1}{1 - 4\eta L} \le 2$. To evaluate the dominant terms under the clamped learning rate, we utilize the algebraic properties of the minimum function. For any $\eta = \min(\eta_1, \eta_2)$, it strictly holds that $\frac{1}{\eta} \le \frac{1}{\eta_1} + \frac{1}{\eta_2}$ and $\eta \le \eta_2$. 

Let the two candidate rates be the deterministic speed limit $\eta_1 = \frac{1}{8L}$ and the stochastic-dominant rate $\eta_2 = \sqrt{\frac{\delta_1 n_0}{2 L \sigma^2 T}}$. Applying the inverse inequality to the initial distance gap yields:
\[
\frac{\delta_1}{T \eta} \le \frac{\delta_1}{T \eta_1} + \frac{\delta_1}{T \eta_2} = \frac{8L\delta_1}{T} + \frac{\delta_1}{T} \sqrt{\frac{2 L \sigma^2 T}{\delta_1 n_0}} = \frac{8L\delta_1}{T} + \sqrt{\frac{2 \delta_1 L \sigma^2}{n_0 T}} = \mathcal{O}\left(\frac{1}{T}\right) + \mathcal{O}\left(\frac{\sigma}{\sqrt{n_0 T}}\right)
\]

Applying the direct upper bound ($\eta \le \eta_2$) to the stochastic variance term safely caps its maximum penalty:
\[
\frac{2 \eta L \sigma^2}{n_0} \le \frac{2 \eta_2 L \sigma^2}{n_0} = 2 \left( \sqrt{\frac{\delta_1 n_0}{2 L \sigma^2 T}} \right) \frac{L \sigma^2}{n_0} = \sqrt{\frac{2 \delta_1 L \sigma^2}{n_0 T}} = \mathcal{O}\left(\frac{\sigma}{\sqrt{n_0 T}}\right)
\]

Summing these terms bounds the primary optimization penalties:
\[
\frac{\delta_1}{T \eta} + \frac{2 \eta L \sigma^2}{n_0} \le \mathcal{O}\left(\frac{1}{T}\right) + 2 \cdot \mathcal{O}\left(\frac{\sigma}{\sqrt{n_0 T}}\right)
\]

Finally, we account for the residual estimation noise ($R_T$ and the algorithmic tracking terms). By the FTAL guarantee, this regret decays strictly as $\mathcal{O}\left(\frac{\log T}{T}\right)$. 

To explicitly delineate the three distinct sources of convergence error—the deterministic optimization geometry, the active-set tracking regret, and the clustered statistical variance—we present the uncollapsed overall convergence rate. The bound is tightly given by:
\[
\frac{1}{T}\sum_{t=1}^T\mathbb{E} \|\nabla f^{(0)}(w_t) \|^2 = \mathcal{O}\left( \frac{1}{T} + \frac{\log T}{T} + \frac{\sigma}{\sqrt{n_0 T}} \right).
\]
This uncollapsed formulation transparently proves our three core theoretical claims: the base model architecture converges at an accelerated $\mathcal{O}(1/T)$ rate, the dynamic tracking overhead is vanishingly small ($\mathcal{O}(\log T/T)$), and the algorithm organically isolates the target cluster to achieve the optimal $\mathcal{O}(\sigma/\sqrt{n_0 T})$ Oracle variance reduction.
\end{proof}
\section{Details and Proofs for Subsection~\ref{subsec:regret_minimization}}
\subsection{Proof of Exp-Concavity Property}
\label{proof_L_uniq}

\begin{lemma}
\label{L_uniq}
    The function $\mathcal{L}_t(\alpha)$ is $\gamma$-exp-concave with parameter:
    \begin{equation}
        \gamma = \frac{1}{\left(8+2L\eta\right)g_{max}^2 }
    \end{equation}
    and its gradient is bounded by $G_{\mathcal{L}}\leq\left( 8+2L\eta\right)g_{max}^2$.
\end{lemma}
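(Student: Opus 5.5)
Write $\mathcal{L}_t(\alpha)=\|B_t\alpha\|^2+\eta L\|G_t\alpha\|^2$, where $B_t=[b_t^{(0)},\dots,b_t^{(M-1)}]$ and $G_t=[g_t^{(0)},\dots,g_t^{(M-1)}]$ are $d\times M$ matrices. This makes $\mathcal{L}_t(\alpha)=\alpha^\top A_t\alpha$ with $A_t:=B_t^\top B_t+\eta L\,G_t^\top G_t\succeq 0$, a convex quadratic on $\Delta_M$. Its gradient is $\nabla\mathcal{L}_t(\alpha)=2A_t\alpha=2B_t^\top(B_t\alpha)+2\eta L\,G_t^\top(G_t\alpha)$.

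\textbf{Norm bounds.} All norm bounds come from $g_{max}$:
\begin{itemize}
\item Each stochastic gradient has $\|g_t^{(i)}\|\le g_{max}$, so each bias satisfies $\|b_t^{(i)}\|=\|\nabla f^{(i)}-\nabla f^{(0)}\|\le 2g_{max}$.
\item For $\alpha\in\Delta_M$, Jensen/triangle gives $\|B_t\alpha\|\le 2g_{max}$ and $\|G_t\alpha\|\le g_{max}$.
\item For the transposed maps, each coordinate satisfies $|(B_t^\top v)_i|=|\langle b^{(i)},v\rangle|\le 2g_{max}\|v\|$, and similarly for $G_t^\top$.
\end{itemize}

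\textbf{Gradient bound.} Combining these bounds gives $\|\nabla\mathcal{L}_t\|\lesssim 2\cdot 2g_{max}\cdot 2g_{max}+2\eta L g_{max}^2=(8+2L\eta)g_{max}^2$. This matches the claimed $G_{\mathcal{L}}$ provided we measure the gradient in the $\ell_\infty$ norm, or equivalently accept the per-coordinate bound. Otherwise an extra $\sqrt{M}$ appears, since $\|B_t^\top v\|_2\le \|B_t\|_{op}\|v\|\le 2\sqrt{M}g_{max}\|v\|$. I would state which norm is used, and note that the regret bound of \cite{hazan2007logarithmic} only needs $\|\nabla\mathcal{L}_t\|\cdot D$ through the dual pairing with $\alpha-\alpha'$ on the simplex. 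The $\ell_\infty$/$\ell_1$ pairing then gives exactly the needed constant.

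\textbf{Exp-concavity.} For a twice differentiable $h$, the function $e^{-\gamma h}$ is concave if and only if $\nabla^2 h\succeq\gamma\nabla h\nabla h^\top$. Here $\nabla^2\mathcal{L}_t=2A_t$ and $\nabla\mathcal{L}_t=2A_t\alpha$, so we need
\[
2A_t\succeq 4\gamma\,A_t\alpha\alpha^\top A_t .
\]
Since $A_t\succeq0$, write $A_t=A_t^{1/2}A_t^{1/2}$. The condition then reduces to $I\succeq 2\gamma\,A_t^{1/2}\alpha\alpha^\top A_t^{1/2}$ on the range of $A_t$ (on its kernel both sides vanish). This is equivalent to
\[
2\gamma\,\alpha^\top A_t\alpha=2\gamma\,\mathcal{L}_t(\alpha)\le 1 .
\]
Bounding $\mathcal{L}_t(\alpha)\le(2g_{max})^2+\eta L g_{max}^2=(4+\eta L)g_{max}^2$ on the simplex, it suffices that $\gamma\le \frac{1}{2(4+\eta L)g_{max}^2}=\frac{1}{(8+2L\eta)g_{max}^2}$, which is exactly the claimed $\gamma$.

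\textbf{Main obstacle.} The main obstacle is the $\ell_2$ gradient bound without an $M$-dependence. I would first try the $\ell_\infty$ route above. If $\ell_2$ is required, I would instead absorb the factor into the diameter term, taking $D=\sqrt{2}$ for the simplex, and keep the constant as stated.
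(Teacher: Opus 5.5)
Your exp-concavity argument is essentially the paper's. Both reduce $2H_t\succeq 4\gamma H_t\alpha\alpha^\top H_t$ to $2\gamma\,\mathcal{L}_t(\alpha)\le 1$, then bound $\mathcal{L}_t\le (4+L\eta)g_{max}^2$ on $\Delta_M$. The paper does the reduction by Cauchy--Schwarz in the $H_t$-inner product, which gives only sufficiency; your $A_t^{1/2}$ factorization gives an exact equivalence.

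The gradient bound is where you genuinely differ, and your route is the sound one. The paper claims an $\ell_2$ bound via $\|\nabla\mathcal{L}_t(\alpha)\|_2\le 2\lambda_{\max}(H_t)\|\alpha\|_2$. It then asserts that the simplex maximum of $\alpha^\top H_t\alpha$ bounds $\lambda_{\max}(H_t)$, which is false: for $H_t=c\,\mathbf{1}\mathbf{1}^\top$ the simplex maximum is $c$ but $\lambda_{\max}=cM$.

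Concretely, take noiseless gradients with $\nabla f^{(0)}(w_t)=-u$ and $\nabla f^{(i)}(w_t)=u$ for $i\ge 1$, where $\|u\|=g_{max}$. Then every coordinate $i\ge 1$ of $\nabla\mathcal{L}_t(e_1)$ equals $(8+2L\eta)g_{max}^2$, so $\|\nabla\mathcal{L}_t(e_1)\|_2\ge\sqrt{M-1}\,(8+2L\eta)g_{max}^2$. The $\ell_2$ reading of the statement is therefore false, exactly as you suspected. Your per-coordinate bound $|(\nabla\mathcal{L}_t)_i|\le 2\|b_t^{(i)}\|\|B_t\alpha\|+2\eta L\|g_t^{(i)}\|\|G_t\alpha\|\le(8+2L\eta)g_{max}^2$ is the correct version of the lemma.

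Drop your fallback of absorbing the factor into the diameter with $D=\sqrt2$ and keeping the constant as stated. The $\ell_2$ gradient norm can genuinely be $\sqrt{M}$ larger, so there is nothing to absorb.

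The consistent repair is the H\"older pairing you already mention. In the lemma of \cite{hazan2007logarithmic} that fixes $\beta\le\frac12\min\{\frac{1}{4G_{\mathcal{L}}D},\gamma\}$, only $|\nabla\mathcal{L}_t^\top(\alpha-\alpha')|\le\|\nabla\mathcal{L}_t\|_\infty\|\alpha-\alpha'\|_1$ is needed. With the $\ell_1$ diameter $2$ (not $\sqrt2$), the product $G_{\mathcal{L}}D$ becomes $2(8+2L\eta)g_{max}^2$. That is a harmless constant change in $R_T$, but it is not ``exactly the needed constant.''

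Your remark that the regret bound uses the gradient only through this pairing is also slightly too strong. The log-determinant step of the FTAL analysis does use $\ell_2$ gradient norms, though there I believe the extra $\sqrt{M}$ only enters inside the logarithm.
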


\begin{proof}
    To simplify the analysis, we first express the cost function $\mathcal{L}_t(\alpha)$ in a compact quadratic form. Let $B_t$ be the matrix of population biases, and $G_t$ be the matrix of stochastic gradients. Following our formulation utilizing the conditional expected second moment, the loss evaluates to:
    \[
        \mathcal{L}_t(\alpha) = \|\widehat{B}_t \alpha\|^2 + \eta L \mathbb{E}_t\|G_t \alpha\|^2 = \alpha^\top \underbrace{\left( \widehat{B}_t^\top \widehat{B}_t + \eta L \mathbb{E}_t[G_t^\top G_t] \right)}_{\mathbf{H}_t} \alpha.
    \]
    Because $\mathbf{H}_t$ is constructed from Gram matrices and expected outer products, it is positive semi-definite (PSD). Consequently, the function $\mathcal{L}_t(\alpha)$ is convex (though not necessarily strictly convex unless $\mathbf{H}_t \succ 0$).

    \begin{definition}[$\gamma$-Exp-Concavity]
    \label{exe_concax1}
    Let $\mathcal{K} \subseteq \mathbb{R}^d$ be a convex set. A function $g: \mathcal{K} \to \mathbb{R}$ is said to be $\gamma$-exp-concave for some parameter $\gamma > 0$ if the mapping 
    \begin{equation}
        x \mapsto \exp(-\gamma g(x))
    \end{equation}
    is convex over $\mathcal{K}$. Conceptually, exp-concavity bridges the gap between standard convexity and strong convexity by guaranteeing sufficient local curvature relative to the gradient magnitude. Furthermore, if $g$ is twice continuously differentiable, this property is mathematically equivalent to the following Hessian condition for all $x \in \mathcal{K}$:
    \begin{equation}
        \nabla^2 g(x) \succeq \gamma \nabla g(x) \nabla g(x)^\top,
    \end{equation}
    where $\succeq$ denotes the Loewner partial order for positive semi-definite matrices (i.e., $A \succeq B$ means $A - B$ is a positive semi-definite matrix).
    \end{definition}

    \textbf{1. Formulating the Exp-Concavity Condition} \\
    To prove our function $\mathcal{L}_t(\alpha)$ is $\gamma$-exp-concave, we compute its first and second derivatives. By standard matrix calculus for quadratic forms, the gradient is:
    \[
        \nabla \mathcal{L}_t(\alpha) = 2\mathbf{H}_t \alpha
    \]
    and the Hessian is the constant matrix:
    \[
        \nabla^2 \mathcal{L}_t(\alpha) = 2\mathbf{H}_t.
    \]
    Substituting these into the definition, we require:
    \[
        2\mathbf{H}_t \succeq \gamma (2\mathbf{H}_t \alpha)(2\mathbf{H}_t \alpha)^\top = 4\gamma \mathbf{H}_t \alpha \alpha^\top \mathbf{H}_t.
    \]
    
    \textbf{2. Verification via Induced Inner Products} \\
    By the definition of the Loewner order, $A \succeq B$ if and only if $v^\top A v \ge v^\top B v$ for any arbitrary vector $v \in \mathbb{R}^M$. Projecting both sides onto $v$ yields:
    \[
        2 (v^\top \mathbf{H}_t v) \ge 4\gamma (v^\top \mathbf{H}_t \alpha)(\alpha^\top \mathbf{H}_t v).
    \]
    Because $\mathbf{H}_t$ is symmetric and PSD, it defines a valid induced inner product space $\langle x, y \rangle_{\mathbf{H}} = x^\top \mathbf{H}_t y$ with squared norm $\|x\|_{\mathbf{H}}^2 = x^\top \mathbf{H}_t x$. Rewriting our inequality:
    \[
        2 \|v\|_{\mathbf{H}}^2 \ge 4\gamma \langle v, \alpha \rangle_{\mathbf{H}}^2.
    \]
    Applying the Cauchy-Schwarz inequality ($\langle v, \alpha \rangle_{\mathbf{H}}^2 \le \|v\|_{\mathbf{H}}^2 \|\alpha\|_{\mathbf{H}}^2$), it is mathematically sufficient to ensure:
    \[
        2 \|v\|_{\mathbf{H}}^2 \ge 4\gamma \|v\|_{\mathbf{H}}^2 \|\alpha\|_{\mathbf{H}}^2.
    \]
    Dividing by the non-negative scalar $\|v\|_{\mathbf{H}}^2$ eliminates the dependence on $v$:
    \[
        1 \ge 2\gamma \|\alpha\|_{\mathbf{H}}^2 = 2\gamma \mathcal{L}_t(\alpha).
    \]
    Rearranging for $\gamma$, the function satisfies exp-concavity for any parameter bounded by the maximum possible value of the loss: $\gamma \le \frac{1}{2 \max_{\alpha} \mathcal{L}_t(\alpha)}$.

    \textbf{3. Bounding the Maximum Loss and Gradient} \\
    Because $\mathcal{L}_t(\alpha)$ is convex over the probability simplex $\Delta_M$, its maximum occurs at a vertex. To rigorously bound this maximum, we introduce the explicit assumption that the local gradients and true gradients are almost surely bounded such that $\|g_t^{(i)}\| \le g_{max}$ and $\|\nabla f^{(i)}(w_t)\| \le g_{max}$ for all $i$. 
    
    Evaluating the loss at a vertex isolates individual client bounds. Utilizing the implied bias bound $\|b_t^{(i)}\|^2 \le 4g_{max}^2$, we bound the maximal vertex loss:
    \[
        \max_{\alpha \in \Delta_M} \mathcal{L}_t(\alpha) \le \max_i \left( \|b_t^{(i)}\|^2 + L\eta \mathbb{E}_t\|g_t^{(i)}\|^2 \right) \le 4g_{max}^2 + L\eta g_{max}^2 = (4 + L\eta)g_{max}^2.
    \]
    Substituting this maximal loss into our condition gives the final parameter:
    \[
        \gamma = \frac{1}{2(4 + L\eta)g_{max}^2} = \frac{1}{(8 + 2L\eta)g_{max}^2}.
    \]
    Finally, we bound the gradient norm $G_{\mathcal{L}}$. Given $\|\nabla \mathcal{L}_t(\alpha)\|_2 = 2\|\mathbf{H}_t \alpha\|_2 \le 2\lambda_{max}(\mathbf{H}_t)\|\alpha\|_2$. Since the maximum quadratic form over the simplex bounds the largest eigenvalue ($\lambda_{max}(\mathbf{H}_t) \le (4+L\eta)g_{max}^2$) and $\|\alpha\|_2 \le \|\alpha\|_1 = 1$, the gradient norm is strictly bounded by:
    \[
        G_{\mathcal{L}} \le 2(4+L\eta)g_{max}^2 = (8+2L\eta)g_{max}^2.
    \]
    This unifies the constants and concludes the proof.
\end{proof}

\subsection{The FTAL Algorithm}
\label{alg_for_alfa}
\textbf{FTAL}
As discussed in Section \ref{subsec:regret_minimization}, our implementation utilizes a variation of the Follow the Leader (FTL) framework. While standard "Follow the Leader" (FTL) strategies can perform poorly in adversarial settings due to instability, FTAL introduces controlled noise to ensure robustness and minimize regret.

We present the approximate algorithm below \ref{FTAL} for readers who may be unfamiliar with its mechanics.
\begin{algorithm}[H]
\label{FTAL}
    \caption{\textsc{Follow The Approximate Leader} (version 1)} 
    \textbf{Input: }{convex set $\mathcal{P} \subset \mathbb{R}^n$, and the parameter $\beta$.}
    \vspace{0.2em} 
    
    \begin{itemize}
        \setlength\itemsep{0.5em} 
        \item On period 1, play an arbitrary $\mathbf{x}_1 \in \mathcal{P}$.
        \item On period $t$, play the leader $\mathbf{x}_t$ defined as
        \[
            \mathbf{x}_t =\underset{\mathbf{x} \in \mathcal{P}}{\arg\min} \sum_{\tau=1}^{t-1} \tilde{f}_\tau(\mathbf{x})
        \]
    \end{itemize}

    Where for $\tau = 1, \dots, t-1$, define $\nabla_\tau = \nabla f_\tau(\mathbf{x}_\tau)$ and
    \[
        \tilde{f}_\tau(\mathbf{x}) = f_\tau(\mathbf{x}_\tau) + \nabla_\tau^\top (\mathbf{x} - \mathbf{x}_\tau) + \frac{\beta}{2} (\mathbf{x} - \mathbf{x}_\tau)^\top \nabla_\tau \nabla_\tau^\top (\mathbf{x} - \mathbf{x}_\tau)
    \]

\end{algorithm}


\subsection{Proof of Theorem \ref{thm:convergence_estimated}}
\label{proof_regret}

In this section, we provide the complete proof for the convergence of the algorithm under estimated bias. We begin by establishing a bound on the estimation error of the bias tracking mechanism, and then utilize this bound to prove the main convergence theorem.

\textbf{Theorem \ref{thm:convergence_estimated}.}
\textit{Let Assumptions \ref{assm_smoothness} and \ref{assm_bound_var} hold. If the algorithm minimizes the estimated auxiliary loss $\widehat{\mathcal{L}}_t$ with curvature constant $\gamma$, the convergence bound is given by:}
\begin{align}
        \mathbb{E}\left[\sum_{t=1}^T \mathcal{L}_t(\alpha_t)\right] \leq 4 \sum_{t=1}^T \mathcal{L}_t(\alpha^*) + 2R_T + 6 \sum_{t=1}^T \mathbb{E}[\|\xi_t\|^2]
\end{align}

\begin{proof}
 Define 
 $\mathcal{V}_t(\alpha) = L\eta \mathbb{E}[\|\alpha G_t\|^2]$ as the variance term (common to both as it depends on gradient norms).

And let $\xi_t = b_t - \widehat{b}_t$ denote the estimation error. For any vector $\alpha \in \Delta_M$, we relate the true and estimated bias terms using the standard inequality $\|x+y\|^2 \leq 2\|x\|^2 + 2\|y\|^2$.

\textbf{Step 1: Upper Bound on Realized Loss.}
For the chosen action $\alpha_t$, we substitute the true bias with its estimate plus the error ($b_t = \widehat{b}_t + \xi_t$). Applying the aforementioned norm inequality and noting that the variance term is non-negative ($\mathcal{V}_t(\alpha_t) \leq 2\mathcal{V}_t(\alpha_t)$), we obtain:
\begin{align*}
    \mathcal{L}_t(\alpha_t) &= \|\alpha_t (\widehat{b}_t + \xi_t)\|^2 + \mathcal{V}_t(\alpha_t) \nonumber \\
    &\leq 2\|\alpha_t \widehat{b}_t\|^2 + 2\|\alpha_t \xi_t\|^2 + \mathcal{V}_t(\alpha_t) \nonumber \\
    &\leq 2\left( \|\alpha_t \widehat{b}_t\|^2 + \mathcal{V}_t(\alpha_t) \right) + 2\|\alpha_t \xi_t\|^2 \nonumber \\
    &= 2\widehat{\mathcal{L}}_t(\alpha_t) + 2\|\alpha_t \xi_t\|^2. \label{eq:full_upper}
\end{align*}

\textbf{Step 2: Lower Bound on Optimal Loss.}
Similarly, for the optimal comparator $\alpha^*$, we substitute the estimated bias with the true bias minus the error ($\widehat{b}_t = b_t - \xi_t$) and apply the same inequality bounds:
\begin{align*}
    \widehat{\mathcal{L}}_t(\alpha^*) &= \|\alpha^* (b_t - \xi_t)\|^2 + \mathcal{V}_t(\alpha^*) \nonumber \\
    &\leq 2\|\alpha^* b_t\|^2 + 2\|\alpha^* \xi_t\|^2 + \mathcal{V}_t(\alpha^*) \nonumber \\
    &\leq 2\mathcal{L}_t(\alpha^*) + 2\|\alpha^* \xi_t\|^2. 
\end{align*}

\textbf{Step 3: Applying Regret Bounds.}
The algorithm minimizes the sequence $\widehat{\mathcal{L}}_t$. Since $\widehat{\mathcal{L}}_t$ satisfies the exp-concavity property (as established in Section \ref{subsec:regret_minimization}), the FTAL algorithm guarantees a regret bound $R_T$ on this estimated sequence:
\begin{equation*}
    \sum_{t=1}^T \widehat{\mathcal{L}}_t(\alpha_t) \leq \sum_{t=1}^T \widehat{\mathcal{L}}_t(\alpha^*) + R_T,
\end{equation*}
where $R_T = 64 \left( \frac{1}{\gamma} + GD \right) M (1 + \log T)$.

We now chain these inequalities together. Substituting the upper bound from Step 1, applying the FTAL regret bound, and then substituting the lower bound from Step 2, we get:
\begin{align*}
    \sum_{t=1}^T \mathcal{L}_t(\alpha_t) 
    &\leq 2 \sum_{t=1}^T \widehat{\mathcal{L}}_t(\alpha_t) + 2 \sum_{t=1}^T \|\alpha_t \xi_t\|^2 \nonumber \\
    &\leq 2 \left( \sum_{t=1}^T \widehat{\mathcal{L}}_t(\alpha^*) + R_T \right) + 2 \sum_{t=1}^T \|\alpha_t \xi_t\|^2 \nonumber \\
    &\leq 2 \left( \sum_{t=1}^T \left( 2\mathcal{L}_t(\alpha^*) + 2\|\alpha^* \xi_t\|^2 \right) + R_T \right) + 2 \sum_{t=1}^T \|\alpha_t \xi_t\|^2 \nonumber \\
    &= 4 \sum_{t=1}^T \mathcal{L}_t(\alpha^*) + 2R_T + \sum_{t=1}^T \left( 4\|\alpha^* \xi_t\|^2 + 2\|\alpha_t \xi_t\|^2 \right).
\end{align*}

\textbf{Step 4: Expectation and Noise Bound.}
We now take the expectation with respect to the stochasticity of the bias estimation. 
 Since $\alpha_t, \alpha^* \in \Delta_M$, we have $\|\alpha \xi_t\|^2 \leq \|\alpha\|_1^2 \|\xi_t\|_\infty^2$, but more simply via Cauchy-Schwarz on the vector product $\|\alpha \xi_t\| \leq \|\alpha\| \|\xi_t\|$. Since $\alpha$ is in the simplex, $\|\alpha\| \leq 1$. Thus, $\|\alpha \xi_t\|^2 \leq \|\xi_t\|^2$.
obtain:
\begin{align*}
   & \mathbb{E}\left[ \sum_{t=1}^T \left( 4\|\alpha^* \xi_t\|^2 + 2\|\alpha_t \xi_t\|^2 \right) \right] 
    \leq 6 \sum_{t=1}^T \mathbb{E}[\|\xi_t\|^2] \nonumber 
\end{align*}

Substituting this noise term back into the main inequality yields the final bound:
\begin{equation}
    \mathbb{E}\left[\sum_{t=1}^T \mathcal{L}_t(\alpha_t)\right] \leq 4 \sum_{t=1}^T \mathcal{L}_t(\alpha^*) + 2R_T + 6 \sum_{t=1}^T \mathbb{E}[\|\xi_t\|^2]
\end{equation}

To fully parameterize $R_T$, we use the fact that we optimize over the probability simplex $\Delta_M$, which trivially bounds the domain diameter $D \leq \sqrt{2}$. Combining this with Lemma \ref{L_uniq} (which provides the bound on the gradient norm $G_\mathcal{L}$ of the loss function) completes the proof.
\end{proof}

\section{Extension to Partial Participation}
\label{app_partial}
While the primary analysis of SP-CACW is presented under the assumption of full participation, the framework naturally extends to partial participation settings via a standard mathematical reduction. Assume each client $i$ participates at communication round $t$ with a uniform probability $p \in (0, 1]$. Let $z_{i,t} \sim \text{Bernoulli}(p)$ be independent random indicator variables denoting whether client $i$ is selected in round $t$. 

To account for the missing updates from unselected clients, we employ the unbiased gradient estimator:
\begin{equation}
    \tilde{g}_t^{(i)} = \frac{g_t^{(i)}}{p} z_{i,t}
\end{equation}

The server aggregates these estimators using the FTAL collaboration weights $\alpha_t$. The following proposition formally establishes the statistical properties of the aggregated estimator, ensuring the fundamental bias-variance tradeoff optimized by FTAL strictly holds.

\begin{proposition}[Aggregated Partial Participation Estimator]
\label{prop:partial_participation}
Let the aggregated estimator be defined as $\tilde{G}_t(\alpha_t) := \sum_{i=0}^{M-1} \alpha_t^{(i)} \tilde{g}_t^{(i)}$. Assuming the client sampling indicators $z_{i,t}$ are independent, the estimator satisfies the following properties conditioned on the current gradients:

1. \textbf{Unbiased-ness:} $\mathbb{E}_Z[\tilde{G}_t(\alpha_t)] = \sum_{i=0}^{M-1} \alpha_t^{(i)} g_t^{(i)}$
\\2. \textbf{Variance Bound:} 
\(
    \text{Var}_Z(\tilde{G}_t(\alpha_t)) = \left( \frac{1}{p} - 1 \right) \sum_{i=0}^{M-1} (\alpha_t^{(i)})^2 \|g_t^{(i)}\|^2
\)
where $\mathbb{E}_Z$ and $\text{Var}_Z$ denote the expectation and variance over the sampling indicators $z_{i,t}$.
\end{proposition}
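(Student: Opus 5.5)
The plan is to condition on the realized gradients $\{g_t^{(i)}\}_{i}$ and on the weights $\alpha_t$. Since $\alpha_t$ is $\mathcal{F}_t$-measurable, this conditioning makes them deterministic, so the only randomness left is in the sampling indicators $z_{i,t}$. Both claims then reduce to elementary moment computations for independent Bernoulli variables, which we first do for a single client and then combine using independence.

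\textbf{Unbiasedness.} This follows from linearity of expectation. For each $i$ we have $\mathbb{E}_Z[z_{i,t}] = p$, hence $\mathbb{E}_Z[\tilde{g}_t^{(i)}] = \frac{g_t^{(i)}}{p}\, p = g_t^{(i)}$. Multiplying by the fixed weights $\alpha_t^{(i)}$ and summing over $i$ gives $\mathbb{E}_Z[\tilde{G}_t(\alpha_t)] = \sum_i \alpha_t^{(i)} g_t^{(i)}$. Independence is not needed for this part.

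\textbf{Variance.} Here variance is understood as $\mathbb{E}_Z\|\tilde{G}_t - \mathbb{E}_Z\tilde{G}_t\|^2$. We write
\[
\tilde{G}_t - \mathbb{E}_Z\tilde{G}_t = \sum_{i=0}^{M-1} \alpha_t^{(i)} \frac{z_{i,t} - p}{p}\, g_t^{(i)}.
\]
Expanding the squared norm produces diagonal terms and cross terms $\alpha_t^{(i)}\alpha_t^{(j)} \langle g_t^{(i)}, g_t^{(j)}\rangle\, \mathbb{E}_Z[(z_{i,t}-p)(z_{j,t}-p)]/p^2$ for $i \neq j$. Because the $z_{i,t}$ are independent and the centered variables have zero mean, every cross term vanishes. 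For the diagonal terms, $\mathbb{E}_Z[(z_{i,t}-p)^2] = p(1-p)$, so each contributes $(\alpha_t^{(i)})^2 \|g_t^{(i)}\|^2\, \frac{p(1-p)}{p^2}$. Since $\frac{p(1-p)}{p^2} = \frac{1}{p} - 1$, summing over $i$ gives exactly the stated identity.

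The argument is mostly routine. The one place that needs care is the conditioning: the claimed equality, rather than merely an upper bound, holds only if $\alpha_t$ and the $g_t^{(i)}$ are treated as fixed with respect to $Z$. Under the paper's $\mathcal{F}_t$-measurability convention this is justified, because $\alpha_t$ is computed before the sampling in round $t$ and the gradients are conditioned upon.
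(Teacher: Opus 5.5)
Your proof is correct and follows the paper's argument: unbiasedness comes from linearity together with $\mathbb{E}_Z[z_{i,t}]=p$, and the variance identity comes from independence of the $z_{i,t}$ combined with $\mathrm{Var}(z_{i,t})=p(1-p)$. Your explicit check that the cross terms $\mathbb{E}_Z[(z_{i,t}-p)(z_{j,t}-p)]$ vanish is the unpacked form of the paper's appeal to additivity of variance for independent summands, and it in fact only needs pairwise uncorrelatedness.
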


\begin{proof}
\textbf{1. Unbiased-ness:} \\
By linearity of expectation and the definition of the Bernoulli variable ($\mathbb{E}_Z[z_{i,t}] = p$):
\begin{equation*}
    \mathbb{E}_Z[\tilde{G}_t(\alpha_t)] = \sum_{i=0}^{M-1} \alpha_t^{(i)} \mathbb{E}_Z\left[ \frac{g_t^{(i)}}{p} z_{i,t} \right] = \sum_{i=0}^{M-1} \alpha_t^{(i)} g_t^{(i)}
\end{equation*}

\textbf{2. Variance Bound:} \\
By definition, the variance of the sum of independent random variables is the sum of their variances. Since the client sampling indicators are independent across the network, we have:
\begin{equation*}
    \text{Var}_Z(\tilde{G}_t(\alpha_t)) = \sum_{i=0}^{M-1} (\alpha_t^{(i)})^2 \text{Var}_Z\left( \frac{g_t^{(i)}}{p} z_{i,t} \right)
\end{equation*}
For a Bernoulli random variable $z_{i,t}$ with parameter $p$, the variance is $\text{Var}_Z(z_{i,t}) = p(1 - p)$. Pulling out the deterministic constants yields:
\begin{align*}
    \text{Var}_Z(\tilde{G}_t(\alpha_t)) &= \sum_{i=0}^{M-1} (\alpha_t^{(i)})^2 \frac{\|g_t^{(i)}\|^2}{p^2} p(1 - p) \\
    &= \left( \frac{1}{p} - 1 \right) \sum_{i=0}^{M-1} (\alpha_t^{(i)})^2 \|g_t^{(i)}\|^2
\end{align*}
\end{proof}

This rigorous formulation demonstrates that SP-CACW seamlessly extends to partial participation. The algorithm continues to track the true target gradient perfectly, with the only penalty being a global variance inflation factor $\left(\frac{1}{p}-1\right)$. Notably, this noise inflation is aggressively suppressed because it scales quadratically with the FTAL weights $(\alpha_t^{(i)})^2$, proving that our proxy-loss framework fundamentally mitigates sampling variance alongside structural heterogeneity.
\section{Practical Algorithmic Details and  Refinements}
\label{subsec:alg_refinements}
While our theoretical analysis assumes access to population statistics, practical deployment requires refinements to handle stochastic noise, non-stationary dynamics, and numerical stability. We introduce a key modifications to the core algorithm to bridge this gap.

\textbf{1. Cubic Regularization.}
To prevent the algorithm from prematurely collapsing into a trivial solution (e.g., placing probability $1$ on the target client or a single peer), we introduce a regularization term. We select a cubic penalty $\sum (\alpha^{(i)})^3$ rather than the standard quadratic ($L_2$) penalty. Cubic regularization provides a flatter gradient near zero, allowing small but useful contributions to persist, while penalizing large dominant weights more aggressively than $L_2$. This promotes a more robust and distributed collaboration.

With these modifications, the practical objective function minimized at round $T$ is given by the empirical sum:
\begin{align}
\label{eq:refined_objective}
&\mathcal{L}_{\text{prac}}(\alpha) 
= \sum_{t=T_{\text{start}}}^{T} \underbrace{\left\| \sum_{i=0}^{M-1} \alpha^{(i)} \hat{b}_{t}^{(i)} \right\|^2}_{\text{Estimated Bias}} 
+ \sum_{t=T_{\text{start}}}^{T} L \eta_{t} \underbrace{\left\| \sum_{i=0}^{M-1} \alpha^{(i)} g_{t}^{(i)} \right\|^2}_{\text{Empirical Variance Proxy}} + \lambda_{\text{reg}} \sum_{i=0}^{M-1} (\alpha^{(i)})^3.
\end{align}
\subsection{Implementation Details}
\label{subsec:practical_implementation}

For the experimental evaluation, we implemented the model optimization through the following technical procedures:

\begin{enumerate}
    \item \textbf{Lipchitz Constant Estimation:} 
    The smoothness constant $L$ is estimated dynamically using local curvature information. To maintain computational efficiency, we approximate the spectral norm of the Hessian via Power Iteration, computing Hessian-vector products using the method of ~\cite{pearlmutter1994fastHessian}. This avoids materializing the full Hessian matrix while providing a lightweight approximation of $L$ at each iteration.

    \item \textbf{Dynamic Regularization:} 
    The regularization coefficient $\lambda_{\text{reg}}$ is adaptive, scaling with the signal strength of the target client. We set $\lambda_{\text{reg}} = \gamma \cdot \|var(g_{t}^{(0)})\|^2$ (where $\gamma=1$), ensuring the penalty naturally decays as the model approaches convergence and the gradient magnitude diminishes.

    \item \textbf{Numerical Solver (PGD):} 
    To efficiently find the optimal weights $\alpha$ at each step, we employ \textbf{Projected Gradient Descent (PGD)}. This allows us to optimize the objective while strictly enforcing the simplex constraint $\sum \alpha^{(i)} = 1, \alpha^{(i)} \ge 0$ via Euclidean projection.

    \item \textbf{Lazy Weight Updates (Epoch-Based):} 
        To improve stability and reduce computational cost, we update the aggregation weights $\alpha_t$ only once every $K$ iterations. In our experiments, we set $K$ to correspond to one full pass over the local data (one epoch). This strategy allows the bias estimates to accumulate sufficient statistics from the entire data distribution before shifting the collaboration topology, thereby preventing the weights from overfitting to specific transient batches.
\end{enumerate}

\subsection{Comprehensive Complexity 
Analysis}
The comprehensive and communication cost can be seen in this table:
\label{compution_cost}
\vspace{1em}
\begin{table}[h]
\centering
\begin{tabular}{lll}
\hline
\textbf{Component} & \textbf{Complexity} & \textbf{Note} \\
\hline
\textbf{Up link} & $O(d)$ & Identical to Standard FL \\
\textbf{Down link} & $O(M \cdot d)$ & Reception of peer updates \\
\textbf{Aggregation \& Bias} & $O(M \cdot d)$ & Linear vector operations and EMA tracking \\
\textbf{Weight Optimization} & $O(I \cdot M \cdot d)$ & Efficient optimization of collaboration weights via PGD \\
\hline
\end{tabular}
\end{table}
\vspace{1em}

To formally establish this, we analyze the per-round overhead of Algorithm \ref{alg_cacw_sfl}. Let $d$ denote the model dimension, $M$ the number of participating clients, and $I$ the number of local projection iterations required for the optimization solver. The total algorithmic cost per round ($C_{total}$) is explicitly defined as the sum of the network communication cost ($C_{comm}$) and the local computational complexity ($C_{comp}$).

\textbf{1. Communication Cost ($C_{comm}$):} \\
The only network exchange occurs when the server broadcasts the model and receives the local stochastic gradients $g_t^{(i)}$ from the active peers (Line 3). 
\begin{align*}
    C_{comm} &= \underbrace{\mathcal{O}(d)}_{\text{Unlink Broadcast}} + \underbrace{\mathcal{O}(M \cdot d)}_{\text{Down link Reception (Line 3)}} \nonumber \\
    &= \mathcal{O}(M \cdot d)
\end{align*}

\textbf{2. Computational Cost ($C_{comp}$):} \\
Once the local gradients are received, all subsequent personalization and tracking mechanics are executed entirely in-memory via linear vector operations. The computation breaks down into the bias EMA tracking (Line 4), the weighted gradient aggregation (Line 5), the model update (Line 6), and the FTAL weight optimization via projected gradient descent (Line 7).
\begin{align*}
    C_{comp} &= \underbrace{\mathcal{O}(M \cdot d)}_{\text{Bias \& Aggregation (Lines 4-5)}} + \underbrace{\mathcal{O}(d)}_{\text{Model Update (Line 6)}} + \underbrace{\mathcal{O}(I \cdot M \cdot d)}_{\text{FTAL Weight Opt. (Line 7)}} \nonumber \\
    &= \mathcal{O}(I \cdot M \cdot d)
\end{align*}

\textbf{Total Algorithmic Overhead:} \\
Summing the explicitly derived communication and computational requirements yields the total worst-case complexity per communication round:
\begin{align*}
    C_{total} &= C_{comm} + C_{comp} \nonumber \\
    &= \mathcal{O}(M \cdot d) + \mathcal{O}(I \cdot M \cdot d) \nonumber \\
    &= \mathcal{O}(I \cdot M \cdot d)
\end{align*}

\textbf{Epoch-Level Amortization:} 
To ensure that the $\mathcal{O}(I \cdot M \cdot d)$ overhead remains practically scalable, we deploy an epoch-level amortization strategy. While the FTAL weight solver (Line 7) represents the dominant computational burden, these collaboration weights only need to be updated once per global communication epoch. By amortizing this local calculation across all intermediate local backpropagation steps, the effective computational overhead of SP-CACW approaches the theoretical baseline of standard Federated Learning, without requiring a single additional byte of network bandwidth.

\section{Network Architectures and Hyperparameter}

\label{subsec:network_arch}
\subsection{MNIST Experimental Setup}
We employ a compact Multi-Layer Perceptron (MLP). The architecture accepts flattened inputs ($784$ dimensions), containing a single hidden layer with $64$ units (ReLU activation) and a final output layer for the $10$ classes.

\textbf{Hyperparameter.} 
\begin{itemize}
    \item Learning Rate: $\eta = 0.01$, Momentum: $\mu = 0.9$
    \item Weight Decay: $\lambda = 5 \times 10^{-4}$,max norm of gradient=1
    \item batch size: 32
\end{itemize}

\subsection{CIFAR-100 Experimental Setup}
For CIFAR-100, we utilize a modified VGG16 architecture~~\cite{simonyan2014very} adapted specifically for lower-resolution $32 \times 32$ inputs. Key modifications to the standard VGG design include the aggressive integration of \textbf{Batch Normalization} and 2D spatial dropout ($p=0.3$) immediately following every Convolution layer to stabilize training and prevent overfitting. Additionally, the traditional massive fully connected layers are replaced with a streamlined classifier featuring a single 512-dimensional hidden layer, regularized with 1D Batch Normalization and dropout ($p=0.5$).
\begin{table}[h]
    \centering
    \caption{\textbf{VGG16-CIFAR100 Network Architecture.} The feature extractor uses sequential $3 \times 3$ convolutions. All Convolution layers are followed by 2D Batch Norm, an in-place ReLU, and 2D spatial dropout ($p=0.3$). The classifier uses 1D Batch Norm and dropout ($p=0.5$).}
    \label{tab:vgg16_cifar_arch}
    \vspace{2mm}
    \begin{small}
    \begin{tabular}{lccccc}
    \toprule
    \textbf{Layer Type} & \textbf{In} & \textbf{Out} & \textbf{Kernel} & \textbf{Stride} & \textbf{Padding} \\
    \midrule
    Conv2d & 3 & 64 & $3 \times 3$ & 1 & 1 \\
    Conv2d & 64 & 64 & $3 \times 3$ & 1 & 1 \\
    MaxPool2d & 64 & 64 & $2 \times 2$ & 2 & 0 \\
    \midrule
    Conv2d & 64 & 128 & $3 \times 3$ & 1 & 1 \\
    Conv2d & 128 & 128 & $3 \times 3$ & 1 & 1 \\
    MaxPool2d & 128 & 128 & $2 \times 2$ & 2 & 0 \\
    \midrule
    Conv2d & 128 & 256 & $3 \times 3$ & 1 & 1 \\
    Conv2d & 256 & 256 & $3 \times 3$ & 1 & 1 \\
    Conv2d & 256 & 256 & $3 \times 3$ & 1 & 1 \\
    MaxPool2d & 256 & 256 & $2 \times 2$ & 2 & 0 \\
    \midrule
    Conv2d & 256 & 512 & $3 \times 3$ & 1 & 1 \\
    Conv2d & 512 & 512 & $3 \times 3$ & 1 & 1 \\
    Conv2d & 512 & 512 & $3 \times 3$ & 1 & 1 \\
    MaxPool2d & 512 & 512 & $2 \times 2$ & 2 & 0 \\
    \midrule
    Conv2d & 512 & 512 & $3 \times 3$ & 1 & 1 \\
    Conv2d & 512 & 512 & $3 \times 3$ & 1 & 1 \\
    Conv2d & 512 & 512 & $3 \times 3$ & 1 & 1 \\
    MaxPool2d & 512 & 512 & $2 \times 2$ & 2 & 0 \\
    \midrule
    Flatten & 512 & 512 & - & - & - \\
    Linear (Classifier) & 512 & 512 & - & - & - \\
    Linear (Output) & 512 & 100 & - & - & - \\
    \bottomrule
    \end{tabular}
    \end{small}
\end{table}
\[\]
\textbf{Hyperparameter.} Optimization is performed using SGD:
\begin{itemize}
    \item Learning Rate: $\eta = 0.1$, Momentum: $\mu = 0.9$
    \item Weight Decay: $\lambda = 5 \times 10^{-4}$,max norm of gradient=5
    \item batch size: 32
\end{itemize}

\subsection{Shakespeare (LEAF) Experimental Setup}
For the Shakespeare dataset, we utilize a stacked Character-Level Long Short-Term Memory (CharLSTM) network for next-character prediction. The architecture maps discrete input character tokens into a dense continuous space using an embedding layer. This is followed by a 2-layer LSTM feature extractor configured with a hidden state dimension of 256 to capture long-term sequential dependencies. Finally, the unrolled sequence outputs are passed through a fully connected linear layer to predict the probability distribution over the entire vocabulary space. The recurrent hidden and cell states are zero-initialized at the start of each batch.

\begin{table}[h]
    \centering
    \caption{\textbf{CharLSTM Network Architecture.} The network processes variable-length sequences. The vocabulary size is denoted by $|V|$. The LSTM layers output the full sequence (\texttt{batch\_first=True}), which is then linearly mapped to the vocabulary size.}
    \label{tab:charlstm_shakespeare_arch}
    \vspace{2mm}
    \begin{small}
    \begin{tabular}{lccc}
    \toprule
    \textbf{Layer Type} & \textbf{In Dimension} & \textbf{Out Dimension} & \textbf{Details} \\
    \midrule
    Embedding & $|V|$ & 256 & - \\
    LSTM & 256 & 256 & 2 Layers \\
    Linear (Classifier) & 256 & $|V|$ & - \\
    \bottomrule
    \end{tabular}
    \end{small}
\end{table}
\[\]
\textbf{Hyperparameter.} Optimization is performed using SGD:
\begin{itemize}
    \item Sequence Length: $50$ batch size: 32
    \item Learning Rate: $\eta = 0.01$, Momentum: $\mu = 0.9$
    \item Weight Decay: $\lambda = 1 \times 10^{-4}$,max norm of gradient=1
    
\end{itemize}
\subsection{Compute resources} 
\label{running_needs}
Due to varying computational demands across our datasets, the experiments were conducted using two distinct hardware setups. The MNIST experiments were executed locally on a machine equipped with an Intel Core i7-1165G7 CPU (2.80GHz) and 16GB of RAM, requiring approximately 6 hours of compute time per experimental run. Conversely, due to the increased complexity of the image and text data, the CIFAR-100 and LEAF Shakespeare experiments were executed on a dedicated workstation equipped with an NVIDIA GeForce RTX 4090 GPU (24GB VRAM). Each of these larger experimental runs required approximately 14 hours of compute time to complete.

\section{Data Partitioning Algorithms}
\label{sec:appendix_partitioning}

In this section, we detail the data partitioning procedures employed in Section~\ref{sec:Experiments} to simulate realistic non-IID distributions. We utilize a noise parameter $\epsilon$ to control the degree of stochastic heterogeneity, ensuring that even clients with similar distributions possess varying data volumes and class imbalances.

\subsection{CIFAR-100 Data Partitioning (Latent Clusters)}
\label{subsec:cifar_partition}
This algorithm is used for the CIFAR-100 experiments (Section 6.3) to simulate a \textit{Label Skew} scenario with latent community structure. Clients are grouped into latent clusters, where each cluster is responsible for a distinct subset of the 100 classes.

\textbf{Mechanism:} The initialization assigns each of the $N$ clients to one of $C$ clusters, and the 100 classes are divided disjointedly among these clusters to form the "primary signal." Crucially, to simulate realistic imperfections, we add Gaussian noise $\mathcal{N}(0, \sqrt{\epsilon})$ to the probability mass of \textbf{every} class for every client, regardless of cluster membership.

This noise addition serves two purposes:
\begin{enumerate}
    \item \textbf{Intra-Cluster Heterogeneity:} It varies the distributions of the primary classes, ensuring clients in the same cluster are not identical.
    \item \textbf{Inter-Cluster Leakage:} It introduces small, non-zero probabilities for classes \textit{outside} the client's assigned cluster. This breaks the strict disjoint assumption, creating a "background noise" level that tests the algorithm's robustness to irrelevant data.
\end{enumerate}

\begin{algorithm}[H]
\caption{CIFAR-100 Partitioning (Latent Clusters)}
\label{alg:cifar_partition}
\begin{algorithmic}[1]
\REQUIRE $N$ (Clients), $C$ (Clusters), $M$ (Clients per cluster), $\epsilon$ (Heterogeneity noise)
\ENSURE Allocation matrix $\mathbf{X} \in \mathbb{R}^{N \times 100}$
\STATE \textbf{Configuration:}
\STATE \quad $P \leftarrow 100 / C$ \COMMENT{Number of unique classes per cluster}
\STATE \quad $B \leftarrow 1 / M$ \COMMENT{Base probability mass}
\STATE \textbf{Initialize:} $\mathbf{X} \leftarrow \mathbf{0}_{N \times 100}$
\STATE \textbf{Generate Noise:} $\mathbf{N} \sim \mathcal{N}(0, \sqrt{\epsilon})$

\FOR{each client $i \in \{0, \dots, N-1\}$}
    \STATE $k \leftarrow \lfloor i / M \rfloor$ \COMMENT{Determine latent cluster index for client $i$}
    \FOR{each class $j \in \{0, \dots, 99\}$}
        \STATE \textbf{Primary Signal:}
        \IF{$k \cdot P \leq j < (k+1) \cdot P$}
            \STATE $X_{i,j} \leftarrow B$ \COMMENT{Assign base mass if class is in cluster}
        \ENDIF
        \STATE \textbf{Global Noise Injection:} 
        \STATE $X_{i,j} \leftarrow X_{i,j} + N_{i,j}$ \COMMENT{Adds noise to ALL classes (in and out of cluster)}
        \STATE \textbf{Rectify:} $X_{i,j} \leftarrow \max(0, X_{i,j})$
    \ENDFOR
\ENDFOR

\STATE \textbf{Normalize:} Perform column-wise normalization on $\mathbf{X}$ to ensure valid probability distribution.
\RETURN $\mathbf{X}$
\end{algorithmic}
\end{algorithm}

\subsection{Detailed Performance Metrics and Variance Analysis}
\label{sec:var_performance}

In all experiments, we report the maximum average performance across $N$ independent runs initialized with different random seeds. To capture the variance across these runs and ensure statistical reliability, we report 1-sigma error bars representing the sample standard deviation. 

Specifically, for a given method, let $x_i$ be the performance metric of the $i$-th run at the epoch that achieved the highest mean validation performance. The mean $\mu$ and sample standard deviation $\sigma$ are calculated as:

\begin{equation*}
    \mu = \frac{1}{N} \sum_{i=1}^{N} x_i
\end{equation*}

\begin{equation*}
    \sigma = \sqrt{\frac{1}{N-1} \sum_{i=1}^{N} (x_i - \mu)^2}
\end{equation*}
\newpage
\section{Full Plots of the Empirical Study}
\label{sec_graphs}
\subsection{CIFAR-100 Experiments}

To evaluate tracking performance and convergence behavior, we conducted experiments on the CIFAR-100 image classification task under varying degrees of data heterogeneity, controlled by $\sigma$. Figures in this section track the test accuracy over 100 epochs for $\sigma=10$ and $\sigma=0.01$, respectively. Across both heterogeneity settings, SP-CACW and FedDisco consistently achieve top-tier final test accuracy ($\sim 52-54\%$). Notably, SP-CACW maintains a highly smooth and robust convergence trajectory, completely avoiding the early-epoch volatility exhibited by algorithms like ORCAEL. In contrast, methods such as Ditto and the localized ``Only My Data'' baseline plateau significantly earlier, failing to surpass $30\%$ accuracy under severe heterogeneity ($\sigma=0.01$).

\begin{figure}[h]
    \centering
    \includegraphics[width=0.9\linewidth]{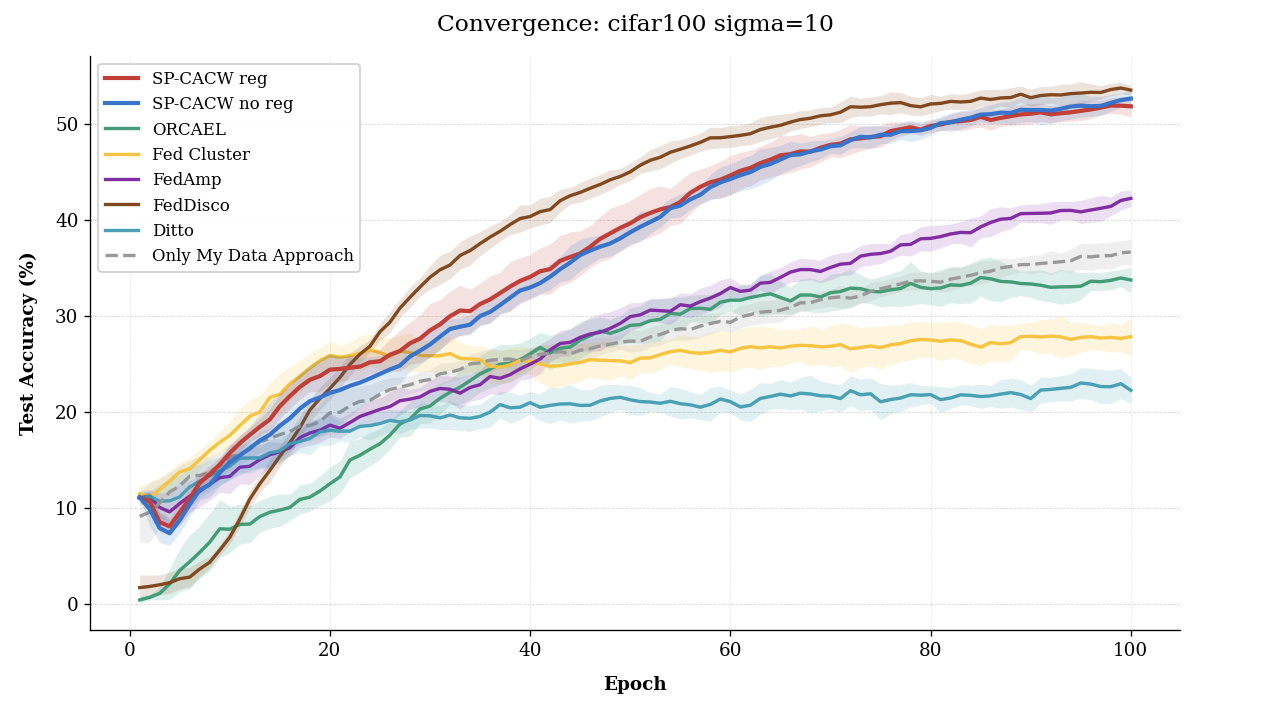}
    \label{fig:cifar_10}
\end{figure}

\begin{figure}[h]
    \centering
    \includegraphics[width=0.9\linewidth]{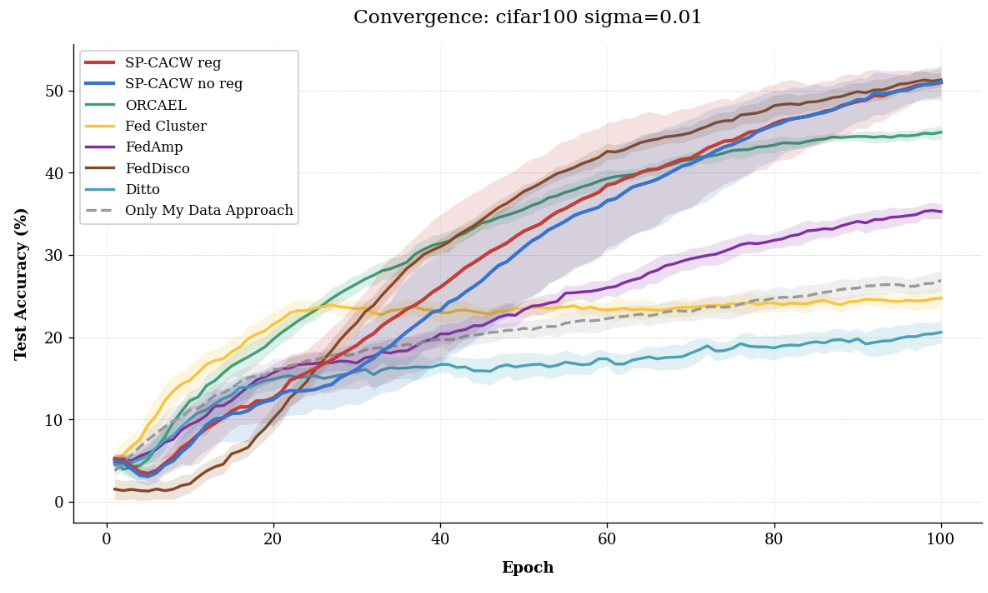}
    \label{fig:cifar_001}
\end{figure}

\FloatBarrier
\vspace{1em}

\newpage
\subsection{LEAF Shakespeare Experiments}

This subsection details the empirical results evaluated on the LEAF Shakespeare language modeling task. To accurately reflect the strict heterogeneity of realistic federated networks, the data distribution here is partitioned naturally by individual character speaking roles rather than relying on synthetic client clusters. Figures in this section track the metric value over 100 epochs across two distinct participant sampling configurations ($n=15, m=7$ and $m=30, n=3$). SP-CACW (both with and without regularization) achieves the highest final metric values ($\sim 54\%$ and $\sim 51\%$, respectively), matching FedDisco. Crucially, baseline methods such as FedAmp and Ditto suffer from significant performance gaps, while the ``Only My Data'' approach visibly degrades after epoch 40 due to severe local overfitting on the character-specific dialog.

\begin{figure}[htbp]
    \centering
    \includegraphics[width=0.9\linewidth]{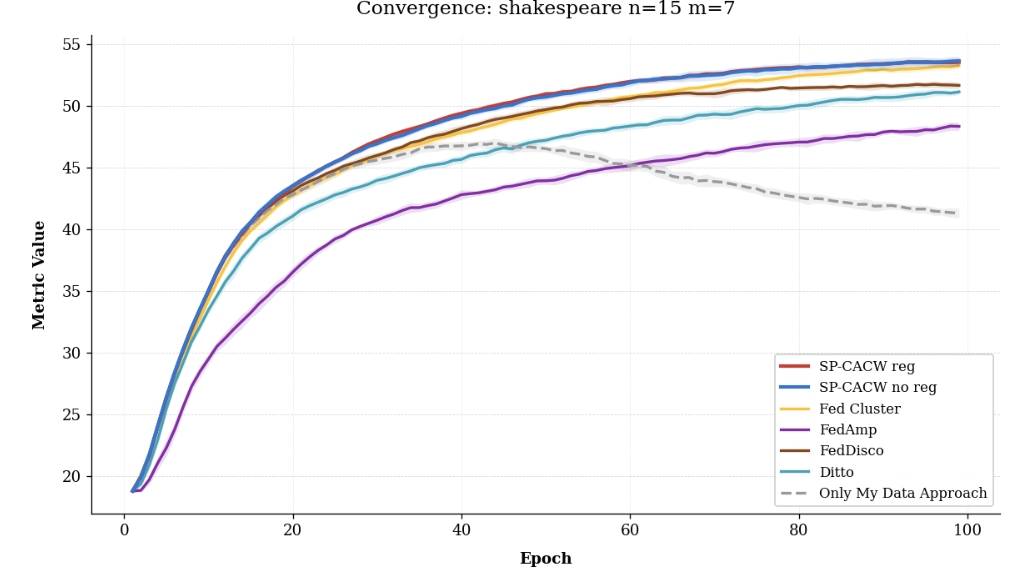}
  
    \label{fig:shak_15_7}
\end{figure}

\begin{figure}[htbp]
    \centering
    \includegraphics[width=0.9\linewidth]{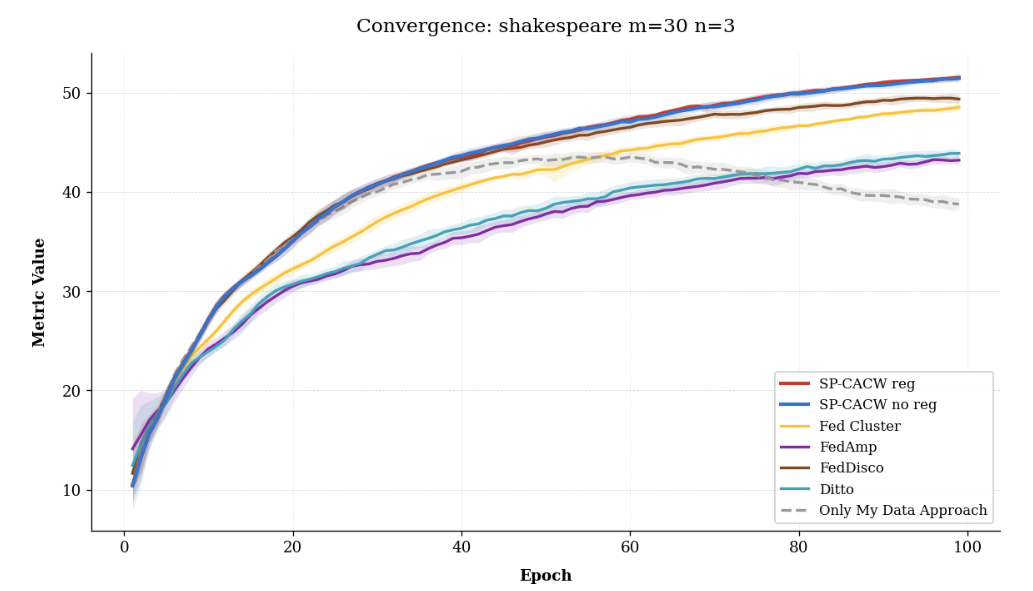}
    \label{fig:shek_30_3}
\end{figure}

\FloatBarrier
\newpage
\newpage
\subsection{MNIST Experiments}

The final set of plots showcases the algorithmic performance on the MNIST handwritten digit dataset, encompassing structural challenges such as feature rotation and label distribution skews. Figures in this sectio illustrate the test accuracy over 50 epochs for the rotation and relabeling scenarios, respectively. A critical observation in these experiments is the catastrophic failure of FedDisco; while highly competitive on standard feature skew, it completely collapses under extreme structural heterogeneity, flatlining below $55\%$ on rotation and near $30\%$ on relabeling. In stark contrast, SP-CACW demonstrates exceptional robustness, rapidly converging to the optimal $\sim 90-95\%$ accuracy band alongside ORCAEL and FedCluster. Furthermore, on the rotation task, the unregularized variant (``SP-CACW no reg'') pulls slightly ahead, suggesting that relaxing the regularizer in environments with highly orthogonal local gradients allows the active set to more aggressively discard irrelevant clients without sacrificing overall stability.

\begin{figure}[htbp]
    \centering
    \includegraphics[width=1\linewidth]{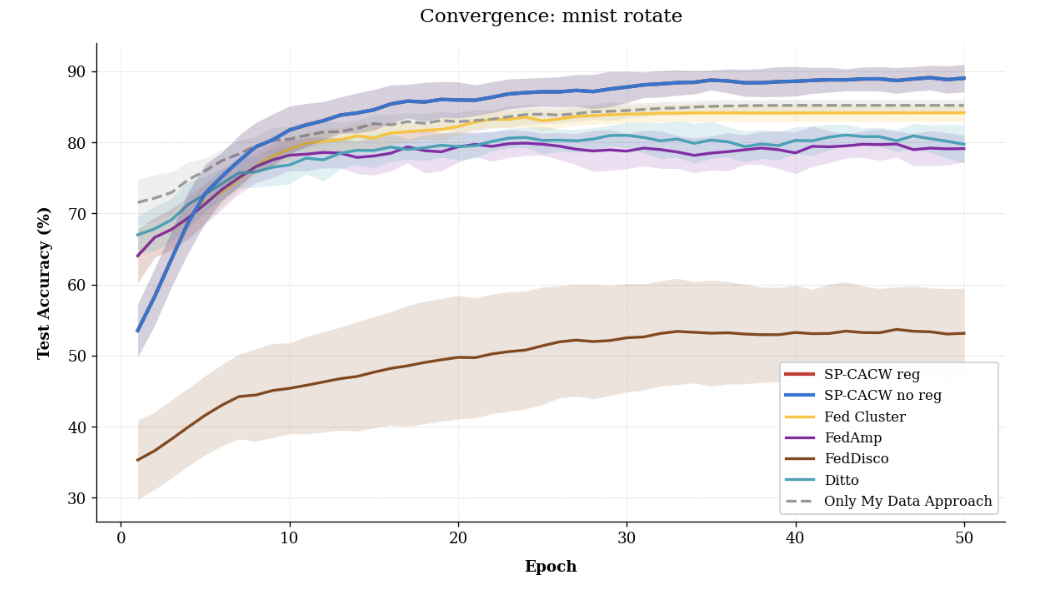}
   
    \label{fig:mnist_rot}
\end{figure}

\begin{figure}[htbp]
    \centering
    \includegraphics[width=1\linewidth]{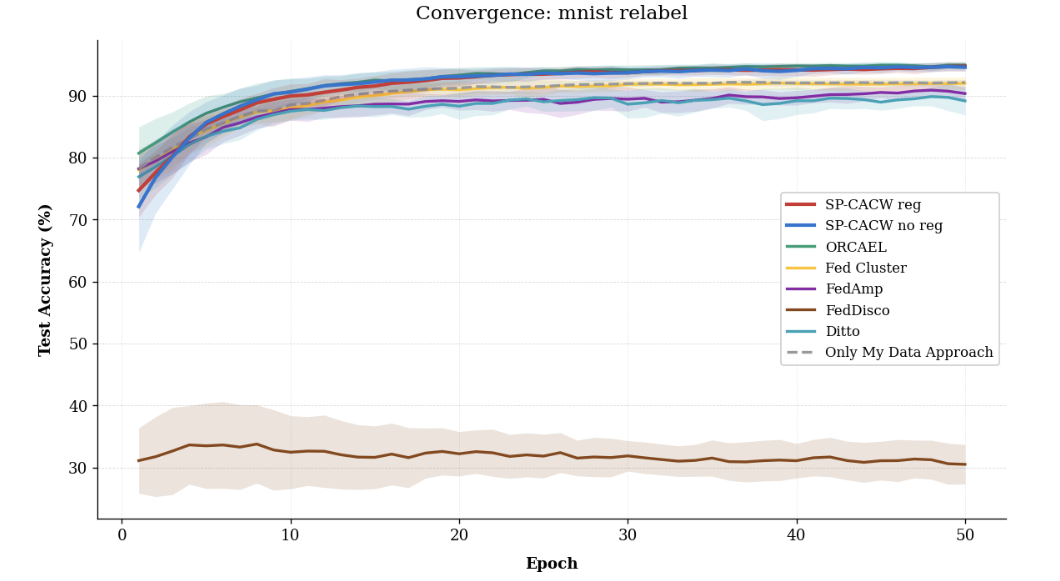}
   
    \label{fig:mnist_rel}
\end{figure}

\FloatBarrier
\newpage
\end{document}